\newcolumntype{M}[1]{>{\centering\arraybackslash}m{#1}}
\newcolumntype{L}[1]{>{\raggedright\arraybackslash}m{#1}}
\newtheorem{theorem}{Theorem}[section]
\newtheorem{proposition}[theorem]{Proposition}
\newtheorem{lemma}[theorem]{Lemma}
\theoremstyle{definition}
\newtheorem{assumption}[theorem]{Assumption}
\theoremstyle{remark}
\newtheorem{remark}[theorem]{Remark}
\def\1{\bm{1}}
\def\btheta{{\bm{\theta}}}
\def\bzeta{{\bm{\zeta}}}
\def\bpsi{{\bm{\psi}}}
\newcommand{\BCal}{\mathscr{B}}
\newcommand{\DCal}{\mathscr{D}}
\newcommand{\LCal}{\mathscr{L}}
\newcommand{\SCal}{\mathscr{S}}
\newcommand{\AC}{\mathcal{A}}
\newcommand{\FC}{\mathcal{F}}
\newcommand{\MC}{\mathcal{M}}
\newcommand{\TC}{\mathcal{T}}
\newcommand{\UC}{\mathcal{U}}
\newcommand{\VC}{\mathcal{V}}
\def\re{{\textnormal{e}}}
\def\rve{{\mathbf{e}}}
\def\rvx{{\mathbf{x}}}
\def\rvz{{\mathbf{z}}}
\def\rmX{{\mathbf{X}}}
\DeclareMathAlphabet{\mathsfit}{\encodingdefault}{\sfdefault}{m}{sl}
\SetMathAlphabet{\mathsfit}{bold}{\encodingdefault}{\sfdefault}{bx}{n}
\newcommand{\R}{\mathbb{R}}
\DeclareMathOperator*{\argmax}{arg\,max}
\DeclareMathOperator*{\argmin}{arg\,min}
\newcommand{\hla}[1]{\colorbox{green!20!white}{#1}}
\newcommand{\hlr}[1]{\colorbox{red!20!white}{#1}}
\newcommand{\hlrr}[1]{\colorbox{red!5!white}{#1}}
\newcommand{\hlg}[1]{\colorbox{gray!20!white}{#1}}
\newcommand{\baseline}{\texttt{{Baseline}}}
\newcommand{\slm}{\texttt{{SLM}}}
\newcommand{\rkd}{\texttt{RKD}}
\newcommand{\salt}{\texttt{SALT}}
\newcommand{\saltds}{$\texttt{SALT}_{\texttt{DS}}$}
\def\RCE{R}
\def\RCEN{R_{N}}
\def\thetaest{\hat{\btheta}}
\def\RCENW{R^{\omega}_{N}}
\def\RCEW{R^{\omega}}
\newcommand{\ourtitle}{A Little Help Goes a Long Way: Efficient LLM Training by Leveraging Small LMs}
\title{\ourtitle}
\author{Ankit Singh Rawat\ $^1$ \quad Veeranjaneyulu Sadhanala\ $^1$ \quad Afshin Rostamizadeh\ $^{1}$ \\
Ayan Chakrabarti\ $^{1}$ \quad Wittawat Jitkrittum\ $^{1}$ \quad Vladimir Feinberg\ $^{2}$ \quad Seungyeon Kim\ $^{2}$ \\
Hrayr Harutyunyan\ $^{1}$ \quad Nikunj Saunshi\ $^{1}$ \quad Zachary Nado\ $^{2}$ \quad Rakesh Shivanna\ $^{2}$ \\
Sashank J. Reddi\ $^{1}$ \quad Aditya Krishna Menon\ $^{1}$ \quad Rohan Anil\ $^{2}$ \quad Sanjiv Kumar\ $^{1}$
}
\affil{$^1$ Google Research \quad $^2$ Google DeepMind
}
\date{}
\begin{document}

\maketitle

\begin{abstract}
A primary challenge in large language model (LLM) development is their onerous pre-training cost. Typically, such pre-training involves optimizing a self-supervised objective (such as next-token prediction) over a large corpus. This paper explores a promising paradigm to improve LLM pre-training efficiency \emph{and} quality by suitably leveraging a \emph{small} language model (SLM). In particular, this paradigm relies on an SLM to both (1) provide soft labels as additional training supervision, and (2) select a small subset of valuable (``informative'' and ``hard'') training examples. Put together, this enables an effective transfer of the SLM's predictive distribution to the LLM, while prioritizing specific regions of the training data distribution. Empirically, this leads to reduced LLM training time compared to standard training, while improving the overall quality. Theoretically, we develop a statistical framework to systematically study the utility of SLMs in enabling efficient training of high-quality LLMs. In particular, our framework characterizes how the SLM's seemingly low-quality supervision can enhance the training of a much more capable LLM. Furthermore, it also highlights the need for an \emph{adaptive} utilization of such supervision, by striking a balance between the bias and variance introduced by the SLM-provided soft labels. We corroborate our theoretical framework by improving the pre-training of an LLM with 2.8B parameters by utilizing a smaller LM with 1.5B parameters on the Pile dataset.

\end{abstract}

\section{Introduction}
\label{sec:intro}

Owing to the recent surge in their ever-growing capabilities, large language models (LLMs)~\citep{chowdhery2022_palm, touvron2023_llama, gpt4_techreport, anil2023palm, jiang2023mistral, team2023gemini, anthropic2024claude}, have become the focal point of machine learning research.
Several research efforts focus on either further enhancing LLM performance, 
or utilizing LLMs in novel applications 
ranging from conversational agents/assistants~\citep{thoppilan2022lamda} to novel material design~\citep{rubungo2023llm}.
Highly capable general-purpose LLMs rely on two critical ingredients:
choosing a model architecture with a very large number of parameters~\citep{chowdhery2022_palm, smith2022using},
and \emph{pre-training} this model on a corpus with a very large number of examples~\citep{llama3modelcard, together2023redpajama}.
Due to the large size of model and corpus, the computational cost of pre-training can be highly onerous.
Thus, sustainable advancement and widespread adoption of LLMs hinges on designing novel architectures
and algorithms that can reduce the overall training (particularly, pre-training)
compute cost and improve the data efficiency for LLM development.

This paper focuses on leveraging \emph{small language models} (\emph{SLMs}) for efficient LLM pre-training. Interestingly, a growing literature~\citep[see, e.g.,][]{gupta2024language, chen2023frugalgpt,Yue:2024} shows that, despite their  
limited model capacity, SLMs can acquire a good domain understanding of pre-training data distribution. 
Particularly, SLMs can perform well on a large portion of ``\textit{easy}'' instances, while still providing valuable information towards identifying the remaining ``\textit{hard}'' instances, e.g., via the confidence, margin, or similar measures based on their predictive distribution. This prompts us to explore the following question:
\begin{center}
\vspace{-2mm}
\textit{Can we speed up pre-training of a high-quality \emph{large} LM by transferring the predictive distribution resulting from pre-training of a lower-quality \emph{small} LM?}
\vspace{-2mm}
\end{center}
Note that suitable SLMs are often readily available during LLM development as previous-generation models trained on similar pre-training corpora or smaller models trained for initial exploration around architectural and algorithmic choices on the current pre-training corpora itself. Furthermore, if proven, the potential of SLMs to enhance LLM quality and efficiency, coupled with their relatively cheaper development cost, strongly justifies training such models even to solely aid LLM training.

\textit{Knowledge distillation}~\citep[KD;][]{Bucilua:2006,Hinton:2015}
is a natural candidate to achieve our underlying objective by utilizing the SLM as \textit{teacher} model to transfer its predictive distribution to \textit{student} LLM during pre-training. However, it is unclear if KD can be helpful in realizing our goal as unlike a typical KD setup -- wherein a larger or stronger teacher is used to train a smaller or weaker student -- we are hoping to leverage a smaller and \emph{weaker teacher} LM to improve the pre-training efficiency and quality of a larger and \emph{stronger student} LM.

We begin by developing a statistical framework to study KD in the context of language modeling. We obtain novel risk bounds that delineate the desirable properties of the teacher LM-provided supervision that can enhance the student LM's performance, even when one employs a perceivably weaker SLM as the teacher. To the best of our knowledge, ours are the first such bounds for language modeling. Notably, our bounds subsume standard pre-training as a special case, and control
LM generalization 
as we scale model capacity and
the amount of pre-training data, with latter being measured in terms of either number of training sequences or number of training tokens. We believe that these bounds are of independent interest to broader community working on LLMs.

\begin{figure}[t]
  \centering
  \includegraphics[width=0.7\textwidth]{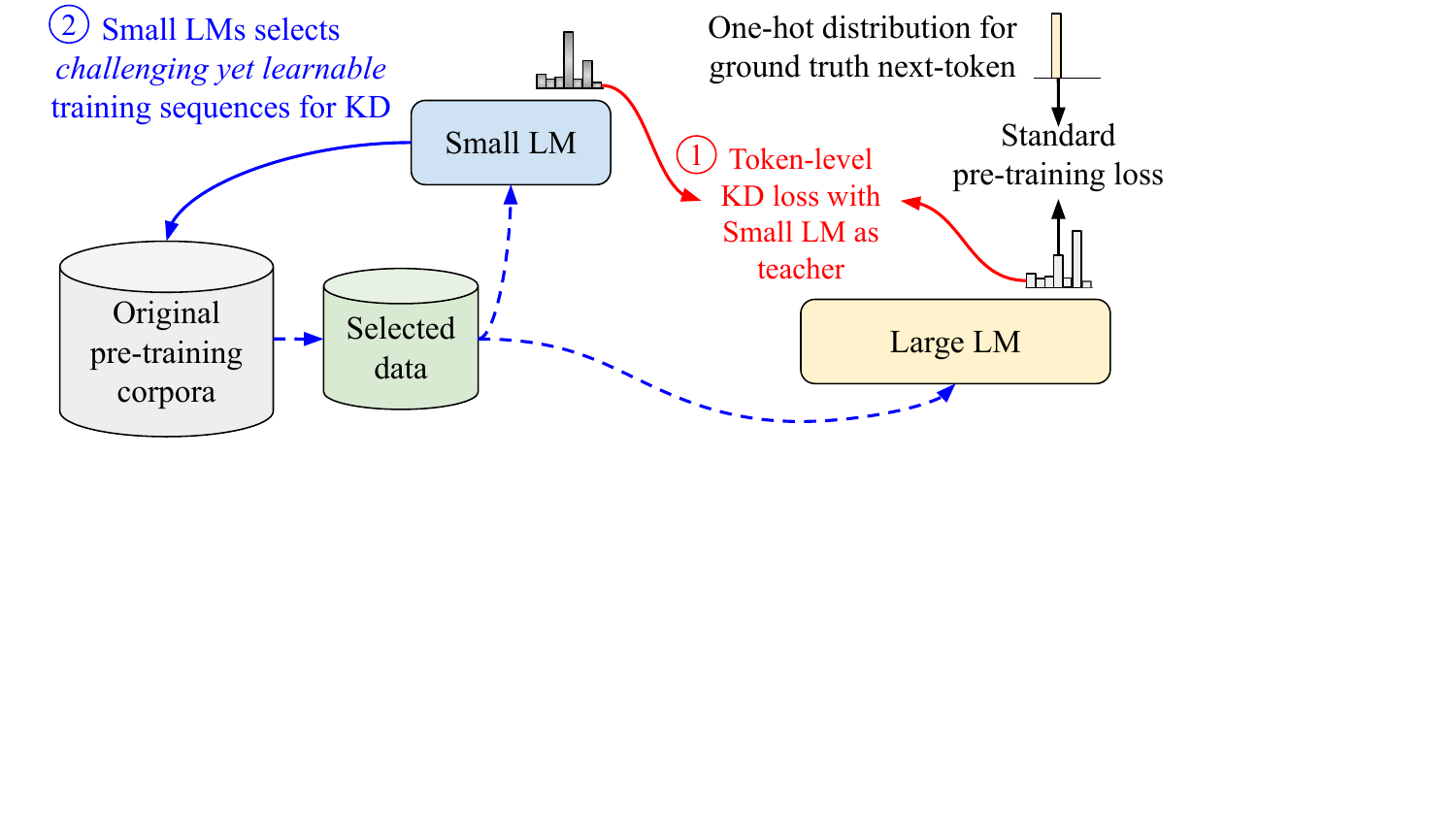}
  \caption{
  An overview of \textit{\underline{s}mall model \underline{a}ided \underline{l}arge model \underline{t}raining} (\salt{}) pre-training.
  \salt{} utilizes an SLM in two 
  ways to improve the pre-training of LLM: {\color{red}\raisebox{0pt}{\textcircled{\raisebox{-0.9pt} {1}}}} To perform KD with SLM as teacher in the early phase of LLM pre-training; and {\color{blue}\raisebox{0pt}{\textcircled{\raisebox{-0.9pt} {2}}}} To obtain a valuable subset of pre-training corpora to be utilized during the KD.
  }
  \label{fig:salt-overview}
\end{figure}

Our statistical analysis lays the foundation for an adaptive pre-training method that leverages SLM via KD only in the so-called ``easy'' regions where SLM can approximate the ground truth next-token distribution well. Combining this with the tendency of neural network to learn easier examples first~\citep{Kalimeris:2019,refinetti23a}, we propose \textit{\underline{s}mall model \underline{a}ided \underline{l}arge model \underline{t}raining} (\salt{}) -- a two-stage pre-training approach that employs KD from an SLM in the early phase of the LLM pre-training and resorts to standard self-supervision-based training for the rest of the pre-training.
We then expand the \salt{} method by employing SLM to additionally perform \textit{data selection} for the KD phase.
Our selection procedure focuses on identifying \textit{challenging} yet \textit{learnable} sequences from the easy region of the data distribution
to ensure an effective transfer of SLM's predictive distribution during KD 
(see Fig.~\ref{fig:salt-overview} for an overview). Our 
empirical study, focusing on both few-shot and post-supervised fine-tuning (SFT) performance, validates the utility of \salt{} for improving both quality and training efficiency of LLMs compared to standard pre-training. Our key contributions are summarized as follows:

\begin{enumerate}[leftmargin=8mm, itemsep=1mm, partopsep=0pt,parsep=0pt,topsep=0pt,label=(\roman*)]
\item We present a statistical framework for KD in the context of language modeling, which provides novel risk bounds describing how even a perceivably weaker teacher LM can improve the quality of a larger student LM (cf.~Sec.~\ref{sec:theory}). 
\item Guided by our 
framework, we propose a two-stage pre-training method,
namely \salt{}, that uses SLMs as teacher to perform KD during the first stage corresponding to early phase of LLM pre-training.
We extend \salt{} by utilizing SLMs to perform data selection for the KD phase, facilitating an effective transfer of predictive distribution from SLM to LLM (cf.~Sec.~\ref{sec:method}).

\item We showcase the utility of \salt{} (with and without data selection) by training a 2.8B parameter LM with the help of 1.5B parameter LM on the Pile dataset~\citep{gao2020thepile}. The 2.8B LM trained with \salt{} outperforms a 2.8B LM trained via standard pre-training method on a wide range of popular few-shot benchmarks while utilizing less than $0.7$X training step budget, resulting in $\sim 28\%$ wall-clock time reduction.
Moreover, \salt{} models consistently demonstrate significant downstream performance gains after SFT on multiple domains 
(cf.~Sec.~\ref{sec:exp}).
\end{enumerate}

\section{Background}\label{sec:bg}

\textbf{Language modeling.} Given a large corpus, language modeling aims to train a model that can assign probabilities or likelihood to each sequence $\mathbf{x} \in \VC^{\star}$, where $\VC$ denotes the underlying vocabulary with $V = |\VC|$ tokens. Assuming that the language model (LM) is parameterized by $\btheta$, 
it assigns the following probability to a $T$-token long input sequence $\rvx = [x_1, x_2,\ldots, x_T]$:
\begin{align*}
    P_{\btheta}(\rvx) = P_{\btheta}(x_1)P_{\btheta}(x_2|x_1)\cdots P_{\btheta}(x_T|x_1,\ldots, x_{T-1}).
\end{align*}

Transformers~\citep{vaswani2017attention} are the most prominent architecture supporting LMs~\citep{gpt4_techreport,touvron2023_llama,team2023gemini}, which we briefly discuss in Appendix~\ref{appen:lm-transformers}.

\textbf{Standard LM pre-training.} Typically, LM pre-training involves the \emph{next-token prediction} task:
given a training sequence $\rvx = [x_1, x_2,\ldots, x_T]$, 
for each $t \in [T]$, one maximizes the log-likelihood $\log P_{\btheta}(x_t | \rvx_{\leq t -1})$.
This amounts to \textit{minimizing} the \textit{cross-entropy} loss between the per-token LM prediction distribution $P_{\btheta}(\cdot | \rvx_{\leq t -1})$ and the one-hot distribution\footnote{For $v \in \VC$, we define $\mathds{1}_{x}(v) = 1$ if $v = x$ and $\mathds{1}_{x}(v) = 0$ if $v \neq x$.} $\mathds{1}_{x_{t}}(\cdot)$ 
defined by the \textit{ground truth} next-token $x_{t}$.
Thus, the overall loss associated with 
$\rvx$ becomes 
\begin{align}
\label{eq:std_loss_ce}
    \ell(\rvx; \theta) =  {1}/{T}\cdot\sum\nolimits_{t\in[T]} - \log P_{\btheta}(x_t | \rvx_{\leq t -1}) = {1}/{T}\cdot\sum\nolimits_{t\in[T]} \mathsf{CE}\big(\mathds{1}_{x_{t}}(\cdot), P_{\btheta}(\cdot | \rvx_{\leq t -1})\big),     
\end{align}
where 
$\mathsf{CE}(P_1, P_2)$=$-\sum_{v \in \VC}P_1(v)\log P_2(v)$
is the cross-entropy between
distributions $P_1$ and $P_2$.

\textbf{Knowledge distillation for LM.} Going beyond the ground truth next-token based
loss in \eqref{eq:std_loss_ce}, one can utilize the per-token prediction distribution provided by another LM, say the one parameterized by $\bzeta$, as additional
supervision. Formally, given the context $\rvx_{\leq t -1}$, one can train the LM parameterized by $\btheta$ via aligning its prediction distribution $P_{\btheta}(\cdot | \rvx_{\leq t -1})$ with $P_{\bzeta}(\cdot | \rvx_{\leq t -1})$.
KL divergence is a common choice to promote such an alignment,
which amounts to minimizing the following cross-entropy loss:
\begin{align}
\label{eq:tkd_loss_ce}
    \ell(\mathbf{x}; \bzeta \to \btheta) &=  {1}/{T}\cdot\sum\nolimits_{t\in[T]} \mathsf{CE}\big( P_{\bzeta}(\cdot | \rvx_{\leq t -1}),  P_{\btheta}(\cdot | \rvx_{\leq t -1})\big).
\end{align}

Training based on the loss in \eqref{eq:tkd_loss_ce}
is referred to as the {\em token-level knowledge distillation}~\citep[KD;][]{kim2016sequence} in the literature, with LMs parameterized by $\bzeta$ and $\btheta$ 
termed as the teacher and student LMs, respectively. See~Appendix~\ref{append:kd-variants} for a discussion on other related KD for LM variants.
\textit{Temperature scaling} of teacher is a common strategy~\citep{zheng2024knowledge} where, given a temperature $\rho > 0$, one utilizes
$
P_{\bzeta, \rho}(\cdot | \rvx_{\leq t -1}) = {P_{\bzeta}(\cdot| \rvx_{\leq t -1})^{\rho}}/{\sum_{v' \in \VC}P_{\bzeta}(v' | \rvx_{\leq t -1})^{\rho}}
$
during KD, resulting in the loss:
\begin{align}
\label{eq:tkd_loss_ce_temp}
    \ell(\mathbf{x}; \bzeta^{\rho} \to \btheta) =  {1}/{T}\cdot\sum\nolimits_{t\in[T]} \mathsf{CE}\big( P_{\bzeta, \rho}(\cdot | \rvx_{\leq t -1}),  P_{\btheta}(\cdot | \rvx_{\leq t -1})\big).
\end{align}
In practice, one typically utilizes both ground truth next-token as well as teacher's next-token distribution and, for a \textit{distillation loss weight} $\omega \in [0,1]$, minimizes
the following as the loss for
$\rvx$:  
\begin{align}
\label{eq:tkd_final}
\ell^{\omega}(\rvx; \theta) \triangleq
(1 - \omega)\cdot\ell(\rvx; \btheta) + \omega \cdot \ell(\mathbf{x}; \bzeta^{\rho} \to \btheta).
\end{align}
Note that, for brevity, our notation $\ell^{\omega}(\rvx; \theta)$ omits the dependence on $\zeta$ and $\rho$.

\section{Theoretical analysis: When can KD help language modeling?}
\label{sec:theory}

As alluded in the introduction, we hope to leverage KD with an SLM as the teacher to speed-up the pre-training of a high quality LLM. However, due to SLM's relatively limited capacity and inferior quality, it is not immediately clear that such a teacher can benefit the LLM. Motivated by this, we now develop a rigorous statistical framework for KD in the context of language modeling by building 
on the works of ~\citet{menon21statistical,Dao:2021,Ren:2022}.
Novel risk bounds originating from this framework highlight how a teacher LM -- even a perceivably weaker one -- can benefit student LLM by striking the right balance in terms of a bias-variance trade-off. 

Notably, our analysis allows one to control the generalization gap for the student LM in terms of both number of training sequences $N$ 
as well as number of total tokens $NT$, with the latter being highly non-trivial due to possibly arbitrary dependence within a training sequence. In this work, we crucially leverage certain natural stability conditions on the underlying distribution and function class to obtain such
bounds in terms of $NT$. Next, we setup some necessary notation and then present our risk bounds as functions of $N$ and $NT$ in Sections~\ref{sec:erb-seq-level} and \ref{sec:erb-token-level}, respectively. Sec.~\ref{sec:bias-var} utilizes our bounds to justify the utility of SLMs for improving LLM model quality via KD.

Let $\DCal$ be the
data distribution that generates 
$N$ independent training sequences $\SCal_{N} = \{\rvx^{(i)}\}_{i \in [N]} \subset \VC^{T}$, i.e., $\rvx^{(i)} \sim \DCal,~\forall~i \in [N]$.\footnote{{
Our analysis can be extended to \textit{varying} length sequences at the cost of increased notational complexity.}} 
Given 
$\SCal_N$ and CE surrogate loss
(cf.~\eqref{eq:std_loss_ce}),
the \textit{empirical surrogate risk}, i.e., standard training objective, and its population version
for an LM parameterized by $\btheta$ are:
\begin{equation}
\label{eq:er-pr}
\RCEN(\btheta) 
= {1}/{N}\cdot\sum\nolimits_{\rvx \in \SCal_N}\ell(\rvx; \btheta) 
\quad \text{and} \quad \RCE(\btheta) = \mathbb{E}_{\rvx \sim \DCal}\big[\ell(\rvx; \btheta)\big].
\end{equation}
On the other hand, the empirical surrogate risk for KD, i.e., the KD training objective, and its population version take the following form (note that we omit dependence on $\bzeta$ and $\rho$):
\begin{align}
\label{eq:erm_distill}
\RCENW(\btheta)
=  {1}/{N}\cdot\sum\nolimits_{\rvx \in \SCal_N}\ell^\omega(\mathbf{x}; \btheta) \quad \text{and} \quad
\RCEW(\btheta)
= \mathbb{E}_{\rvx \sim \DCal}\big[\ell^\omega(\mathbf{x};\btheta)\big]. 
\end{align}

\subsection{Excess surrogate risk bound for LM in terms of number of sequences}
\label{sec:erb-seq-level}
Given a potentially \textit{infinite} function class $\Theta$ for student LMs, 
let $\thetaest$ and $\btheta^{*}$ be the minimizers of KD training objective $\eqref{eq:erm_distill}$ and the population
risk in \eqref{eq:er-pr}, respectively:  
\begin{align}
\label{eq:kd-erm-theta}
\thetaest 
:= \argmin\nolimits_{\btheta \in \Theta} \RCENW(\btheta) \quad \text{and} \quad \btheta^{*} = \argmin\nolimits_{\btheta \in \Theta} \RCE(\btheta).
\end{align}
We want to compare the test performance (population risk) of $\thetaest$ with that of $\btheta^{*}$ -- the optimal LM in $\Theta$.
Before stating our excess risk bound, we introduce an assumption that we rely on in our analysis.
\begin{assumption}
\label{assm:loss_bound}
The per-token log-loss with at most $T$-token long sequences for the function class $\Theta$ is bounded by $M$, i.e.,
$\sup_{\btheta \in \Theta; \rvx \in \VC^{\leq T-1}} \max_{v \in \VC}\vert \log P_{\btheta}(v| \rvx) \vert \leq M.$
\end{assumption}

\begin{remark}
Assuming loss to be bounded is a standard practice in the statistical learning literature which can be realized, e.g., by clipping the CE loss by a sufficiently large constant $M$. Recently, \citet{lotfi24-seq-bound}
realized such an assumption by adding a small amount of uniform noise to LM predictions in their study on generalization for LMs trained via standard pre-training \textit{without} KD.
\end{remark}

We are now ready to present our excess risk bound.
Due to the page limit
we provide an \textit{informal} statement of our result below; see Appendix~\ref{appen:esr} for the complete statement and its proof.

\begin{theorem}[Informal]
\label{thm:esr}
Let $\thetaest$ and $\btheta^{*}$ be as defined in \eqref{eq:kd-erm-theta}. 
Define $f^\btheta: \VC^T \to [0, M]$ by $f^\btheta(\rvx) = \ell^\omega(\mathbf{x}; \btheta), ~\forall \rvx \in \VC^T, \btheta \in \Theta.$
Then, under Assumption~\ref{assm:loss_bound}, 
with probability at least $1 - \delta$,
we have
\begin{align*}
&\RCE(\thetaest ) - \RCE(\btheta^{*})
 \leq
  \frac{c_1}{\sqrt{N}}\cdot \Big(\sqrt{V_N(f^{\thetaest})\log\left({2\MC(N)}/{\delta}\right)} + \sqrt{V_N(f^{{\btheta}^{*}})\log\left({4}/{\delta}\right)}\Big) \nonumber \\
 &\qquad + (4M \omega)/T \cdot \sum\nolimits_{t\in[T]} \mathbb{E}\left[\mathsf{D}_{\rm TV}\Big(P_{\bzeta,\rho}(\cdot|\rvx_{\leq t-1}), \DCal(\cdot|\rvx_{\leq t-1})\Big)\right] \; + {c_2 M}/{(N-1)}\cdot\log(\MC(N)/\delta),
\end{align*}
where $\mathsf{D}_{\rm TV}$ is TV distance,
$
    V_N(f^{\btheta}) = \frac{1}{N(N-1)}\sum_{1 \leq i < j \leq N} \big(f^{\btheta}(\rvx^{(i)}) - f^{\btheta}(\rvx^{(j)})\big)^2 
$
is sample variance,
$\MC(N)$
depends on the
growth function of $
\big\{ f^\btheta : \btheta \in \Theta\big\}$,
and $c_1$ \& $c_2$ are universal constants.
\end{theorem}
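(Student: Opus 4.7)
The plan is to decompose the excess risk along the standard ``bias plus generalization gap'' lines, but with care taken so that the generalization gap is measured with respect to the KD population risk $\RCEW$ (since $\thetaest$ is an empirical minimizer of $\RCENW$, not of $\RCEN$), and so that the bias terms separate out as a mismatch between the teacher-anchored risk $\RCEW$ and the ground-truth risk $\RCE$. First I would write
\begin{align*}
\RCE(\thetaest) - \RCE(\btheta^{*})
= \underbrace{\bigl[\RCE(\thetaest) - \RCEW(\thetaest)\bigr]}_{\text{bias at }\thetaest}
+ \underbrace{\bigl[\RCEW(\thetaest) - \RCEW(\btheta^{*})\bigr]}_{\text{KD excess risk}}
+ \underbrace{\bigl[\RCEW(\btheta^{*}) - \RCE(\btheta^{*})\bigr]}_{\text{bias at }\btheta^{*}}.
\end{align*}
The KD excess risk term is in turn bounded by $[\RCEW(\thetaest) - \RCENW(\thetaest)] + [\RCENW(\btheta^{*}) - \RCEW(\btheta^{*})]$ using the fact that $\RCENW(\thetaest) \leq \RCENW(\btheta^{*})$ by the definition of $\thetaest$.

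For the two KD generalization terms, I would apply a uniform empirical-Bernstein style inequality to the class $\{f^\btheta : \btheta \in \Theta\}$ of $[0,M]$-bounded functions guaranteed by Assumption~\ref{assm:loss_bound}. Concretely, the first term requires a one-sided uniform deviation bound that is controlled by the \emph{empirical} variance $V_N(f^{\thetaest})$ together with the log of the growth function $\MC(N)$ of the loss class, giving a leading contribution of order $\sqrt{V_N(f^{\thetaest})\log(\MC(N)/\delta)/N}$ plus a lower-order $M\log(\MC(N)/\delta)/(N-1)$ term. The second term concerns the \emph{fixed} predictor $\btheta^{*}$, so a single-function empirical Bernstein inequality suffices and yields the $\sqrt{V_N(f^{\btheta^{*}})\log(1/\delta)/N}$ contribution with a cleaner logarithmic factor. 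This is where I would lean on the Bernstein-type uniform convergence machinery used in the Dao--Kumar--R\'e and Menon et al.\ statistical-KD papers cited in the setup; adapting it to the sequence-level loss $f^\btheta(\rvx) = \ell^\omega(\rvx;\btheta)$ is direct because each $f^\btheta$ is a bounded function of a single i.i.d.\ draw $\rvx\sim\DCal$.

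For the two bias terms, I would exploit the linearity of cross-entropy in its first argument. At any $\btheta$, and any context $\rvx_{\leq t-1}$,
\begin{align*}
\mathbb{E}\bigl[\mathsf{CE}(\mathds{1}_{x_t}, P_\btheta(\cdot|\rvx_{\leq t-1})) - \mathsf{CE}(P_{\bzeta,\rho}(\cdot|\rvx_{\leq t-1}), P_\btheta(\cdot|\rvx_{\leq t-1}))\,\big|\,\rvx_{\leq t-1}\bigr]
= \sum_{v\in\VC} \bigl[\DCal(v|\rvx_{\leq t-1}) - P_{\bzeta,\rho}(v|\rvx_{\leq t-1})\bigr]\bigl(-\log P_\btheta(v|\rvx_{\leq t-1})\bigr),
\end{align*}
which, by the $L^\infty$ bound $|\log P_\btheta(v|\cdot)|\le M$ from Assumption~\ref{assm:loss_bound} combined with the variational identity $\|P-Q\|_1 = 2\mathsf{D}_{\rm TV}(P,Q)$, is at most $2M\cdot\mathsf{D}_{\rm TV}(P_{\bzeta,\rho}(\cdot|\rvx_{\leq t-1}),\DCal(\cdot|\rvx_{\leq t-1}))$ in absolute value. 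Taking outer expectation over $\rvx_{\leq t-1}\sim \DCal$, averaging over $t\in[T]$, multiplying by $\omega$ (which is the weight the KD mixture places on the teacher term in \eqref{eq:tkd_final}), and adding the two bias contributions (at $\thetaest$ and at $\btheta^{*}$) produces the $4M\omega/T\cdot\sum_t\mathbb{E}[\mathsf{D}_{\rm TV}(\cdots)]$ term. Note that this argument does \emph{not} require the distillation teacher to be close to $\btheta^{*}$ or to the class $\Theta$; it only uses proximity in next-token distribution to the true conditional.

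The main obstacle I anticipate is the $\sqrt{V_N(f^{\thetaest})}$ factor on the data-dependent side. Because $\thetaest$ is itself random and selected by the empirical KD risk, I cannot just apply a pointwise empirical Bernstein inequality; I need a \emph{uniform} one-sided empirical Bernstein inequality over $\{f^\btheta\}$ with growth function $\MC(N)$, of the flavor of Maurer--Pontil or the self-bounding/Talagrand localization arguments. Getting the constants and the exact form of $\MC(N)$ right (so that it matches the formal statement in the appendix) will be the delicate bookkeeping step; everything else reduces to the decomposition and the elementary TV bound above.
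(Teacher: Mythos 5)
Your proposal matches the paper's proof essentially step for step: the paper's Proposition~\ref{prop:esr} performs the same bias/excess-KD-risk/generalization decomposition (merely applying H\"older once to $\log P_{\thetaest}-\log P_{\btheta^*}$ with $\|\cdot\|_\infty\le 2M$ instead of twice to each $\log P_{\btheta}$ with $\|\cdot\|_\infty\le M$, giving the identical $4M\omega/T\cdot\sum_t\mathbb{E}[\mathsf{D}_{\rm TV}]$ constant), and the appendix then invokes exactly the Maurer--Pontil empirical-Bernstein results you identify (their Theorem~6 uniformly over the class $\{f^\btheta\}$ for the data-dependent $\thetaest$ term, and their Theorem~4 for the fixed $\btheta^*$ term). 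Your anticipated ``main obstacle'' about needing a uniform one-sided empirical Bernstein inequality controlled by the growth function is precisely the mechanism the paper uses, so the proposal is correct and takes the same route.
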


\subsection{Excess surrogate risk bound for LM in terms of number of tokens}
\label{sec:erb-token-level}
For an LM $\btheta \in \Theta$
and a training sequence $\rvx = [x_1, x_2,\ldots, x_T] \in \VC^T$, define
\begin{align}
\label{eq:lm-stability}
\xi_t(\rvx; \btheta)
= \mathbb{E}_{\rvz\sim \DCal} \left[\ell^{\omega}(\mathbf{z}; \btheta) | 
        \mathbf{z}_{\leq t - 1} = \rvx_{\le t-1}\right] - 
  \mathbb{E}_{\rvz\sim \DCal}\left[\ell^{\omega}(\mathbf{z}; \btheta) | 
        \mathbf{z}_{\leq t} = \rvx_{\le t}\right], \quad t \in [T].
\end{align}
Note that $\xi_t(\rvx; \btheta)$ does not depend on $\rvx_{>t}$.
For $t \in [T]$, $\xi_t(\rvx; \btheta)$ 
measures the expected KD loss deviation for the student 
when we condition on
the context up to $(t-1)$-th vs. $t$-th token, respectively, and sample the remaining tokens 
from $\DCal$. In general,
the deviation could be large as changing a \textit{single} token in the context can significantly alter LM's distribution.
However, 
a well-behaved LM should be robust to such perturbations.
Motivated by this, we introduce the following assumption.

\begin{assumption}
\label{assm:xi}
Given the data distribution $\DCal$ and \textit{finite} function class $\Theta$, the following holds for any $\rvx \in {\rm Support}(\DCal)$, $\btheta \in \Theta$, and $t \in [T]$:
\begin{align}
|
\xi_{t}(\rvx; \btheta)
| & \leq C_t \leq C; \label{eq:assm_xi1} 
\quad \text{and} \quad 
\mathbb{E}\left[
\xi^2_{t}(\rvx; \btheta)
| \rvx_{\leq t-1}\right] \leq V_t. 
\end{align}
\end{assumption}
Under Assumption~\ref{assm:xi}, we obtain the following result on student's generalization gap. For proof, please see Appendix~\ref{sec:token_level_bound_pf}.

\begin{theorem}
\label{thm:token-level-generalization-bound}
Let $\Theta$ be a finite function class. Under Assumption~\ref{assm:xi}, with probability at least $1 - \delta$, the following holds for the student LM $\thetaest \in \Theta$ obtained via KD:
\begin{align}
&\RCE(\thetaest) \leq  \RCENW(\thetaest) 
+  \sqrt{2{\sum\nolimits_{t}{V_t}/{N}} \cdot\log\left({|\Theta|}/{\delta}\right)} \; + {2C}/({3N})\cdot\log\left({|\Theta|}/{\delta}\right) \; + \nonumber \\
&\qquad \qquad (4M \omega) / T \cdot \sum\nolimits_{t\in [T]} \mathbb{E}_{\rvx_{\leq t-1} \sim \DCal}\mathsf{D}_{\rm TV}\big(P_{\bzeta,\rho}(\cdot|\rvx_{\leq t-1}), \DCal(\cdot|\rvx_{\leq t-1})\big).
\end{align}
\end{theorem}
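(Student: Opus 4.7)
The plan is to decompose
\[
\RCE(\thetaest) - \RCENW(\thetaest) \;=\; \big[\RCE(\thetaest) - \RCEW(\thetaest)\big] + \big[\RCEW(\thetaest) - \RCENW(\thetaest)\big],
\]
and control the two pieces separately: the first captures the bias incurred by replacing the ground-truth next-token distribution with the teacher's tempered distribution, while the second is a concentration quantity that must be bounded at the token level (giving an $N\sum_t V_t$ variance budget) rather than the sequence level (which would only deliver an $N$-scale bound). For the bias term, I would condition on $\rvx_{\leq t-1}$, use the identity $\mathbb{E}_{x_t\sim\DCal(\cdot|\rvx_{\leq t-1})}[\mathsf{CE}(\mathds{1}_{x_t}, P_{\btheta}(\cdot|\rvx_{\leq t-1}))] = \mathsf{CE}(\DCal(\cdot|\rvx_{\leq t-1}), P_{\btheta}(\cdot|\rvx_{\leq t-1}))$, and then write
\[
\RCE(\btheta) - \RCEW(\btheta) \;=\; \frac{\omega}{T}\sum_{t\in[T]} \mathbb{E}\Big[\sum_{v\in\VC}\big(P_{\bzeta,\rho}(v|\rvx_{\leq t-1}) - \DCal(v|\rvx_{\leq t-1})\big)\log P_{\btheta}(v|\rvx_{\leq t-1})\Big].
\]
Assumption~\ref{assm:loss_bound} gives $|\log P_{\btheta}| \leq M$, and $\sum_v |P-Q| = 2\,\mathsf{D}_{\rm TV}(P,Q)$ yields the TV contribution (a crude triangle-inequality bookkeeping step accounts for the slack between $2M\omega$ and the stated $4M\omega$).

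For the variance term, the key observation is that the quantities $\xi_t(\rvx;\btheta)$ defined in~\eqref{eq:lm-stability} are, by construction, the increments of the Doob martingale $M_t(\rvx;\btheta) = \mathbb{E}_{\rvz\sim\DCal}[\ell^\omega(\rvz;\btheta)\mid \rvz_{\leq t} = \rvx_{\leq t}]$, with $M_0 = \RCEW(\btheta)$ and $M_T = \ell^\omega(\rvx;\btheta)$. Telescoping and averaging across the $N$ i.i.d.\ training sequences gives
\[
\RCEW(\btheta) - \RCENW(\btheta) \;=\; \frac{1}{N}\sum_{i=1}^N \sum_{t=1}^T \xi_t(\rvx^{(i)};\btheta).
\]
I would then concatenate the $N$ per-sequence Doob martingales along the natural filtration $\mathcal{F}_{(i-1)T+t} = \sigma(\rvx^{(1)},\ldots,\rvx^{(i-1)},\rvx^{(i)}_{\leq t})$: this produces a single length-$NT$ martingale difference sequence, each increment bounded by $C$ and with sum of conditional variances at most $N\sum_t V_t$ by Assumption~\ref{assm:xi} together with independence across $i$. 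Freedman's inequality (equivalently, Bernstein for MDS) then yields, for fixed $\btheta$ and with probability at least $1-\delta$,
\[
\RCEW(\btheta) - \RCENW(\btheta) \;\leq\; \sqrt{(2\textstyle\sum_t V_t/N)\log(1/\delta)} + (2C/(3N))\log(1/\delta),
\]
and a union bound over the finite class $\Theta$ replaces $\delta$ by $\delta/|\Theta|$, so the inequality holds uniformly in $\btheta$, in particular at $\thetaest$.

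Adding the bias and variance bounds at $\btheta = \thetaest$ yields the stated inequality. The main obstacle is the cross-sequence martingale construction: one has to verify, on the extended filtration above, that each $\xi_t(\rvx^{(i)};\btheta)$ is measurable with respect to $\mathcal{F}_{(i-1)T+t}$ and centered given $\mathcal{F}_{(i-1)T+t-1}$, and that the per-step conditional-variance control from Assumption~\ref{assm:xi} carries over to the extended filtration unchanged. Independence of the $\rvx^{(i)}$ is precisely what prevents variance cross-terms from appearing and lets the total variance budget telescope to $N\sum_t V_t$; without it, one could only apply Bernstein at the $N$-sequence scale and would lose the advertised $NT$-token rate that makes the bound meaningful even when $T$ is large.
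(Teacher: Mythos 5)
Your proof is correct and follows essentially the same strategy as the paper: the same decomposition into a bias term (bounded by the total-variation divergence) and a concentration term (bounded via the Doob-martingale structure of the per-token conditional expectations). The only genuine difference is in how the concentration is carried out. The paper works sequence by sequence: it bounds the moment generating function of $Z^{(i)}_0-Z^{(i)}_T$ using a lemma of Fan et al.\ for martingale differences plus Jensen's inequality (concavity of $f(\lambda,\cdot)$ to pull $\frac{1}{T}\sum_t V_t$ inside), then multiplies the $N$ per-sequence MGF bounds by independence, and finishes with Chernoff. You instead concatenate the $N$ per-sequence martingales into a single length-$NT$ martingale difference sequence on the product filtration and cite Freedman's inequality directly. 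These are interchangeable -- Freedman's inequality is itself proved by exactly the kind of MGF-chaining the paper does explicitly -- and both deliver the identical Bernstein tail $\exp\bigl(-N\gamma^2/(2(\sum_t V_t + C\gamma/3))\bigr)$, so the inversion step and union bound come out the same. Your verification that on the extended filtration each $\xi_t(\rvx^{(i)};\btheta)$ remains centered and retains the conditional-variance bound $V_t$ is exactly the point that needs checking, and it goes through by independence of the $\rvx^{(i)}$ as you say. You also correctly spot that the bias term really only needs a single $\|\log P_{\btheta}\|_\infty \le M$ factor, so $2M\omega$ would suffice; the paper's $4M\omega$ simply reuses the constant from Proposition~\ref{prop:esr}, where a difference $\log P_{\thetaest}-\log P_{\btheta^*}$ genuinely costs $2M$.
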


\begin{remark}[Dependence of $C,\{V_t\}$ on $T$]
\label{rem:graceful}
Theorem~\ref{thm:token-level-generalization-bound} captures a fine-grained dependence on $C$ and $\{V_t\}$ from Assumption~\ref{assm:xi}. For a robust LM, when $C$ is $\mathcal{O}(1/T)$ and $\{V_t\}$ are $\mathcal{O}(1/T^2)$ (note the scaling of $\ell^\omega$ by $T$), we get the tightest generalization gap decaying with ${NT}$. For a non-robust LM in the worst case, $C$ and $\{V_t\}$ can be as large as $\mathcal{O}(1)$. The bound is not tight anymore and we fall back to the bound in Theorem~\ref{thm:esr} that only decays with ${N}$ instead of ${NT}$.
\end{remark}

\begin{remark}~Recently, \citet{lotfi2024-token-bound} obtained generalization bounds for LLMs trained \textit{without} KD 
in terms of total number of tokens. However, even when we specialize 
Theorem~\ref{thm:token-level-generalization-bound} to standard pre-training by setting 
$\omega$ to $0$, our result significantly differs from theirs both in terms of proof technique as well as its implications. Crucially,
results in \citet{lotfi2024-token-bound} only holds for the contexts seen during training whereas our bound 
enables controlling LM's generalization gap 
on novel contexts generated from the data distribution during the test time.
\end{remark}

In a concurrent work, \citet{zekri2024large} provide novel generalization bounds for language models by drawing connections between auto-regressive models and  finite state Markov chains. Notably, their work does not explore KD in language modeling context.

\subsection{KD out-performing standard pre-training: A bias-variance trade-off}
\label{sec:bias-var}
Empowered by our novel risk bounds, we now provide justification for why KD can outperform standard pre-training.
Specifically, based on Theorem~\ref{thm:token-level-generalization-bound}, we clarify 
when KD might lead to a tighter bound than standard pre-training. Similar conclusions follow from Theorem~\ref{thm:esr} by extending the arguments from \citet{menon21statistical} to
language modeling. As per Theorem~\ref{thm:token-level-generalization-bound}, 
three key quantities control the generalization 
of the student:
(1) $\sum_{t}V_t$ which relates to loss variance; (2) $C$ which relates to extreme loss values;
and (3) divergence between the teacher-provided
distribution and the ground truth
distribution:
$\textsc{Div}(\zeta, \omega) = \omega \cdot \sum_{t} \mathbb{E}\left[\mathsf{D}_{\rm TV}\left(P_{\bzeta,\rho}(\cdot|\rvx_{\leq t-1}), \DCal(\cdot|\rvx_{\leq t-1})\right)\right].$
Under Assumption~\ref{assm:loss_bound},
only $\sum_{t}V_t$ and $\textsc{Div}(\zeta, \omega)$ are
crucial in distinguishing KD and standard pre-training.

Since $\textsc{Div}(\zeta, 0) = 0$, one may surmise that standard pre-training (i.e, $\omega=0$) leads to tighter bound.
But as we detail in Appendix~\ref{appen:kd-std-comparison} due to the page limit, the variance term becomes smaller as we \textit{increase} $\omega$.
Thus, with a careful selection of 
$\omega$, the \textit{variance reduction} via KD can offset the \textit{bias}
$\textsc{Div}(\zeta, \omega)$. In particular, if the teacher 
closely approximates the ground truth
distribution
so that the bias $\textsc{Div}(\zeta, \omega)$ is small even for large $\omega$, then the variance reduction via KD becomes prominent, resulting in significantly tighter generalization gap compared to standard pre-training.

\textbf{Performance gain via KD from an SLM as teacher.}~While small teacher LMs -- the main interest of this work -- also lead to variance reduction, they are typically not powerful enough to model the true distribution over the entire data domain very well. Thus, any effect of variance reduction
via KD with such a teacher would be washed away by the large bias $\textsc{Div}(\zeta, \omega)$.
This highlights the need for an adaptive form of KD from SLMs.
Even SLMs with their limited capacity can approximate the 
true distribution well on certain regions of the data domain,
which we call the `easy' regions. 
Thus, one can employ KD from
SLMs on the easy regions to benefit from the variance reduction
without incurring large bias and guarantee improved student LLM performance on these regions.
For the remaining (`hard') regions, where the biasx
is large enough to overshadow the contributions of variance reduction, one should 
not perform KD from
SLMs and utilize the standard pre-training loss.

\section{\salt{}: \underline{S}mall model \underline{a}ided \underline{l}arge model \underline{t}raining}
\label{sec:method}
\begin{algorithm}[!t]
    \caption{Small model aided large model training (\salt)}\label{alg:stl-two-stage}
    \begin{algorithmic}[1]
        \Require Training data $\SCal_{N} = \{\rvx^{(i)}\}_{i \in [N]} \subset \VC^{T}$, gradient-based optimization algorithm $\AC$, SLM parameterized by $\bzeta$, distillation loss weight $\omega \in [0, 1]$, teacher temperature $\rho > 0$, batch size $B$, training step budget $n$, learning rate schedule $\{\eta_j\}_{j \in [n]}$, and $n_{\rm KD} \leq n$.
        \Ensure Pre-trained LLM parameterized by $\hat{\btheta} \in \Theta$. 
        \State Initialize $\btheta_0 \in \Theta$.
        \For {$j = 1,2,\ldots, n_{\rm KD}$} \qquad \qquad \qquad \;\;\; \texttt{\color{gray}// First stage of LLM pre-training via KD}
            \State Construct a new batch of $B$ training sequences $\BCal_{j} = \{\rvx^{(i)}\}_{i \in [B]} \subset \SCal_{N}$.
            \State Update $\theta_{j+1}$ with step size $\eta_j$ via one step of $\AC$ on $\LCal^{\rm KD}(\BCal_j) = \frac{1}{B}\sum_{\rvx \in \BCal_j}\ell^\omega(\mathbf{x}; \btheta_j)$.
        \EndFor
        \For {$j =n_{\rm KD}+1, n_{\rm KD}+2,\ldots, n$} \qquad \texttt{\color{gray}// Second stage:~standard training}
            \State Construct a new batch of $B$ training sequences $\BCal_{j} = \{\rvx^{(i)}\}_{i \in [B]} \subset \SCal_{N}$.
            \State Update $\theta_{j+1}$ with step size $\eta_j$ via one step of $\AC$ on $\LCal^{\rm Std}(\BCal_j) =  \frac{1}{B}\sum_{\rvx \in \BCal_j}\ell(\rvx; \btheta_j)$.
        \EndFor
        \State $\hat{\btheta}\leftarrow\btheta_n$
    \end{algorithmic}
\end{algorithm}

We now operationalize the key takeaway from
Sec.~\ref{sec:theory}
by proposing
\salt{} -- a simple yet effective two-stage pre-training method.
\salt{} relies on the inherent preference of a model to first focus on easier
supervision before fitting more complex supervision during
training~\citep{Kalimeris:2019,refinetti23a}
to perform selective KD from a teacher SLM.
We then expand \salt{}
to perform \textit{explicit selection} of
training sequences where we want to
conduct KD from an SLM on. In particular, we identify \textit{challenging}
sequences which are \textit{learnable}
as per teacher SLM; as a result, performing KD on those sequences
result in further variance reduction~\citep{katharopoulos18a}.

\noindent\textbf{Two-stage LLM pre-training via}~\salt{}\textbf{.}~
Inspired by our analysis,
we propose a two-stage pre-training methods for LLMs in Algorithm~\ref{alg:stl-two-stage}. The
algorithm employs KD with SLM as a teacher in the first stage which lasts till $n_{\rm KD}$ training steps,
and transitions to standard pre-training \emph{without} KD in the second stage. Note that we are interested in the selective transfer of predictive distribution from teacher SLM to student LLM in those regions where SLM performs well by capturing true distribution. By design, KD aims to align predictive distributions of the teacher and student. On the regions where SLM performs well, we expect it to exhibit reasonably confident predictive distribution that should align with ground truth next-token~\citep{gupta2024language}, thereby constituting an easier supervision signal for the LLM.
In contrast, on hard instances where SLM's predictive distribution is not confident enough or does not align well with the ground truth next-token, learning will be delayed to the later phase of the training~~\citep{Kalimeris:2019,refinetti23a}.
Thus, \salt{} relies on the tendency of neural networks to focus on easier instances early during the training to perform desirable knowledge transfer from SLM in the first stage.
Once the student LLM is sufficiently aligned with teacher SLM on easier regions, it starts utilizing its model capacity to further align with SLM on more complex regions where high divergence between SLM and ground truth distribution can become detrimental to the 
LLM's performance. Switching to standard pre-training based solely on the self-supervision from next-tokens in the second stage prevents this undesirable over-alignment.
We empirically verify the above intuition behind \salt{} in Sec.~\ref{sec:performance_slicing_on_example_hardness}.

\noindent\saltds\textbf{:}~\salt{} \textbf{with \underline{d}ata \underline{s}election.}~
We now endow \salt{} approach (cf.~Algorithm~\ref{alg:stl-two-stage}) with explicit selection of examples where we want to transfer teacher SLM's predictive distribution on, with SLM itself enabling the data selection. In particular, we want to select the most informative (or challenging) examples among the ones that SLM performs well on.
Towards this, given the SLM 
$\bzeta$ and a positive integer $k$, we assign a score $S_{\bzeta, k}(\rvx)$ to training sequence $\rvx$, with a higher score indicating a higher likelihood to be included for training. 
More specifically, we compute the per-token cross-entropy loss of SLM 
on $\rvx$ and
aggregate (typically using the median) into a 
sequence-level score. This encourages selecting more challenging examples. However, in the spirit of selecting examples that are still \emph{learnable},
we remove all losses where the ground-truth token is not in top-$k$ outputs of the model before aggregating:
\begin{align}
\label{eq:ds_score}
   S_{\bzeta, k}(\rvx)
    = \mathrm{median} \Big(\Big\{
    -\mathds{1}\{x_i \in \mathrm{argtop}_k(P_\bzeta(\cdot|\rvx_{<i})) \} 
    \log P_{\bzeta}(x_i | \rvx_{<i}):
    i \in [n]
    \Big\}\Big),
\end{align}
where $\mathrm{argtop}_k(P_\bzeta(\cdot|\rvx_{<i}))$ denotes the top-$k$ scoring tokens at position $i$ according to SLM. 
Given a scored pool of sequences, we select the top-$m$ scoring sequences where $m$ is sufficiently large to complete the first stage of
Algorithm~\ref{alg:stl-two-stage}.
The reader may note, that if the SLM has been trained with the same dataset that it is scoring, then the computed per-token loss (and resulting sequence score) may be heavily biased. To circumvent that, we use an ``early checkpoint'' of the model $\bzeta_{n_0}$, which has trained on a small number of examples from the overall training set $n_0 \ll N$.  We then sample from the remainder of the training examples using score $S_{\bzeta_{n_0}, k}(\cdot)$. Although the early model $\bzeta_{n_0}$  
may be a lower quality model due to training with relatively little data, it is only the relative ordering of examples that is important when computing a score, rather than the absolute score.

\section{Experiments}
\label{sec:exp}
We now showcase the potential of leveraging  SLMs for improving LLM pre-training, by realizing both better quality and improved training efficiency. 
We demonstrate the additive utility of two aspects of our proposal: 
(1) employing SLMs as teacher models during the early phase of LLM pre-training (\salt{}); and 
(2) further performing data selection via SLMs during the KD phase (\saltds{}). 

Throughout our study, we compare with a natural baseline (denoted \baseline{}) where the large LM is pre-trained in a standalone manner with a self-supervised objective over a pre-training set. This enables us to fairly compare our proposed approach as we fix the model architecture and the underlying pre-training data in our evaluation. The key takeaways from this section are:

\begin{itemize}[leftmargin=6mm, itemsep=1mm, partopsep=0pt,parsep=0pt]
\item[(1)] \salt{} and \saltds{} attain \baseline{} performance with less than 70\% of training steps, and significantly outperform \baseline{} with the same number of training steps~(cf.~Sec.~\ref{sec:exp-results}). Additionally, \salt{}
can leverage improved quality small teacher LM to further improve the performance of the large student LM (cf.~Appendix~\ref{sec:exp-ablation})
\item[(2)] \salt{} and \saltds{} pre-trained LMs consistently outperform \baseline{} after SFT on arithmetic reasoning, summarization, and natural language inference (NLI) tasks~(cf.~Sec.~\ref{sec:exp-sft}).
\item[(3)] Step transition from KD phase to standard training phase in \salt{} (cf.~Algorithm~\ref{alg:stl-two-stage}) constitutes a good design choice as it outperforms other natural alternatives~(cf.~Appendix~\ref{sec:exp-ablation}).
\end{itemize}
We also compare \salt{}
with \rkd{} (standing for  \textit{reverse KD}) where we perform KD with \slm{} as the teacher \textit{throughout} pre-training.
\rkd{} results in sub-par performance even compared to \baseline{} as the relatively poor quality of \slm{} limits LLM performance
during the later part of training.

\subsection{Experimental Setup}
\label{sec:exp-setup}

 \begin{figure}
    \centering
    \includegraphics[width=0.45\textwidth]{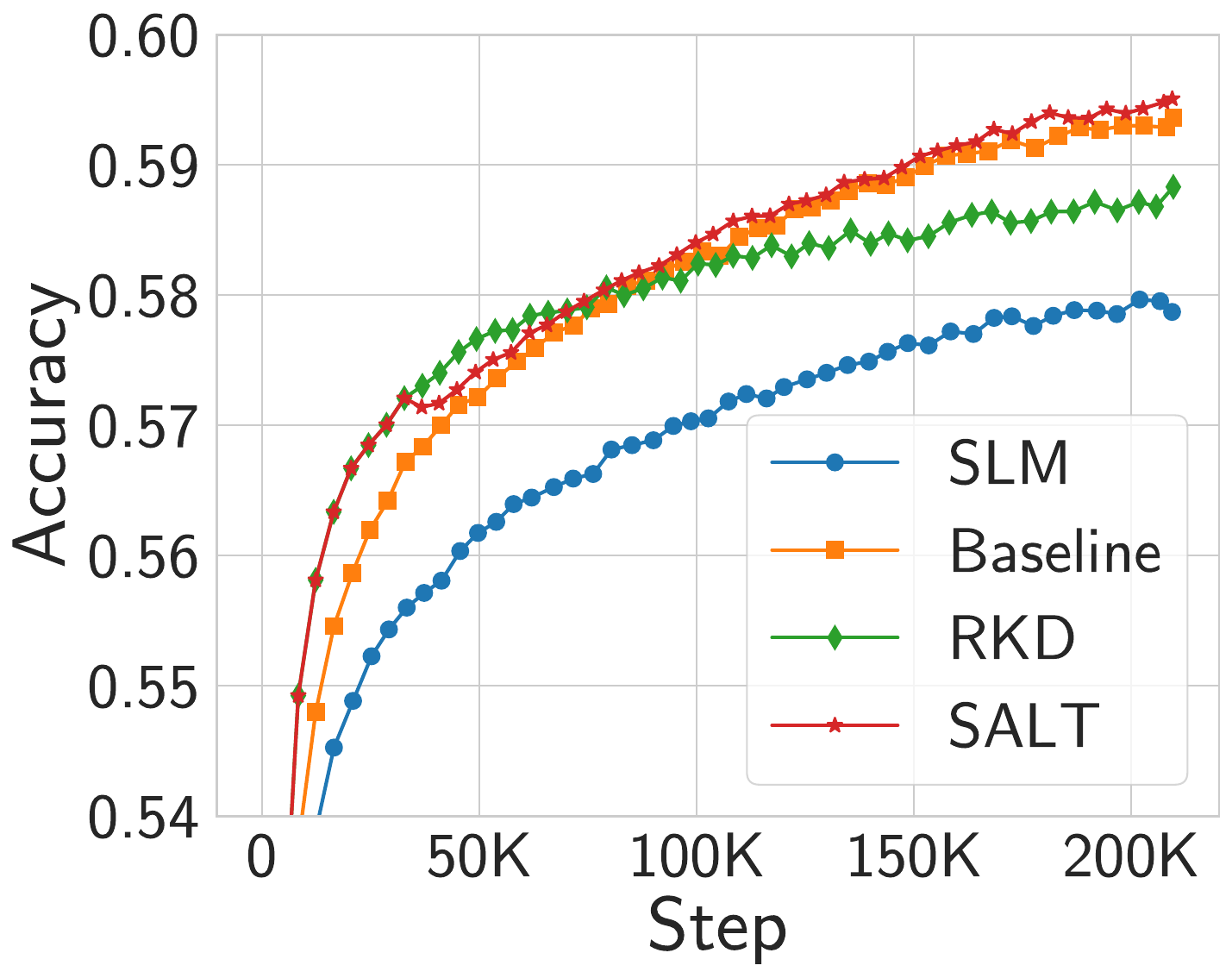}
    \caption{
    Fraction of correct next-token predictions
    for various LMs during training, on a subset of the Pile training set.
    }
    \label{fig:training-curves}
\end{figure}

\textbf{Model architectures and pre-training data.}
We work with standard decoder-only Transformer-based LMs. Our small model (\slm{}) has 1.5B parameters and our large model has 2.8B parameters.
We use a SentencePiece tokenizer~\citep{kudo2018sentencepiece} from \cite{du2022glam} with a vocabulary size of 256$K$. 
We pre-train all LMs on the Pile dataset~\citep{gao2020thepile} for 545 billion tokens 
via {UL2 objective}~\citep{tay2023ul2} with a mixture of causal LM, prefix LM and span corruption tasks.
We use a batch size of 2048 and input sequence length of 1280.
Based on our hyperparameter search, we set the distillation weight $\omega=0.667$ and teacher temperature $\rho=0.25$.
For \saltds{}, we score examples with \slm{} trained for $n_0=26$K steps and set $k=10$ in \eqref{eq:ds_score}.
Further details on model architecture and pre-training are provided in Appendix~\ref{sec:exp-setup-details}.

\textbf{Few-shot evaluation tasks.}~Following the literature~\citep[see, e.g.,][]{anil2023palm,touvron2023_llama},
we perform few-shot evaluation of pre-trained LMs on a wide range of standard benchmarks. We focus on English benchmarks as the pre-training data is mostly English.
We defer the full list of benchmarks to Appendix~\ref{appen:few-shot-benchmarks} which can be
categorized into: (1) world knowledge, (2) reading comprehension, (3) commonsense reasoning, (4) natural language generation (NLG), and (5) SuperGLUE. 
We also consider LAMBADA~\citep{paperno2016lambada} and MBPP~\citep{austin2021program} which are Cloze and code generation tasks, respectively.
We conduct $1$-shot evaluation for all benchmarks, except for MBPP which is $3$-shot. For each benchmark, we report the corresponding prevalent metric in the literature. See Appendix~\ref{app:more_experiments} for details.

\textbf{Fine-tuning tasks.}
We focused on (arithmetic) reasoning, summarization, and NLI tasks where the few-shot performance of all pre-trained models (including baseline and our models) were poor. For arithmetic reasoning, we utilize GSM8K~\citep{nie2020-adversarial}. For summarization, we employ
XSum~\citep[][]{narayan2018dont_xsum} and CNN/DailyMail
~\citep{nallapati2016-abstractive}. For NLI tasks, we employ
ANLI-R1, ANLI-R2, and ANLI-R3
~\citep{nie2020-adversarial}.

\subsection{Results: \salt{} enables efficient training of high quality LLMs}
\label{sec:exp-results}
Interestingly, KD from the seemingly weaker \slm{} does improve LLM training in the beginning compared to \baseline{}, as reflected in the next-token prediction accuracy over the training set in Fig.~\ref{fig:training-curves}. However, continuing KD from the weaker teacher eventually
become detrimental. As evident in Fig.~\ref{fig:training-curves} and Tab.~\ref{tab:validation_set_metrics},
\rkd{} significantly underperforms \baseline{} on both training and validation set. In contrast, \salt{} 
leverages KD from \slm{} only during
first $n_{\rm KD}$ training steps
(cf.~Algorithm~\ref{alg:stl-two-stage}).

\begin{table}[t]
    \caption{\textbf{Accuracy and log-perplexity on a held-out 
    set of 
    Pile.}
    We evaluate the models at an early and the final checkpoint.
    At the end of pre-training ($208$K steps), both \salt{} and \saltds{} improve upon \baseline{} in terms of both next-token prediction accuracy and log-perplexity.
    \rkd{} is \textit{identical} to \salt{} during its first stage, hence they have the same performance at $n_{\rm KD}=36$K steps.
    }
    \label{tab:validation_set_metrics}
\centering
\setlength{\tabcolsep}{1pt}
\scalebox{0.96}{
\begin{tabular}{@{}lccc@{}}
\toprule
\bf Model &  \parbox[c]{2.5cm}{\centering {\bf Evaluation stage \\ (steps)} } &  {\bf Accuracy} ($\uparrow$) &  \parbox[c]{2cm}{\centering {\bf Log ($\downarrow$) \\ perplexity} } \\
\midrule
 \slm{}
 & \centering {Final (208K)} &  57.70 &   1.951 \\
\midrule
\baseline   & \centering \multirow{4}{*}{Early (36K)} &  56.68 &   2.011 \\
\rkd  &  & 57.21 &   2.160 \\
\salt &  & 57.21 &   2.160 \\
\saltds &  & 56.47 &   2.188 \\
\midrule
\baseline & \centering\multirow{4}{*}{Final (208K)} &  58.99 &   1.868 \\
\rkd &   & 58.46 &   2.071 \\
\salt &  & \underline{59.10} &   \underline{1.863} \\
\saltds    & & \textbf{59.17} &   \textbf{1.857} \\
\bottomrule
\end{tabular}}
\end{table}

\noindent\textbf{Quality improvements via}~\salt{}\textbf{.}~Unlike \rkd{}, \salt{} (with $n_{\rm KD}=36$K steps) yields a pre-trained LLM that improves upon \baseline{} on both training set (cf.~Fig.~\ref{fig:training-curves}) and held-out validation set (cf.~Tab.~\ref{tab:validation_set_metrics}).
Tab.~\ref{tab:evals_all_grouped} presents domain-wise few-shot performance of \baseline{}, \rkd{}, \salt{}, and \saltds{} (see Tab.~\ref{tab:evals_grouped_full} Appendix~\ref{app:more_experiments} for
per-task performance
).
Both \salt{} and 
\saltds{} consistently outperform \baseline{} (as well as \rkd{}) at the end of 
training%
, i.e., $@100$\% steps or $208$K steps. In particular, \saltds{} outperforms \baseline{} in $6$ out of $7$ domains as well as the overall average; similarly, \salt{} improves upon \baseline{} in $5$ out of $7$ domains as well as the overall average,
establishing the utility of \salt{} approaches in successfully 
leveraging SLMs to boost the quality of 
LLMs.

\noindent\textbf{Training efficiency realized via}~\salt{}\textbf{.}~As per Tab.~\ref{tab:evals_all_grouped}, \salt{} surpasses \baseline{} at $146$K steps on average few-shot performance, suggesting a savings of $30$\% training compute cost. While 
$n_{\rm KD} = 36$K out of those $146$K steps involve KD which is typically computationally costlier than standard training, as we argue next, the fact that our teacher 
is a SLM ensures that we still realize 
efficiency gains via \salt{}. In particular, the additional compute cost in KD is one forward pass of \slm{} per training step. As a rule of thumb, a forward pass constitutes $1/4$th cost of a training step (which comprises both forward and backward passes). In our setup, a forward pass of \slm{} is approximately half as expensive as that for the LLM. Thus, KD from \slm{} adds a factor of $1/8$ to the cost of the training step of the standard training. 
(Our implementation verifies this as we observe $12.0$\% slow down during KD compared to standard training.) Since KD lasts for $n_{\rm KD} = 36$K out $146$K training steps, the training cost required by \salt{} to surpass \baseline{} is approximately equivalent to that of $(146 + 36/8)$K steps of the standard training. This translates to an efficiency gain of $\sim 28$\% compared to the $208$K steps taken by \baseline{}.

\begin{table}
  \caption{\textbf{Domain-wise few-shot performance of pre-trained LMs.}
  \salt{} and \saltds{} already outperform \baseline{} in terms of average few-shot performance at $70$\% of their training step budget, thereby improving both training efficiency and model quality. \rkd{} (i.e., naively distilling from \slm{} throughout pre-training) performs much worse than \baseline{}. The best and second-best results for each domain are \textbf{boldfaced} and \underline{underlined}, respectively.
  }
  \label{tab:evals_all_grouped}
  \centering
   \setlength{\tabcolsep}{2pt}
   \scalebox{0.96}{
    \begin{tabular}{lccccccccccccc}
    \toprule
    \bf Domain & \textbf{\# Tasks} && \slm && \baseline && \rkd &&
    \multicolumn{2}{c}{\salt} &&
    \multicolumn{2}{c}{\saltds} \\
    \cmidrule{6-6} \cmidrule{8-8}  \cmidrule{10-11} \cmidrule{13-14}
    &  &&  && \textit{@100\%} && \textit{@100\%} &&
    \textit{@70\%} & \textit{@100\%}&&
    \textit{@70\%}  & \textit{@100\%} \\
    &  &&  &&  \textit{steps} && \textit{steps} && 
    \textit{steps} & \textit{steps} &&
    \textit{steps}  & \textit{steps} \\
    \midrule
    \bf World Knowledge & 4 && 15.90 && \underline{22.19} && 18.69 && 21.59 & \textbf{22.70} && 20.64 & 21.72 \\
    \bf Reading Comprehension & 4 && 46.30 && 53.00 && 51.00 &&  53.55 & \underline{54.55} && 54.35 & \textbf{54.93} \\
    \bf Commonsense Reasoning & 7 && 57.76 && 61.99 && 58.30 && 61.27 & 61.67 && \underline{62.00} & \textbf{62.10} \\
    \bf LAMBADA & 1 && 26.90 && 36.20 && 31.10 &&  \underline{50.70} & 48.30 && 48.00 & \textbf{53.00} \\
    \bf SuperGLUE & 8 && 61.59 && 65.53 && 62.91 && \textbf{66.30} & 65.28 && \underline{65.99} & 65.58 \\
    \bf NLG & 3 && 3.13 && 4.60 && 3.40 && 4.63 & 4.73 &&\underline{4.80} & \textbf{4.83} \\
    \bf MBPP & 1 && 9.60 && 16.20 && 11.40 && 15.60 & \underline{17.00} && 16.60 & \textbf{17.80} \\
    \midrule
    \bf Average & 28 && 42.56 && 47.32 && 44.39 &&  47.86 & \underline{47.94} && 47.89 & \textbf{48.26} \\
    \bottomrule
    \end{tabular}}
\end{table}

\subsection{Improved downstream performance realized via \salt{} post SFT}
\label{sec:exp-sft}

Tab.~\ref{tab:evals_all_grouped} already showcases that \salt{} can leverage SLMs to obtain large pre-trained LMs that out-perform widely adopted standard pre-training 
(\baseline{}).
That said, all the pre-trained models (including \baseline{}) exhibit relatively poor few-shot performance on certain benchmarks, e.g., NLG or summarization task (in Tab.~\ref{tab:evals_all_grouped})
and also MATH~\citep{hendrycks2021measuring} and ANLI~\citep{nie2020-adversarial} benchmarks. This raises the question of whether \salt{} is beneficial 
for downstream application performance.

Supervised fine-tuning (SFT) 
is a standard approach to convert a pre-trained LM into a domain-specific proficient model. 
We employ SFT on a range of downstream tasks covering three domains, namely arithmetic reasoning, NLG or summarization, and NLI. 
For each downstream benchmark, we perform SFT on the pre-trained LMs obtained via \baseline{}, \salt{}, and \saltds{}.
During SFT, we train each pre-trained LM for $10$K steps with Adafactor algorithm~\citep{shazeer2018adafactor}. We utilize cosine learning rate schedule \citep{loshchilov2017sgdr} with a peak learning rate of 1e-4 and linear warm-up phase consisting of 200 steps.
These hyperparameters were optimized for the \baseline{};~
and we did not conduct hyperparameter tuning for the pre-trained LMs resulting from \salt{} approaches. Finally, we employ greedy decoding during the evaluation of the fine-tuned LMs.
Tab.~\ref{tab:sft-results} presents SFT performance on six benchmarks covering the aforementioned three downstream domains. 
\salt{} consistently outperforms \baseline{} on all benchmarks and, in most cases, 
\saltds{} performs the best. 
This showcases that \salt{} (\textit{with} or \textit{without} data selection) enables significant improvements for a range of \textit{difficult} downstream domains even when the corresponding pre-trained LMs exhibits only a
small performance gains over \baseline{}.

\begin{table*}[t]
    \caption{\textbf{Supervised fine-tuning (SFT) results.}~Performance of various pre-trained checkpoints on downstream tasks after SFT. For each benchmark, pre-trained 2.8B models are fine-tuned on the corresponding train split and evaluated on the validation split (test split in case of GSM8K). \textit{Acc}, \textit{Rg1}, \textit{Rg2}, and \textit{RgL} represent the \textit{Accuracy}, \textit{Rouge-1}, \textit{Rouge-2}, and \textit{Rouge-Lsum} metrics, respectively.
    } \label{tab:sft-results}
    \centering
    \setlength{\tabcolsep}{2.5pt}
    \scalebox{0.96}{
    \begin{tabular}{@{}lccccccccccccccc@{}}
    \toprule
     & \multicolumn{1}{c}{\textbf{GSM8K}} && \multicolumn{3}{c}{\textbf{XSum}} && \multicolumn{3}{c}{\textbf{CNN/DailyMail}} && \multicolumn{1}{c}{\textbf{ANLI-R1}} && \multicolumn{1}{c}{\textbf{ANLI-R2}} && \multicolumn{1}{c}{\textbf{ANLI-R3}} \\
     \cmidrule{2-2} \cmidrule{4-6} \cmidrule{8-10}  \cmidrule{12-12} \cmidrule{14-14} \cmidrule{16-16}
     & 
     \textit{Acc} &&
     \textit{Rg1} & \textit{Rg2} & \textit{RgL} &&
     \textit{Rg1} & \textit{Rg2} & \textit{RgL} &&
     \textit{Acc} &&
     \textit{Acc} &&
     \textit{Acc} \\
     \midrule
     \baseline & 
     31.84 &&
     43.39 & 21.09 & 35.91 &&
     42.84 & 20.43 & 40.38 &&
     63.70 &&
     56.90 &&
     57.83 \\[1.5mm]
     \salt & 
     \underline{34.87} &&
     \underline{43.45} & \underline{21.21} & \underline{36.04} &&
     \underline{43.19} & \underline{20.65} & \underline{40.74} &&
     \underline{67.00} &&
     \bf 57.80 &&
     \bf 59.67 \\[1.5mm]
     \saltds & 
     \bf 35.25 &&
     \bf 43.77 & \bf 21.44 & \bf 36.24 &&
     \bf 43.41 & \bf 20.87 & \bf 40.95 &&
     \bf 67.30 &&
     \underline{57.70} &&
     \underline{59.58} \\
    \bottomrule
    \end{tabular}}
\end{table*}

\subsection{\slm{} enables fast learning on easy examples}
\label{sec:performance_slicing_on_example_hardness}
\begin{table}
  \caption{\textbf{Few-shot evaluation on different buckets of XLSum-EN.}~Each number shows average Rouge-2 scores on the corresponding bucket. We use \hlg{gray}, \hla{green}, and \hlr{red} to highlight the results similar to, better than, and worse than \baseline{} performance, respectively.
  }
  \label{tab:example_difficulty-xlsum}
  \centering
\setlength{\tabcolsep}{2pt}
\scalebox{0.96}{
\renewcommand{\arraystretch}{1.0}
\begin{tabular}[t]{@{}lp{2.5cm}ccc@{}} 
\toprule  
&  \centering \bf Evaluation stage (steps) & \bf Easy & \bf Medium & \bf Hard \\
\midrule 
\slm & \centering Final (208K) & 8.04 & 0.43 & 0.00 \\ 
\midrule
\baseline{} & \centering \multirow{3}{*}{Early (36K)} & \hlg{6.15} & \hlg{1.61} & \hlg{0.71} \\ 
\rkd{} & & \hla{6.76} & \hlr{1.40} & \hlr{0.58} \\
\salt{} & & \hla{6.76} & \hlr{1.40} & \hlr{0.58} \\
\midrule
\baseline{} & \centering \multirow{3}{*}{Final (208K)} & \hlg{8.80} & \hlg{2.52} & \hlg{0.97} \\ 
\rkd{} & & \hlr{7.87} & \hlr{1.68} & \hlr{0.74} \\ 
\salt{} & & \hla{9.68} & \hla{2.67} & \hla{0.99} \\ 
\midrule 
\end{tabular}}
\end{table}

Recall that \salt{} aims to realize quality and efficiency gains for LLM pre-training by quickly transferring the predictive distribution of an SLM to the LLM via KD, focusing on the `easy' regions of the data distribution where the SLM performs well. Subsequently, \salt{} falls back on ground truth next-token-based supervision to refine LLM's performance on the `hard' regions where the SLM fares poorly. Here, we set out to empirically demonstrate that this key intuition behind \salt{} is indeed borne out in practice. Focusing on various few-shot eval benchmarks, we partition instances in each benchmark into `easy', `medium', and `hard' buckets based on the teacher SLM's performance (see Appendix~\ref{appen:difficulty_slice} for details). We then evaluate \baseline{}, \salt{}, and \rkd{} pre-trained LLMs on these individual buckets after $n_{\rm KD} = 36$K training steps when the KD phase of \salt{} ends as well as at the end of the pre-training, i.e., after $208$K steps. Tab.~\ref{tab:example_difficulty-xlsum} presents these results on the XLSum-EN task (see Appendix~\ref{appen:difficulty_slice} for results on other benchmarks), which validate: (1) KD from \slm{} quickly enables LLM to perform well on `easy' instances; and (2) standard pre-training after KD phase ending at $n_{\rm KD}$-th step helps LLM performance on `hard' instances the most.

\section{Related work}
\label{sec:related}

Here, we provide a brief account of related work, focusing on the prior work on assisting large model training via small models, data selection, and theoretical treatment of KD. Due to the page limit, we defer the discussion on various KD methods for language modeling that are not pertinent to the main objective of this work to Appendix~\ref{appen:bg}.

\noindent\textbf{Aiding large model training with small models.}
Small models often help identify good hyper-parameters that can be utilized for large model training with minimal modifications~\citep{yang2021tuning}. Progressive or stage-wise training methods~\citep{gong2019efficient, gu2020transformer, reddi2023stacking} train a large model in stages where the parameters for the model at a given stage get initialized based on the parameters of a smaller model from the previous stage. Another related line of work simply informs the large model initialization based on a smaller model without resorting to progressive stage-wise training~\citep{trockman2023mimetic, wang2023lemon, samragh2024scaling}. Most of these works crucially depend on the architectural overlaps between the small and large models, e.g., requiring them to share same depth, width, or more generally same model family.
In contrast, one can distill from a smaller model to a larger model without such constraints. \citet{furlanello2018born} study self-distillation to iteratively train a model, with the final model from an iteration acting as a teacher for the next iteration.
More closer to the setting studied in this work, \textit{albeit} in image classification setting, \citet{yuan2020revisiting, xie2020self} consider distillation
from a weaker model.
In the work that is closest to our proposal, \citet{qin2022ki} 
distill an LM from a smaller LM.
Moreover, they also 
distill
during the early phase of larger student LM pre-training to avoid negative impact on large LM’s final performance. We demonstrate the utility of such an approach with larger LMs as well as larger pre-training corpus, with a wider range of evaluation benchmarks. Furthermore, we provide a statistical framework to rigorously justifies the utility of a seemingly weaker teacher during LLM pre-training. Other recent efforts on using smaller LMs to boost larger LMs have primarily focused on
fine-tuning~\citep{yang2024smalltolarge,mitchell2024an} or alignment~\citep{burns2023weak}, while 
we focus on pre-training which is the most compute and data intensive phase of LLM development. 

\textbf{Data Selection.} 
\cite{ankner2024perplexed} select examples with reference model sequence-level log-perplexities in a specified range. They aggregate per-token log-perplexities by taking the mean, whereas, we take the median and also zero out noisy tokens.
\cite{mindermann2022prioritized} select examples and \cite{lin2024rho1} select tokens for training based on \textit{excess} training loss over a reference model. While 
we perform \textit{offline} data selection before training,
these two works select data from training batches on the fly.
Methods which encourage diversity through semantic embeddings (e.g., \cite{abbas2023semdedup,tirumala2024d4}) can yield a better final model when employed in \saltds{}.
Please refer to \cite{albalak2024asurvey} for a comprehensive survey of data selection techniques for LMs.

\noindent\textbf{Theoretical understanding of knowledge distillation.}~Focusing on (deep) linear networks, \citet{phuong19understanding} study the generalization bounds for KD and relate its success to various factors including data geometry and optimization bias. \citet{menon21statistical,ren2022better} show that KD leads to reduced variance of the training objective, thereby improving generalization, while also relating the effectiveness of KD to teacher's ability to approximate Bayes class probabilities. \citet{cotter2021distilling} argue that KD can improve 
generalization as long as teacher approximates a suitable transformation of Bayes class probabilities. \cite{dao2021knowledge} study KD from the perspective of semiparameteric inference to analyze the excess risk of distilled student. Focusing on self-distillation for kernel ridge regression, \cite{mobahi2020self} show that distillation can enhance regularization. \cite{allen-zhu2023towards} explain the utility of distillation via better feature learning. \citet{xu2020towards} study the interplay between optimization and 
label smoothing via teacher-provided soft labels.
Focusing on linearized neural networks, \citet{harutyunyan2023supervision} attribute the success of KD to the reduced supervision complexity. 
More recently, \citet{safaryan2023variance} argue that KD can act as a form of partial variance reduction, thereby improving convergence. We would like to highlight that the existing literature \textit{does not} provide
generalization bounds for
KD in a \textit{sequence} learning setting such as language modeling, and we provide the first statistical treatment of KD for language modeling.
Notably, similar to our work, \citet{xu2020towards, nagarajan2023student} also explore the utility of KD only during an early-phase of student training, albeit not in a language modeling setting.

\section{Conclusion}\label{sec:conclusion}
We conduct a systematic study of the utility of SLMs to improve both training efficiency and performance of LLM pre-training. 
Towards this, we introduce a novel statistical framework for KD in the context of language modeling, which guides the design of \salt{} -- a simple KD-based approach that selectively transfers predictive distribution from an SLM to an LLM. 
We further enhance \salt{} 
and perform explicit data selection via SLMs
to effectively transfer knowledge from SLMs to LLMs.
\salt{} significantly reduces the pre-training time for LLMs while ensuring good overall quality as measured by the LLM's few-shot performance as well as downstream performance after fine-tuning.

While \salt{} can play a crucial role in efficiently sustaining the trend of developing LLMs with increasing capabilities, it also has the potential to help institutions train proficient lightweight LMs. Conventionally, one distills a large powerful model into a lightweight model with good quality. However, given that many LLMs are proprietary, such strong-to-weak distillation is not feasible at many institutions. Our proposed approach can leverage even \textit{smaller} LMs to enhance the quality of a small LM with acceptable inference cost.
Exploring if our proposed approach can introduce new capabilities in a general-purpose small LM with the help of one or multiple \textit{smaller} LMs that are experts in their respective domains is an interesting avenue for future work. Interestingly, our data selection approach in \saltds{} demonstrates that it is indeed possible to leverage data selection to improve KD for language modeling. Building on this initial investigation and further exploring and extending data selection approaches tailored to \saltds{} is another interesting line of future work.

\section*{Acknowledgments}

The authors would like to thank Ke Ye, Julian Odell, and Rina Panigrahy for their valuable input during the early stages of this research effort. 

\bibliographystyle{plainnat}
\bibliography{main}

\begin{thebibliography}{119}
\providecommand{\natexlab}[1]{#1}
\providecommand{\url}[1]{\texttt{#1}}
\expandafter\ifx\csname urlstyle\endcsname\relax
  \providecommand{\doi}[1]{doi: #1}\else
  \providecommand{\doi}{doi: \begingroup \urlstyle{rm}\Url}\fi

\bibitem[Abbas et~al.(2023)Abbas, Tirumala, Simig, Ganguli, and
  Morcos]{abbas2023semdedup}
Amro Abbas, Kushal Tirumala, Dániel Simig, Surya Ganguli, and Ari~S. Morcos.
\newblock Sem{D}e{D}up: Data-efficient learning at web-scale through semantic
  deduplication.
\newblock \emph{arXiv preprint arxiv:2303.09540}, 2023.

\bibitem[Agarwal et~al.(2024)Agarwal, Vieillard, Zhou, Stanczyk, Garea, Geist,
  and Bachem]{agarwal2024onpolicy}
Rishabh Agarwal, Nino Vieillard, Yongchao Zhou, Piotr Stanczyk, Sabela~Ramos
  Garea, Matthieu Geist, and Olivier Bachem.
\newblock On-policy distillation of language models: Learning from
  self-generated mistakes.
\newblock In \emph{The Twelfth International Conference on Learning
  Representations}, 2024.
\newblock URL \url{https://openreview.net/forum?id=3zKtaqxLhW}.

\bibitem[AI@Meta(2024)]{llama3modelcard}
AI@Meta.
\newblock {Llama 3 Model Card}.
\newblock 2024.
\newblock URL
  \url{https://github.com/meta-llama/llama3/blob/main/MODEL_CARD.md}.

\bibitem[Albalak et~al.(2024)Albalak, Elazar, Xie, Longpre, Lambert, Wang,
  Muennighoff, Hou, Pan, Jeong, Raffel, Chang, Hashimoto, and
  Wang]{albalak2024asurvey}
Alon Albalak, Yanai Elazar, Sang~Michael Xie, Shayne Longpre, Nathan Lambert,
  Xinyi Wang, Niklas Muennighoff, Bairu Hou, Liangming Pan, Haewon Jeong, Colin
  Raffel, Shiyu Chang, Tatsunori Hashimoto, and William~Yang Wang.
\newblock A {S}urvey on {D}ata {S}election for {L}anguage {M}odels.
\newblock \emph{Transactions on Machine Learning Research}, 2024.
\newblock ISSN 2835-8856.
\newblock URL \url{https://openreview.net/forum?id=XfHWcNTSHp}.

\bibitem[Allen-Zhu and Li(2023)]{allen-zhu2023towards}
Zeyuan Allen-Zhu and Yuanzhi Li.
\newblock {Towards Understanding Ensemble, Knowledge Distillation and
  Self-Distillation in Deep Learning}.
\newblock In \emph{The Eleventh International Conference on Learning
  Representations}, 2023.
\newblock URL \url{https://openreview.net/forum?id=Uuf2q9TfXGA}.

\bibitem[Anil et~al.(2023)Anil, Dai, Firat, Johnson, Lepikhin, Passos, Shakeri,
  Taropa, Bailey, Chen, et~al.]{anil2023palm}
Rohan Anil, Andrew~M Dai, Orhan Firat, Melvin Johnson, Dmitry Lepikhin,
  Alexandre Passos, Siamak Shakeri, Emanuel Taropa, Paige Bailey, Zhifeng Chen,
  et~al.
\newblock Pa{LM} 2 {T}echnical {R}eport.
\newblock \emph{arXiv preprint arXiv:2305.10403}, 2023.

\bibitem[Ankner et~al.(2024)Ankner, Blakeney, Sreenivasan, Marion, Leavitt, and
  Paul]{ankner2024perplexed}
Zachary Ankner, Cody Blakeney, Kartik Sreenivasan, Max Marion, Matthew~L.
  Leavitt, and Mansheej Paul.
\newblock Perplexed by perplexity: Perplexity-based data pruning with small
  reference models, 2024.
\newblock URL \url{https://arxiv.org/abs/2405.20541}.

\bibitem[Anthropic(2024)]{anthropic2024claude}
AI~Anthropic.
\newblock The {C}laude 3 {M}odel {F}amily: {O}pus, {S}onnet, {H}aiku.
\newblock \emph{Claude-3 Model Card}, 2024.

\bibitem[Austin et~al.(2021)Austin, Odena, Nye, Bosma, Michalewski, Dohan,
  Jiang, Cai, Terry, Le, et~al.]{austin2021program}
Jacob Austin, Augustus Odena, Maxwell Nye, Maarten Bosma, Henryk Michalewski,
  David Dohan, Ellen Jiang, Carrie Cai, Michael Terry, Quoc Le, et~al.
\newblock {P}rogram {S}ynthesis with {L}arge {L}anguage {M}odels.
\newblock \emph{arXiv preprint arXiv:2108.07732}, 2021.

\bibitem[Bartlett et~al.(2006)Bartlett, Jordan, and
  McAuliffe]{bartlett2006convexity}
Peter~L Bartlett, Michael~I Jordan, and Jon~D McAuliffe.
\newblock Convexity, classification, and risk bounds.
\newblock \emph{Journal of the American Statistical Association}, 101\penalty0
  (473):\penalty0 138--156, 2006.

\bibitem[Berant et~al.(2013)Berant, Chou, Frostig, and
  Liang]{berant2013semantic}
Jonathan Berant, Andrew Chou, Roy Frostig, and Percy Liang.
\newblock Semantic parsing on freebase from question-answer pairs.
\newblock In \emph{Proceedings of the 2013 conference on Empirical Methods in
  Natural Language Processing}, pages 1533--1544, 2013.

\bibitem[Bisk et~al.(2020)Bisk, Zellers, Bras, Gao, and Choi]{PIQA2020}
Yonatan Bisk, Rowan Zellers, Ronan~Le Bras, Jianfeng Gao, and Yejin Choi.
\newblock {PIQA}: {R}easoning about {P}hysical {C}ommonsense in {N}atural
  {L}anguage.
\newblock In \emph{Thirty-Fourth AAAI Conference on Artificial Intelligence},
  2020.

\bibitem[Bradbury et~al.(2018)Bradbury, Frostig, Hawkins, Johnson, Leary,
  Maclaurin, Necula, Paszke, Vander{P}las, Wanderman-{M}ilne, and
  Zhang]{jax2018github}
James Bradbury, Roy Frostig, Peter Hawkins, Matthew~James Johnson, Chris Leary,
  Dougal Maclaurin, George Necula, Adam Paszke, Jake Vander{P}las, Skye
  Wanderman-{M}ilne, and Qiao Zhang.
\newblock {JAX}: composable transformations of {P}ython+{N}um{P}y programs.
\newblock 2018.
\newblock URL \url{http://github.com/google/jax}.

\bibitem[Bucil\v{a} et~al.(2006)Bucil\v{a}, Caruana, and
  Niculescu-Mizil]{Bucilua:2006}
Cristian Bucil\v{a}, Rich Caruana, and Alexandru Niculescu-Mizil.
\newblock Model {C}ompression.
\newblock In \emph{Proceedings of the 12th ACM SIGKDD International Conference
  on Knowledge Discovery and Data Mining}, KDD '06, pages 535--541, New York,
  NY, USA, 2006. ACM.

\bibitem[Burns et~al.(2023)Burns, Izmailov, Kirchner, Baker, Gao,
  Aschenbrenner, Chen, Ecoffet, Joglekar, Leike, et~al.]{burns2023weak}
Collin Burns, Pavel Izmailov, Jan~Hendrik Kirchner, Bowen Baker, Leo Gao,
  Leopold Aschenbrenner, Yining Chen, Adrien Ecoffet, Manas Joglekar, Jan
  Leike, et~al.
\newblock Weak-to-strong generalization: Eliciting strong capabilities with
  weak supervision.
\newblock \emph{arXiv preprint arXiv:2312.09390}, 2023.

\bibitem[Chen et~al.(2023)Chen, Zaharia, and Zou]{chen2023frugalgpt}
Lingjiao Chen, Matei Zaharia, and James Zou.
\newblock {FrugalGPT: How to Use Large Language Models While Reducing Cost and
  Improving Performance}.
\newblock \emph{arXiv preprint arXiv:2305.05176}, 2023.

\bibitem[Chowdhery et~al.(2022)Chowdhery, Narang, Devlin, Bosma, Mishra,
  Roberts, Barham, Chung, Sutton, Gehrmann, et~al.]{chowdhery2022_palm}
Aakanksha Chowdhery, Sharan Narang, Jacob Devlin, Maarten Bosma, Gaurav Mishra,
  Adam Roberts, Paul Barham, Hyung~Won Chung, Charles Sutton, Sebastian
  Gehrmann, et~al.
\newblock {PaLM: Scaling Language Modeling with Pathways}.
\newblock \emph{arXiv preprint arXiv:2204.02311}, 2022.

\bibitem[Clark et~al.(2019)Clark, Lee, Chang, Kwiatkowski, Collins, and
  Toutanova]{boolq2019}
Christopher Clark, Kenton Lee, Ming-Wei Chang, Tom Kwiatkowski, Michael
  Collins, and Kristina Toutanova.
\newblock {BoolQ: Exploring the Surprising Difficulty of Natural Yes/No
  Questions}.
\newblock In \emph{Proceedings of the 2019 Conference of the North {A}merican
  Chapter of the Association for Computational Linguistics: Human Language
  Technologies, Volume 1 (Long and Short Papers)}, pages 2924--2936,
  Minneapolis, Minnesota, June 2019. Association for Computational Linguistics.
\newblock \doi{10.18653/v1/N19-1300}.
\newblock URL \url{https://aclanthology.org/N19-1300}.

\bibitem[Clark et~al.(2020)Clark, Choi, Collins, Garrette, Kwiatkowski,
  Nikolaev, and Palomaki]{clark2020tydiqa}
Jonathan~H. Clark, Eunsol Choi, Michael Collins, Dan Garrette, Tom Kwiatkowski,
  Vitaly Nikolaev, and Jennimaria Palomaki.
\newblock {TyDi QA}: {A} {B}enchmark for {I}nformation-{S}eeking {Q}uestion
  {A}nswering in {T}ypologically {D}iverse {L}anguages.
\newblock \emph{Transactions of the Association for Computational Linguistics},
  8:\penalty0 454--470, 2020.

\bibitem[Clark et~al.(2018)Clark, Cowhey, Etzioni, Khot, Sabharwal, Schoenick,
  and Tafjord]{allenai:arc}
Peter Clark, Isaac Cowhey, Oren Etzioni, Tushar Khot, Ashish Sabharwal, Carissa
  Schoenick, and Oyvind Tafjord.
\newblock {Think you have Solved Question Answering? Try ARC, the AI2 Reasoning
  Challenge}.
\newblock \emph{arXiv:1803.05457v1}, 2018.

\bibitem[Computer(2023)]{together2023redpajama}
Together Computer.
\newblock {RedPajama: an Open Dataset for Training Large Language Models}.
\newblock October 2023.
\newblock URL \url{https://github.com/togethercomputer/RedPajama-Data}.

\bibitem[Cotter et~al.(2021)Cotter, Menon, Narasimhan, Rawat, Reddi, and
  Zhou]{cotter2021distilling}
Andrew Cotter, Aditya~Krishna Menon, Harikrishna Narasimhan, Ankit~Singh Rawat,
  Sashank~J Reddi, and Yichen Zhou.
\newblock Distilling {D}ouble {D}escent.
\newblock \emph{arXiv preprint arXiv:2102.06849}, 2021.

\bibitem[Dagan et~al.(2006)Dagan, Glickman, and Magnini]{rte2006}
Ido Dagan, Oren Glickman, and Bernardo Magnini.
\newblock {The PASCAL Recognising Textual Entailment Challenge}.
\newblock In \emph{Machine Learning Challenges. Evaluating Predictive
  Uncertainty, Visual Object Classification, and Recognising Tectual
  Entailment}, pages 177--190. Springer Berlin Heidelberg, 2006.
\newblock ISBN 978-3-540-33428-6.

\bibitem[Dao et~al.(2021{\natexlab{a}})Dao, Kamath, Syrgkanis, and
  Mackey]{Dao:2021}
Tri Dao, Govinda~M Kamath, Vasilis Syrgkanis, and Lester Mackey.
\newblock Knowledge distillation as semiparametric inference.
\newblock In \emph{International Conference on Learning Representations},
  2021{\natexlab{a}}.

\bibitem[Dao et~al.(2021{\natexlab{b}})Dao, Kamath, Syrgkanis, and
  Mackey]{dao2021knowledge}
Tri Dao, Govinda~M Kamath, Vasilis Syrgkanis, and Lester Mackey.
\newblock {Knowledge Distillation as Semiparametric Inference}.
\newblock In \emph{International Conference on Learning Representations},
  2021{\natexlab{b}}.
\newblock URL \url{https://openreview.net/forum?id=m4UCf24r0Y}.

\bibitem[de~Marneffe et~al.(2019)de~Marneffe, Simons, and Tonhauser]{cb2019}
Marie-Catherine de~Marneffe, Mandy Simons, and Judith Tonhauser.
\newblock The {C}ommitment{B}ank: Investigating projection in naturally
  occurring discourse.
\newblock \emph{Proceedings of Sinn und Bedeutung}, 23\penalty0 (2):\penalty0
  107--124, Jul. 2019.
\newblock \doi{10.18148/sub/2019.v23i2.601}.
\newblock URL
  \url{https://ojs.ub.uni-konstanz.de/sub/index.php/sub/article/view/601}.

\bibitem[Du et~al.(2022)Du, Huang, Dai, Tong, Lepikhin, Xu, Krikun, Zhou, Yu,
  Firat, Zoph, Fedus, Bosma, Zhou, Wang, Wang, Webster, Pellat, Robinson,
  Meier-Hellstern, Duke, Dixon, Zhang, Le, Wu, Chen, and Cui]{du2022glam}
Nan Du, Yanping Huang, Andrew~M Dai, Simon Tong, Dmitry Lepikhin, Yuanzhong Xu,
  Maxim Krikun, Yanqi Zhou, Adams~Wei Yu, Orhan Firat, Barret Zoph, Liam Fedus,
  Maarten~P Bosma, Zongwei Zhou, Tao Wang, Emma Wang, Kellie Webster, Marie
  Pellat, Kevin Robinson, Kathleen Meier-Hellstern, Toju Duke, Lucas Dixon, Kun
  Zhang, Quoc Le, Yonghui Wu, Zhifeng Chen, and Claire Cui.
\newblock {GL}a{M}: Efficient {S}caling of {L}anguage {M}odels with
  {M}ixture-of-{E}xperts.
\newblock In \emph{Proceedings of the 39th International Conference on Machine
  Learning}, volume 162 of \emph{Proceedings of Machine Learning Research},
  pages 5547--5569. PMLR, 17--23 Jul 2022.

\bibitem[Fan et~al.(2012)Fan, Grama, and Liu]{fan2012hoeffding}
Xiequan Fan, Ion Grama, and Quansheng Liu.
\newblock Hoeffding’s inequality for supermartingales.
\newblock \emph{Stochastic Processes and their Applications}, 122\penalty0
  (10):\penalty0 3545--3559, 2012.

\bibitem[Fu et~al.(2023)Fu, Peng, Ou, Sabharwal, and Khot]{fu2023specializing}
Yao Fu, Hao Peng, Litu Ou, Ashish Sabharwal, and Tushar Khot.
\newblock {Specializing Smaller Language Models towards Multi-Step Reasoning}.
\newblock \emph{arXiv preprint arXiv:2301.12726}, 2023.

\bibitem[Furlanello et~al.(2018)Furlanello, Lipton, Tschannen, Itti, and
  Anandkumar]{furlanello2018born}
Tommaso Furlanello, Zachary Lipton, Michael Tschannen, Laurent Itti, and Anima
  Anandkumar.
\newblock Born-{A}gain {N}eural {N}etworks.
\newblock In \emph{International conference on machine learning}, pages
  1607--1616. PMLR, 2018.

\bibitem[Gadre et~al.(2024)Gadre, Smyrnis, Shankar, Gururangan, Wortsman, Shao,
  Mercat, Fang, Li, Keh, Xin, Nezhurina, Vasiljevic, Jitsev, Soldaini, Dimakis,
  Ilharco, Koh, Song, Kollar, Carmon, Dave, Heckel, Muennighoff, and
  Schmidt]{Gadre:2024}
Samir~Yitzhak Gadre, Georgios Smyrnis, Vaishaal Shankar, Suchin Gururangan,
  Mitchell Wortsman, Rulin Shao, Jean Mercat, Alex Fang, Jeffrey Li, Sedrick
  Keh, Rui Xin, Marianna Nezhurina, Igor Vasiljevic, Jenia Jitsev, Luca
  Soldaini, Alexandros~G. Dimakis, Gabriel Ilharco, Pang~Wei Koh, Shuran Song,
  Thomas Kollar, Yair Carmon, Achal Dave, Reinhard Heckel, Niklas Muennighoff,
  and Ludwig Schmidt.
\newblock Language models scale reliably with over-training and on downstream
  tasks, 2024.

\bibitem[Gao et~al.(2020)Gao, Biderman, Black, Golding, Hoppe, Foster, Phang,
  He, Thite, Nabeshima, Presser, and Leahy]{gao2020thepile}
Leo Gao, Stella Biderman, Sid Black, Laurence Golding, Travis Hoppe, Charles
  Foster, Jason Phang, Horace He, Anish Thite, Noa Nabeshima, Shawn Presser,
  and Connor Leahy.
\newblock The {P}ile: An 800gb dataset of diverse text for language modeling.
\newblock \emph{arXiv preprint arXiv:2101.00027}, 2020.

\bibitem[Gemini-Team et~al.(2023)Gemini-Team, Anil, Borgeaud, Wu, Alayrac, Yu,
  Soricut, Schalkwyk, Dai, Hauth, et~al.]{team2023gemini}
Gemini-Team, Rohan Anil, Sebastian Borgeaud, Yonghui Wu, Jean-Baptiste Alayrac,
  Jiahui Yu, Radu Soricut, Johan Schalkwyk, Andrew~M Dai, Anja Hauth, et~al.
\newblock {Gemini: A Family of Highly Capable Multimodal Models}.
\newblock \emph{arXiv preprint arXiv:2312.11805}, 2023.

\bibitem[Gong et~al.(2019)Gong, He, Li, Qin, Wang, and Liu]{gong2019efficient}
Linyuan Gong, Di~He, Zhuohan Li, Tao Qin, Liwei Wang, and Tieyan Liu.
\newblock Efficient {T}raining of {BERT} by {P}rogressively {S}tacking.
\newblock In \emph{International Conference on Machine Learning}, pages
  2337--2346. PMLR, 2019.

\bibitem[Gordon et~al.(2012)Gordon, Kozareva, and Roemmele]{gordon2012copa}
Andrew Gordon, Zornitsa Kozareva, and Melissa Roemmele.
\newblock {S}em{E}val-2012 task 7: {C}hoice of {P}lausible {A}lternatives: {A}n
  {E}valuation of {C}ommonsense {C}ausal {R}easoning.
\newblock In \emph{*{SEM} 2012: The First Joint Conference on Lexical and
  Computational Semantics {--} Volume 1: Proceedings of the main conference and
  the shared task, and Volume 2: Proceedings of the Sixth International
  Workshop on Semantic Evaluation ({S}em{E}val 2012)}, pages 394--398,
  Montr{\'e}al, Canada, 7-8 June 2012. Association for Computational
  Linguistics.
\newblock URL \url{https://aclanthology.org/S12-1052}.

\bibitem[Gou et~al.(2021)Gou, Yu, Maybank, and Tao]{gou2021knowledge}
Jianping Gou, Baosheng Yu, Stephen~J Maybank, and Dacheng Tao.
\newblock {Knowledge Distillation: A Survey}.
\newblock \emph{International Journal of Computer Vision}, 129\penalty0
  (6):\penalty0 1789--1819, 2021.

\bibitem[Gu et~al.(2020)Gu, Liu, Yu, Li, Chen, and Han]{gu2020transformer}
Xiaotao Gu, Liyuan Liu, Hongkun Yu, Jing Li, Chen Chen, and Jiawei Han.
\newblock On the {T}ransformer {G}rowth for {P}rogressive {BERT} {T}raining.
\newblock \emph{arXiv:2010.12562}, 2020.

\bibitem[Gu et~al.(2024)Gu, Dong, Wei, and Huang]{gu2024minillm}
Yuxian Gu, Li~Dong, Furu Wei, and Minlie Huang.
\newblock Mini{LLM}: Knowledge {D}istillation of {L}arge {L}anguage {M}odels.
\newblock In \emph{The Twelfth International Conference on Learning
  Representations}, 2024.
\newblock URL \url{https://openreview.net/forum?id=5h0qf7IBZZ}.

\bibitem[Gupta et~al.(2024)Gupta, Narasimhan, Jitkrittum, Rawat, Menon, and
  Kumar]{gupta2024language}
Neha Gupta, Harikrishna Narasimhan, Wittawat Jitkrittum, Ankit~Singh Rawat,
  Aditya~Krishna Menon, and Sanjiv Kumar.
\newblock {Language Model Cascades: Token-Level Uncertainty And Beyond}.
\newblock In \emph{The Twelfth International Conference on Learning
  Representations}, 2024.
\newblock URL \url{https://openreview.net/forum?id=KgaBScZ4VI}.

\bibitem[Harutyunyan et~al.(2023)Harutyunyan, Rawat, Menon, Kim, and
  Kumar]{harutyunyan2023supervision}
Hrayr Harutyunyan, Ankit~Singh Rawat, Aditya~Krishna Menon, Seungyeon Kim, and
  Sanjiv Kumar.
\newblock Supervision complexity and its role in knowledge distillation.
\newblock In \emph{The Eleventh International Conference on Learning
  Representations}, 2023.
\newblock URL \url{https://openreview.net/forum?id=8jU7wy7N7mA}.

\bibitem[Hasan et~al.(2021)Hasan, Bhattacharjee, Islam, Mubasshir, Li, Kang,
  Rahman, and Shahriyar]{hasan2021xlsum}
Tahmid Hasan, Abhik Bhattacharjee, Md~Saiful Islam, Kazi Mubasshir, Yuan-Fang
  Li, Yong-Bin Kang, M~Sohel Rahman, and Rifat Shahriyar.
\newblock {XL-Sum: Large-Scale Multilingual Abstractive Summarization for 44
  Languages}.
\newblock In \emph{Findings of the Association for Computational Linguistics:
  ACL-IJCNLP 2021}, pages 4693--4703, 2021.

\bibitem[Hendrycks et~al.(2021)Hendrycks, Burns, Kadavath, Arora, Basart, Tang,
  Song, and Steinhardt]{hendrycks2021measuring}
Dan Hendrycks, Collin Burns, Saurav Kadavath, Akul Arora, Steven Basart, Eric
  Tang, Dawn Song, and Jacob Steinhardt.
\newblock {M}easuring {M}athematical {P}roblem {S}olving {W}ith the {MATH}
  {D}ataset.
\newblock \emph{NeurIPS}, 2021.

\bibitem[Hinton et~al.(2015)Hinton, Vinyals, and Dean]{Hinton:2015}
Geoffrey Hinton, Oriol Vinyals, and Jeff Dean.
\newblock Distilling the knowledge in a neural network, 2015.

\bibitem[Hoffmann et~al.(2022)Hoffmann, Borgeaud, Mensch, Buchatskaya, Cai,
  Rutherford, de~Las~Casas, Hendricks, Welbl, Clark, Hennigan, Noland,
  Millican, van~den Driessche, Damoc, Guy, Osindero, Simonyan, Elsen, Vinyals,
  Rae, and Sifre]{Hoffmann:2022}
Jordan Hoffmann, Sebastian Borgeaud, Arthur Mensch, Elena Buchatskaya, Trevor
  Cai, Eliza Rutherford, Diego de~Las~Casas, Lisa~Anne Hendricks, Johannes
  Welbl, Aidan Clark, Tom Hennigan, Eric Noland, Katie Millican, George van~den
  Driessche, Bogdan Damoc, Aurelia Guy, Simon Osindero, Karen Simonyan, Erich
  Elsen, Oriol Vinyals, Jack~W. Rae, and Laurent Sifre.
\newblock Training compute-optimal large language models.
\newblock In \emph{Proceedings of the 36th International Conference on Neural
  Information Processing Systems}, NeurIPS '22, Red Hook, NY, USA, 2022. Curran
  Associates Inc.
\newblock ISBN 9781713871088.

\bibitem[Jiang et~al.(2023)Jiang, Sablayrolles, Mensch, Bamford, Chaplot,
  Casas, Bressand, Lengyel, Lample, Saulnier, et~al.]{jiang2023mistral}
Albert~Q Jiang, Alexandre Sablayrolles, Arthur Mensch, Chris Bamford,
  Devendra~Singh Chaplot, Diego de~las Casas, Florian Bressand, Gianna Lengyel,
  Guillaume Lample, Lucile Saulnier, et~al.
\newblock Mistral 7{B}.
\newblock \emph{arXiv preprint arXiv:2310.06825}, 2023.

\bibitem[Jiao et~al.(2019)Jiao, Yin, Shang, Jiang, Chen, Li, Wang, and
  Liu]{jiao2019tinybert}
Xiaoqi Jiao, Yichun Yin, Lifeng Shang, Xin Jiang, Xiao Chen, Linlin Li, Fang
  Wang, and Qun Liu.
\newblock Tiny{BERT}: {D}istilling {BERT} for {N}atural {L}anguage
  {U}nderstanding.
\newblock \emph{arXiv preprint arXiv:1909.10351}, 2019.

\bibitem[Joshi et~al.(2017)Joshi, Choi, Weld, and
  Zettlemoyer]{joshi2017triviaqa}
Mandar Joshi, Eunsol Choi, Daniel~S Weld, and Luke Zettlemoyer.
\newblock {TriviaQA: A Large Scale Distantly Supervised Challenge Dataset for
  Reading Comprehension}.
\newblock In \emph{Proceedings of the 55th Annual Meeting of the Association
  for Computational Linguistics (Volume 1: Long Papers)}, pages 1601--1611,
  2017.

\bibitem[Kalimeris et~al.(2019)Kalimeris, Kaplun, Nakkiran, Edelman, Yang,
  Barak, and Zhang]{Kalimeris:2019}
Dimitris Kalimeris, Gal Kaplun, Preetum Nakkiran, Benjamin Edelman, Tristan
  Yang, Boaz Barak, and Haofeng Zhang.
\newblock Sgd on neural networks learns functions of increasing complexity.
\newblock In \emph{Advances in Neural Information Processing Systems},
  volume~32, 2019.
\newblock URL
  \url{https://proceedings.neurips.cc/paper_files/paper/2019/file/b432f34c5a997c8e7c806a895ecc5e25-Paper.pdf}.

\bibitem[Katharopoulos and Fleuret(2018)]{katharopoulos18a}
Angelos Katharopoulos and Francois Fleuret.
\newblock Not all samples are created equal: Deep learning with importance
  sampling.
\newblock In Jennifer Dy and Andreas Krause, editors, \emph{Proceedings of the
  35th International Conference on Machine Learning}, volume~80 of
  \emph{Proceedings of Machine Learning Research}, pages 2525--2534. PMLR,
  10--15 Jul 2018.
\newblock URL \url{https://proceedings.mlr.press/v80/katharopoulos18a.html}.

\bibitem[Khashabi et~al.(2018)Khashabi, Chaturvedi, Roth, Upadhyay, and
  Roth]{multirc2018}
Daniel Khashabi, Snigdha Chaturvedi, Michael Roth, Shyam Upadhyay, and Dan
  Roth.
\newblock {Looking Beyond the Surface: A Challenge Set for Reading
  Comprehension over Multiple Sentences}.
\newblock In \emph{Proceedings of the 2018 Conference of the North {A}merican
  Chapter of the Association for Computational Linguistics: Human Language
  Technologies, Volume 1 (Long Papers)}, pages 252--262, New Orleans,
  Louisiana, June 2018. Association for Computational Linguistics.
\newblock \doi{10.18653/v1/N18-1023}.
\newblock URL \url{https://aclanthology.org/N18-1023}.

\bibitem[Kim and Rush(2016)]{kim2016sequence}
Yoon Kim and Alexander~M Rush.
\newblock {Sequence-Level Knowledge Distillation}.
\newblock In \emph{Proceedings of the 2016 Conference on Empirical Methods in
  Natural Language Processing}, pages 1317--1327, 2016.

\bibitem[Kudo and Richardson(2018)]{kudo2018sentencepiece}
Taku Kudo and John Richardson.
\newblock {S}entence{P}iece: A simple and language independent subword
  tokenizer and detokenizer for {N}eural {T}ext {P}rocessing.
\newblock In \emph{Proceedings of the 2018 Conference on Empirical Methods in
  Natural Language Processing: System Demonstrations}, pages 66--71.
  Association for Computational Linguistics, November 2018.

\bibitem[Ladhak et~al.(2020)Ladhak, Durmus, Cardie, and
  Mckeown]{ladhak2020wikilingua}
Faisal Ladhak, Esin Durmus, Claire Cardie, and Kathleen Mckeown.
\newblock {WikiLingua: A New Benchmark Dataset for Cross-Lingual Abstractive
  Summarization}.
\newblock In \emph{Findings of the Association for Computational Linguistics:
  EMNLP 2020}, pages 4034--4048, 2020.

\bibitem[Lai et~al.(2017)Lai, Xie, Liu, Yang, and Hovy]{lai2017race}
Guokun Lai, Qizhe Xie, Hanxiao Liu, Yiming Yang, and Eduard Hovy.
\newblock {RACE}: Large-scale {R}e{A}ding {C}omprehension {D}ataset {F}rom
  {E}xaminations.
\newblock In \emph{Proceedings of the 2017 Conference on Empirical Methods in
  Natural Language Processing}, pages 785--794. Association for Computational
  Linguistics, September 2017.

\bibitem[Lee et~al.(2020)Lee, Hwang, and Cho]{squad2}
Gyeongbok Lee, Seung-won Hwang, and Hyunsouk Cho.
\newblock {SQ}u{AD}2-{CR}: {S}emi-supervised {A}nnotation for {C}ause and
  {R}ationales for {U}nanswerability in {SQ}u{AD} 2.0.
\newblock In \emph{Proceedings of the 12th Language Resources and Evaluation
  Conference}, pages 5425--5432, Marseille, France, May 2020. European Language
  Resources Association.
\newblock ISBN 979-10-95546-34-4.
\newblock URL \url{https://aclanthology.org/2020.lrec-1.667}.

\bibitem[Lee et~al.(2019)Lee, Chang, and Toutanova]{lee2019latent_nq_open}
Kenton Lee, Ming-Wei Chang, and Kristina Toutanova.
\newblock {Latent Retrieval for Weakly Supervised Open Domain Question
  Answering}.
\newblock In \emph{Proceedings of the 57th Annual Meeting of the Association
  for Computational Linguistics}, pages 6086--6096, 01 2019.
\newblock \doi{10.18653/v1/P19-1612}.

\bibitem[Levesque et~al.(2012)Levesque, Davis, and Morgenstern]{winograd2012}
{Hector J.} Levesque, Ernest Davis, and Leora Morgenstern.
\newblock {T}he {W}inograd {S}chema {C}hallenge.
\newblock In \emph{13th International Conference on the Principles of Knowledge
  Representation and Reasoning, KR 2012}, Proceedings of the International
  Conference on Knowledge Representation and Reasoning, pages 552--561.
  Institute of Electrical and Electronics Engineers Inc., 2012.
\newblock ISBN 9781577355601.

\bibitem[Lin et~al.(2024)Lin, Gou, Gong, Liu, Shen, Xu, Lin, Yang, Jiao, Duan,
  and Chen]{lin2024rho1}
Zhenghao Lin, Zhibin Gou, Yeyun Gong, Xiao Liu, Yelong Shen, Ruochen Xu, Chen
  Lin, Yujiu Yang, Jian Jiao, Nan Duan, and Weizhu Chen.
\newblock {Rho-1: Not All Tokens Are What You Need}.
\newblock \emph{arXiv preprint arXiv:2404.07965}, 2024.

\bibitem[Loshchilov and Hutter(2017)]{loshchilov2017sgdr}
Ilya Loshchilov and Frank Hutter.
\newblock {SGDR}: {S}tochastic {G}radient {D}escent with {W}arm {R}estarts.
\newblock In \emph{International Conference on Learning Representations}, 2017.
\newblock URL \url{https://openreview.net/forum?id=Skq89Scxx}.

\bibitem[Lotfi et~al.(2024{\natexlab{a}})Lotfi, Finzi, Kuang, Rudner, Goldblum,
  and Wilson]{lotfi24-seq-bound}
Sanae Lotfi, Marc~Anton Finzi, Yilun Kuang, Tim G.~J. Rudner, Micah Goldblum,
  and Andrew~Gordon Wilson.
\newblock {Non-Vacuous Generalization Bounds for Large Language Models}.
\newblock In \emph{Proceedings of the 41st International Conference on Machine
  Learning}, volume 235 of \emph{Proceedings of Machine Learning Research},
  pages 32801--32818. PMLR, 21--27 Jul 2024{\natexlab{a}}.
\newblock URL \url{https://proceedings.mlr.press/v235/lotfi24a.html}.

\bibitem[Lotfi et~al.(2024{\natexlab{b}})Lotfi, Kuang, Amos, Goldblum, Finzi,
  and Wilson]{lotfi2024-token-bound}
Sanae Lotfi, Yilun Kuang, Brandon Amos, Micah Goldblum, Marc Finzi, and
  Andrew~Gordon Wilson.
\newblock {Unlocking Tokens as Data Points for Generalization Bounds on Larger
  Language Models}.
\newblock \emph{arXiv preprint arXiv:2407.18158}, 2024{\natexlab{b}}.

\bibitem[Maurer and Pontil(2009)]{maurer2009empirical}
Andreas Maurer and Massimiliano Pontil.
\newblock {Empirical Bernstein Bounds and Sample Variance Penalization}.
\newblock \emph{arXiv preprint arXiv:0907.3740}, 2009.

\bibitem[Menon et~al.(2021)Menon, Rawat, Reddi, Kim, and
  Kumar]{menon21statistical}
Aditya~K Menon, Ankit~Singh Rawat, Sashank Reddi, Seungyeon Kim, and Sanjiv
  Kumar.
\newblock {A Statistical Perspective on Distillation}.
\newblock In Marina Meila and Tong Zhang, editors, \emph{Proceedings of the
  38th International Conference on Machine Learning}, volume 139 of
  \emph{Proceedings of Machine Learning Research}, pages 7632--7642. PMLR,
  18--24 Jul 2021.
\newblock URL \url{https://proceedings.mlr.press/v139/menon21a.html}.

\bibitem[Mihaylov et~al.(2018)Mihaylov, Clark, Khot, and
  Sabharwal]{OpenBookQA2018}
Todor Mihaylov, Peter Clark, Tushar Khot, and Ashish Sabharwal.
\newblock {Can a Suit of Armor Conduct Electricity? A New Dataset for Open Book
  Question Answering}.
\newblock In \emph{EMNLP}, 2018.

\bibitem[Mindermann et~al.(2022)Mindermann, Brauner, Razzak, Sharma, Kirsch,
  Xu, Höltgen, Gomez, Morisot, Farquhar, and Gal]{mindermann2022prioritized}
Sören Mindermann, Jan Brauner, Muhammed Razzak, Mrinank Sharma, Andreas
  Kirsch, Winnie Xu, Benedikt Höltgen, Aidan~N. Gomez, Adrien Morisot,
  Sebastian Farquhar, and Yarin Gal.
\newblock {Prioritized Training on Points that are Learnable, Worth Learning,
  and Not Yet Learnt}.
\newblock In \emph{International Conference on Machine Learning (ICML)}, 2022.

\bibitem[Mitchell et~al.(2024)Mitchell, Rafailov, Sharma, Finn, and
  Manning]{mitchell2024an}
Eric Mitchell, Rafael Rafailov, Archit Sharma, Chelsea Finn, and Christopher~D
  Manning.
\newblock {An Emulator for Fine-tuning Large Language Models using Small
  Language Models}.
\newblock In \emph{The Twelfth International Conference on Learning
  Representations}, 2024.
\newblock URL \url{https://openreview.net/forum?id=Eo7kv0sllr}.

\bibitem[Mobahi et~al.(2020)Mobahi, Farajtabar, and Bartlett]{mobahi2020self}
Hossein Mobahi, Mehrdad Farajtabar, and Peter Bartlett.
\newblock {Self-Distillation Amplifies Regularization in Hilbert Space}.
\newblock \emph{Advances in Neural Information Processing Systems},
  33:\penalty0 3351--3361, 2020.

\bibitem[Mostafazadeh et~al.(2016)Mostafazadeh, Chambers, He, Parikh, Batra,
  Vanderwende, Kohli, and Allen]{storycloze2016}
Nasrin Mostafazadeh, Nathanael Chambers, Xiaodong He, Devi Parikh, Dhruv Batra,
  Lucy Vanderwende, Pushmeet Kohli, and James Allen.
\newblock A {C}orpus and {C}loze {E}valuation for {D}eeper {U}nderstanding of
  {C}ommonsense {S}tories.
\newblock In \emph{Proceedings of the 2016 Conference of the North {A}merican
  Chapter of the Association for Computational Linguistics: Human Language
  Technologies}, pages 839--849, San Diego, California, June 2016. Association
  for Computational Linguistics.
\newblock \doi{10.18653/v1/N16-1098}.
\newblock URL \url{https://aclanthology.org/N16-1098}.

\bibitem[Nagarajan et~al.(2023)Nagarajan, Menon, Bhojanapalli, Mobahi, and
  Kumar]{nagarajan2023student}
Vaishnavh Nagarajan, Aditya~K Menon, Srinadh Bhojanapalli, Hossein Mobahi, and
  Sanjiv Kumar.
\newblock On student-teacher deviations in distillation: does it pay to
  disobey?
\newblock \emph{Advances in Neural Information Processing Systems},
  36:\penalty0 5961--6000, 2023.

\bibitem[Nallapati et~al.(2016)Nallapati, Zhou, dos Santos, Gul{\c{c}}ehre, and
  Xiang]{nallapati2016-abstractive}
Ramesh Nallapati, Bowen Zhou, Cicero dos Santos, {\c{C}}a{\u{g}}lar
  Gul{\c{c}}ehre, and Bing Xiang.
\newblock {Abstractive Text Summarization using Sequence-to-sequence RNNs and
  Beyond}.
\newblock In Stefan Riezler and Yoav Goldberg, editors, \emph{Proceedings of
  the 20th {SIGNLL} Conference on Computational Natural Language Learning},
  pages 280--290, Berlin, Germany, August 2016. Association for Computational
  Linguistics.
\newblock \doi{10.18653/v1/K16-1028}.
\newblock URL \url{https://aclanthology.org/K16-1028}.

\bibitem[Narayan et~al.(2018)Narayan, Cohen, and Lapata]{narayan2018dont_xsum}
Shashi Narayan, Shay~B. Cohen, and Mirella Lapata.
\newblock {Don{'}t Give Me the Details, Just the Summary! Topic-Aware
  Convolutional Neural Networks for Extreme Summarization}.
\newblock In \emph{Proceedings of the 2018 Conference on Empirical Methods in
  Natural Language Processing}, pages 1797--1807. Association for Computational
  Linguistics, October-November 2018.
\newblock \doi{10.18653/v1/D18-1206}.
\newblock URL \url{https://aclanthology.org/D18-1206}.

\bibitem[Nie et~al.(2020)Nie, Williams, Dinan, Bansal, Weston, and
  Kiela]{nie2020-adversarial}
Yixin Nie, Adina Williams, Emily Dinan, Mohit Bansal, Jason Weston, and Douwe
  Kiela.
\newblock Adversarial {NLI}: {A New Benchmark for Natural Language
  Understanding}.
\newblock In \emph{Proceedings of the 58th Annual Meeting of the Association
  for Computational Linguistics}, pages 4885--4901, Online, July 2020.
  Association for Computational Linguistics.
\newblock \doi{10.18653/v1/2020.acl-main.441}.
\newblock URL \url{https://aclanthology.org/2020.acl-main.441}.

\bibitem[OpenAI(2023)]{gpt4_techreport}
OpenAI.
\newblock {GPT-4 Technical Report}.
\newblock \emph{arXiv preprint arXiv:2303.08774}, 2023.

\bibitem[Paperno et~al.(2016)Paperno, Kruszewski, Lazaridou, Pham, Bernardi,
  Pezzelle, Baroni, Boleda, and Fern{\'a}ndez]{paperno2016lambada}
Denis Paperno, Germ{\'a}n Kruszewski, Angeliki Lazaridou, Ngoc~Quan Pham,
  Raffaella Bernardi, Sandro Pezzelle, Marco Baroni, Gemma Boleda, and Raquel
  Fern{\'a}ndez.
\newblock The {LAMBADA} dataset: {W}ord prediction requiring a broad discourse
  context.
\newblock In \emph{Proceedings of the 54th Annual Meeting of the Association
  for Computational Linguistics (Volume 1: Long Papers)}, pages 1525--1534,
  Berlin, Germany, August 2016. Association for Computational Linguistics.
\newblock \doi{10.18653/v1/P16-1144}.
\newblock URL \url{https://aclanthology.org/P16-1144}.

\bibitem[Peng et~al.(2023)Peng, Li, He, Galley, and Gao]{peng2023instruction}
Baolin Peng, Chunyuan Li, Pengcheng He, Michel Galley, and Jianfeng Gao.
\newblock Instruction {T}uning with {GPT}-4.
\newblock \emph{arXiv preprint arXiv:2304.03277}, 2023.

\bibitem[Phuong and Lampert(2019)]{phuong19understanding}
Mary Phuong and Christoph Lampert.
\newblock {Towards Understanding Knowledge Distillation}.
\newblock In \emph{Proceedings of the 36th International Conference on Machine
  Learning}, volume~97 of \emph{Proceedings of Machine Learning Research},
  pages 5142--5151. PMLR, 09--15 Jun 2019.
\newblock URL \url{https://proceedings.mlr.press/v97/phuong19a.html}.

\bibitem[Pilehvar and Camacho-Collados(2018)]{wic2018}
Mohammad~Taher Pilehvar and Jos{\'e} Camacho-Collados.
\newblock Wi{C}: 10,000 {E}xample {P}airs for {E}valuating
  {C}ontext-{S}ensitive {R}epresentations.
\newblock \emph{arXiv}, abs/1808.09121, 2018.

\bibitem[Pires and Szepesv{\'a}ri(2016)]{pires2016multiclass}
Bernardo~{\'A}vila Pires and Csaba Szepesv{\'a}ri.
\newblock {Multiclass Classification Calibration Functions}.
\newblock \emph{arXiv preprint arXiv:1609.06385}, 2016.

\bibitem[Qin et~al.(2022)Qin, Lin, Yi, Zhang, Han, Zhang, Su, Liu, Li, Sun, and
  Zhou]{qin2022ki}
Yujia Qin, Yankai Lin, Jing Yi, Jiajie Zhang, Xu~Han, Zhengyan Zhang, Yusheng
  Su, Zhiyuan Liu, Peng Li, Maosong Sun, and Jie Zhou.
\newblock {Knowledge Inheritance for Pre-trained Language Models}.
\newblock In \emph{Proceedings of the 2022 Conference of the North American
  Chapter of the Association for Computational Linguistics: Human Language
  Technologies}, July 2022.

\bibitem[Reddi et~al.(2023)Reddi, Miryoosefi, Karp, Krishnan, Kale, Kim, and
  Kumar]{reddi2023stacking}
Sashank~J. Reddi, Sobhan Miryoosefi, Stefani Karp, Shankar Krishnan, Satyen
  Kale, Seungyeon Kim, and Sanjiv Kumar.
\newblock {Efficient Training of Language Models using Few-Shot Learning}.
\newblock In \emph{Proceedings of the 40th International Conference on Machine
  Learning}, volume 202 of \emph{Proceedings of Machine Learning Research},
  pages 14553--14568. PMLR, 2023.

\bibitem[Refinetti et~al.(2023)Refinetti, Ingrosso, and Goldt]{refinetti23a}
Maria Refinetti, Alessandro Ingrosso, and Sebastian Goldt.
\newblock Neural networks trained with {SGD} learn distributions of increasing
  complexity.
\newblock In Andreas Krause, Emma Brunskill, Kyunghyun Cho, Barbara Engelhardt,
  Sivan Sabato, and Jonathan Scarlett, editors, \emph{Proceedings of the 40th
  International Conference on Machine Learning}, volume 202 of
  \emph{Proceedings of Machine Learning Research}, pages 28843--28863. PMLR,
  23--29 Jul 2023.
\newblock URL \url{https://proceedings.mlr.press/v202/refinetti23a.html}.

\bibitem[Ren et~al.(2022{\natexlab{a}})Ren, Guo, and Sutherland]{Ren:2022}
Yi~Ren, Shangmin Guo, and Danica~J. Sutherland.
\newblock Better supervisory signals by observing learning paths.
\newblock In \emph{The Tenth International Conference on Learning
  Representations, {ICLR} 2022, Virtual Event, April 25-29, 2022}.
  OpenReview.net, 2022{\natexlab{a}}.

\bibitem[Ren et~al.(2022{\natexlab{b}})Ren, Guo, and Sutherland]{ren2022better}
Yi~Ren, Shangmin Guo, and Danica~J. Sutherland.
\newblock {Better Supervisory Signals by Observing Learning Paths}.
\newblock In \emph{International Conference on Learning Representations},
  2022{\natexlab{b}}.
\newblock URL \url{https://openreview.net/forum?id=Iog0djAdbHj}.

\bibitem[Roberts et~al.(2022)Roberts, Chung, Levskaya, Mishra, Bradbury, Andor,
  Narang, Lester, Gaffney, Mohiuddin, Hawthorne, Lewkowycz, Salcianu, van Zee,
  Austin, Goodman, Soares, Hu, Tsvyashchenko, Chowdhery, Bastings, Bulian,
  Garcia, Ni, Chen, Kenealy, Clark, Lee, Garrette, Lee-Thorp, Raffel, Shazeer,
  Ritter, Bosma, Passos, Maitin-Shepard, Fiedel, Omernick, Saeta, Sepassi,
  Spiridonov, Newlan, and Gesmundo]{roberts2022t5x}
Adam Roberts, Hyung~Won Chung, Anselm Levskaya, Gaurav Mishra, James Bradbury,
  Daniel Andor, Sharan Narang, Brian Lester, Colin Gaffney, Afroz Mohiuddin,
  Curtis Hawthorne, Aitor Lewkowycz, Alex Salcianu, Marc van Zee, Jacob Austin,
  Sebastian Goodman, Livio~Baldini Soares, Haitang Hu, Sasha Tsvyashchenko,
  Aakanksha Chowdhery, Jasmijn Bastings, Jannis Bulian, Xavier Garcia, Jianmo
  Ni, Andrew Chen, Kathleen Kenealy, Jonathan~H. Clark, Stephan Lee, Dan
  Garrette, James Lee-Thorp, Colin Raffel, Noam Shazeer, Marvin Ritter, Maarten
  Bosma, Alexandre Passos, Jeremy Maitin-Shepard, Noah Fiedel, Mark Omernick,
  Brennan Saeta, Ryan Sepassi, Alexander Spiridonov, Joshua Newlan, and Andrea
  Gesmundo.
\newblock {S}caling {U}p {M}odels and {D}ata with $\texttt{t5x}$ and
  $\texttt{seqio}$.
\newblock \emph{arXiv:2203.17189}, 2022.
\newblock URL \url{https://arxiv.org/abs/2203.17189}.

\bibitem[Ross(1983)]{Ross:1983}
S.~M. Ross.
\newblock \emph{Stochastic Processes}.
\newblock Wiley Series in Probability and Mathematical Statistics, 1983.

\bibitem[Rubungo et~al.(2023)Rubungo, Arnold, Rand, and Dieng]{rubungo2023llm}
Andre~Niyongabo Rubungo, Craig Arnold, Barry~P Rand, and Adji~Bousso Dieng.
\newblock {LLM-Prop: Predicting Physical And Electronic Properties Of
  Crystalline Solids From Their Text Descriptions}.
\newblock \emph{arXiv preprint arXiv:2310.14029}, 2023.

\bibitem[Safaryan et~al.(2023)Safaryan, Peste, and
  Alistarh]{safaryan2023variance}
Mher Safaryan, Alexandra Peste, and Dan Alistarh.
\newblock {Knowledge Distillation Performs Partial Variance Reduction}.
\newblock In \emph{Advances in Neural Information Processing Systems},
  volume~36, pages 75229--75258. Curran Associates, Inc., 2023.
\newblock URL
  \url{https://proceedings.neurips.cc/paper_files/paper/2023/file/ee1f0da706829d7f198eac0edaacc338-Paper-Conference.pdf}.

\bibitem[Sakaguchi et~al.(2020)Sakaguchi, Bras, Bhagavatula, and
  Choi]{Winogrande2020}
Keisuke Sakaguchi, Ronan~Le Bras, Chandra Bhagavatula, and Yejin Choi.
\newblock {WinoGrande: An Adversarial Winograd Schema Challenge at Scale}.
\newblock In \emph{{AAAI}}, pages 8732--8740. {AAAI} Press, 2020.

\bibitem[Samragh et~al.(2024)Samragh, Mirzadeh, Vahid, Faghri, Cho, Nabi, Naik,
  and Farajtabar]{samragh2024scaling}
Mohammad Samragh, Iman Mirzadeh, Keivan~Alizadeh Vahid, Fartash Faghri, Minsik
  Cho, Moin Nabi, Devang Naik, and Mehrdad Farajtabar.
\newblock {Scaling Smart: Accelerating Large Language Model Pre-training with
  Small Model Initialization}, 2024.

\bibitem[Sanh et~al.(2019)Sanh, Debut, Chaumond, and Wolf]{sanh2019distilbert}
Victor Sanh, Lysandre Debut, Julien Chaumond, and Thomas Wolf.
\newblock Distil{BERT}, a distilled version of {BERT}: smaller, faster, cheaper
  and lighter.
\newblock \emph{arXiv preprint arXiv:1910.01108}, 2019.

\bibitem[Shazeer(2019)]{shazeer2019mqa}
Noam Shazeer.
\newblock {Fast Transformer Decoding: One Write-Head is All You Need}.
\newblock \emph{arXiv preprint arXiv:1911.02150}, 2019.
\newblock URL \url{http://arxiv.org/abs/1911.02150}.

\bibitem[Shazeer and Stern(2018)]{shazeer2018adafactor}
Noam Shazeer and Mitchell Stern.
\newblock {Adafactor: Adaptive Learning Rates with Sublinear Memory Cost}.
\newblock In \emph{International Conference on Machine Learning}, pages
  4596--4604. PMLR, 2018.

\bibitem[Smith et~al.(2022)Smith, Patwary, Norick, LeGresley, Rajbhandari,
  Casper, Liu, Prabhumoye, Zerveas, Korthikanti, et~al.]{smith2022using}
Shaden Smith, Mostofa Patwary, Brandon Norick, Patrick LeGresley, Samyam
  Rajbhandari, Jared Casper, Zhun Liu, Shrimai Prabhumoye, George Zerveas,
  Vijay Korthikanti, et~al.
\newblock {Using DeepSpeed and Megatron to Train Megatron-Turing NLG 530B, A
  Large-Scale Generative Language Model}.
\newblock \emph{arXiv preprint arXiv:2201.11990}, 2022.

\bibitem[Steinwart(2007)]{steinwart2007compare}
Ingo Steinwart.
\newblock {How to Compare Different Loss Functions and Their Risks}.
\newblock \emph{Constructive Approximation}, 26\penalty0 (2):\penalty0
  225--287, 2007.

\bibitem[Sun et~al.(2019)Sun, Cheng, Gan, and Liu]{sun2019patient}
Siqi Sun, Yu~Cheng, Zhe Gan, and Jingjing Liu.
\newblock Patient {K}nowledge {D}istillation for {BERT} {M}odel {C}ompression.
\newblock \emph{arXiv preprint arXiv:1908.09355}, 2019.

\bibitem[Taori et~al.(2023)Taori, Gulrajani, Zhang, Dubois, Li, Guestrin,
  Liang, and Hashimoto]{rohan2023alpaca}
Rohan Taori, Ishaan Gulrajani, Tianyi Zhang, Yann Dubois, Xuechen Li, Carlos
  Guestrin, Percy Liang, and Tatsunori~B. Hashimoto.
\newblock {Stanford Alpaca: An Instruction-following LLaMA model}.
\newblock \url{https://github.com/tatsu-lab/stanford_alpaca}, 2023.

\bibitem[Tay et~al.(2023)Tay, Dehghani, Tran, Garcia, Wei, Wang, Chung,
  Shakeri, Bahri, Schuster, Zheng, Zhou, Houlsby, and Metzler]{tay2023ul2}
Yi~Tay, Mostafa Dehghani, Vinh~Q. Tran, Xavier Garcia, Jason Wei, Xuezhi Wang,
  Hyung~Won Chung, Siamak Shakeri, Dara Bahri, Tal Schuster, Huaixiu~Steven
  Zheng, Denny Zhou, Neil Houlsby, and Donald Metzler.
\newblock {UL2: Unifying Language Learning Paradigms}.
\newblock \emph{arXiv preprintarXiv:2205.05131}, 2023.

\bibitem[Thoppilan et~al.(2022)Thoppilan, De~Freitas, Hall, Shazeer,
  Kulshreshtha, Cheng, Jin, Bos, Baker, Du, et~al.]{thoppilan2022lamda}
Romal Thoppilan, Daniel De~Freitas, Jamie Hall, Noam Shazeer, Apoorv
  Kulshreshtha, Heng-Tze Cheng, Alicia Jin, Taylor Bos, Leslie Baker, Yu~Du,
  et~al.
\newblock {LaMDA: Language Models for Dialog Applications}.
\newblock \emph{arXiv preprint arXiv:2201.08239}, 2022.

\bibitem[Tirumala et~al.(2024)Tirumala, Simig, Aghajanyan, and
  Morcos]{tirumala2024d4}
Kushal Tirumala, Daniel Simig, Armen Aghajanyan, and Ari Morcos.
\newblock {D4: Improving LLM pretraining via document de-duplication and
  diversification}.
\newblock \emph{Advances in Neural Information Processing Systems}, 36, 2024.

\bibitem[Touvron et~al.(2023)Touvron, Lavril, Izacard, Martinet, Lachaux,
  Lacroix, Rozi{\`e}re, Goyal, Hambro, Azhar, et~al.]{touvron2023_llama}
Hugo Touvron, Thibaut Lavril, Gautier Izacard, Xavier Martinet, Marie-Anne
  Lachaux, Timoth{\'e}e Lacroix, Baptiste Rozi{\`e}re, Naman Goyal, Eric
  Hambro, Faisal Azhar, et~al.
\newblock {LLaMA: Open and Efficient Foundation Language Models}.
\newblock \emph{arXiv preprint arXiv:2302.13971}, 2023.

\bibitem[Trockman and Kolter(2023)]{trockman2023mimetic}
Asher Trockman and J~Zico Kolter.
\newblock {Mimetic Initialization of Self-Attention Layers}.
\newblock In \emph{International Conference on Machine Learning}, pages
  34456--34468. PMLR, 2023.

\bibitem[Turc et~al.(2019)Turc, Chang, Lee, and Toutanova]{turc2019well}
Iulia Turc, Ming-Wei Chang, Kenton Lee, and Kristina Toutanova.
\newblock {Well-Read Students Learn Better: On the Importance of Pre-training
  Compact Models}.
\newblock \emph{arXiv preprint arXiv:1908.08962}, 2019.

\bibitem[Vaswani et~al.(2017)Vaswani, Shazeer, Parmar, Uszkoreit, Jones, Gomez,
  Kaiser, and Polosukhin]{vaswani2017attention}
Ashish Vaswani, Noam Shazeer, Niki Parmar, Jakob Uszkoreit, Llion Jones,
  Aidan~N Gomez, {\L}ukasz Kaiser, and Illia Polosukhin.
\newblock {Attention Is All You Need}.
\newblock \emph{Advances in Neural Information Processing Systems}, 30, 2017.

\bibitem[Wang et~al.(2019)Wang, Pruksachatkun, Nangia, Singh, Michael, Hill,
  Levy, and Bowman]{superglue2019}
Alex Wang, Yada Pruksachatkun, Nikita Nangia, Amanpreet Singh, Julian Michael,
  Felix Hill, Omer Levy, and Samuel Bowman.
\newblock Super{GLUE}: {A} {S}tickier benchmark for general-purpose language
  understanding systems.
\newblock In \emph{Advances in Neural Information Processing Systems},
  volume~32. Curran Associates, Inc., 2019.
\newblock URL
  \url{https://proceedings.neurips.cc/paper/2019/file/4496bf24afe7fab6f046bf4923da8de6-Paper.pdf}.

\bibitem[Wang et~al.(2020)Wang, Wei, Dong, Bao, Yang, and Zhou]{wang2020minilm}
Wenhui Wang, Furu Wei, Li~Dong, Hangbo Bao, Nan Yang, and Ming Zhou.
\newblock {MiniLM: Deep Self-Attention Distillation for Task-Agnostic
  Compression of Pre-Trained Transformers}.
\newblock \emph{Advances in Neural Information Processing Systems},
  33:\penalty0 5776--5788, 2020.

\bibitem[Wang et~al.(2023)Wang, Su, Lu, Xie, Liu, Yuan, Lin, Sun, and
  Yang]{wang2023lemon}
Yite Wang, Jiahao Su, Hanlin Lu, Cong Xie, Tianyi Liu, Jianbo Yuan, Haibin Lin,
  Ruoyu Sun, and Hongxia Yang.
\newblock {LEMON: Lossless model expansion}.
\newblock \emph{arXiv preprint arXiv:2310.07999}, 2023.

\bibitem[Wen et~al.(2023)Wen, Li, Du, and Mou]{wen2023fdivergence}
Yuqiao Wen, Zichao Li, Wenyu Du, and Lili Mou.
\newblock {f-Divergence Minimization for Sequence-Level Knowledge
  Distillation}.
\newblock In \emph{Proceedings of the 61st Annual Meeting of the Association
  for Computational Linguistics (Volume 1: Long Papers)}, pages 10817--10834,
  Toronto, Canada, July 2023. Association for Computational Linguistics.
\newblock \doi{10.18653/v1/2023.acl-long.605}.
\newblock URL \url{https://aclanthology.org/2023.acl-long.605}.

\bibitem[Xie et~al.(2020)Xie, Luong, Hovy, and Le]{xie2020self}
Qizhe Xie, Minh-Thang Luong, Eduard Hovy, and Quoc~V Le.
\newblock {Self-training with Noisy Student improves ImageNet classification}.
\newblock In \emph{Proceedings of the IEEE/CVF conference on computer vision
  and pattern recognition}, pages 10687--10698, 2020.

\bibitem[Xu et~al.(2024)Xu, Li, Tao, Shen, Cheng, Li, Xu, Tao, and
  Zhou]{xu2024survey}
Xiaohan Xu, Ming Li, Chongyang Tao, Tao Shen, Reynold Cheng, Jinyang Li, Can
  Xu, Dacheng Tao, and Tianyi Zhou.
\newblock {A Survey on Knowledge Distillation of Large Language Models}.
\newblock \emph{arXiv preprint arXiv:2402.13116}, 2024.

\bibitem[Xu et~al.(2020)Xu, Xu, Qian, Li, and Jin]{xu2020towards}
Yi~Xu, Yuanhong Xu, Qi~Qian, Hao Li, and Rong Jin.
\newblock Towards {U}nderstanding {L}abel {S}moothing.
\newblock \emph{arXiv preprint arXiv:2006.11653}, 2020.

\bibitem[Yang et~al.(2021)Yang, Hu, Babuschkin, Sidor, Liu, Farhi, Ryder,
  Pachocki, Chen, and Gao]{yang2021tuning}
Ge~Yang, Edward Hu, Igor Babuschkin, Szymon Sidor, Xiaodong Liu, David Farhi,
  Nick Ryder, Jakub Pachocki, Weizhu Chen, and Jianfeng Gao.
\newblock Tuning {L}arge {N}eural {N}etworks via {Z}ero-{S}hot {H}yperparameter
  {T}ransfer.
\newblock \emph{Advances in Neural Information Processing Systems},
  34:\penalty0 17084--17097, 2021.

\bibitem[Yang et~al.(2024)Yang, Mishra, Chiang, and
  Mirzasoleiman]{yang2024smalltolarge}
Yu~Yang, Siddhartha Mishra, Jeffrey~N Chiang, and Baharan Mirzasoleiman.
\newblock {SmallToLarge (S2L): Scalable Data Selection for Fine-tuning Large
  Language Models by Summarizing Training Trajectories of Small Models}.
\newblock \emph{arXiv preprint arXiv:2403.07384}, 2024.

\bibitem[Yuan et~al.(2020)Yuan, Tay, Li, Wang, and Feng]{yuan2020revisiting}
Li~Yuan, Francis~EH Tay, Guilin Li, Tao Wang, and Jiashi Feng.
\newblock {Revisiting Knowledge Distillation via Label Smoothing
  Regularization}.
\newblock In \emph{Proceedings of the IEEE/CVF conference on computer vision
  and pattern recognition}, pages 3903--3911, 2020.

\bibitem[Yue et~al.(2024)Yue, Zhao, Zhang, Du, and Yao]{Yue:2024}
Murong Yue, Jie Zhao, Min Zhang, Liang Du, and Ziyu Yao.
\newblock {Large Language Model Cascades with Mixture of Thought
  Representations for Cost-Efficient Reasoning}.
\newblock In \emph{The Twelfth International Conference on Learning
  Representations}, 2024.
\newblock URL \url{https://openreview.net/forum?id=6okaSfANzh}.

\bibitem[Zekri et~al.(2024)Zekri, Odonnat, Benechehab, Bleistein, Boullé, and
  Redko]{zekri2024large}
Oussama Zekri, Ambroise Odonnat, Abdelhakim Benechehab, Linus Bleistein,
  Nicolas Boullé, and Ievgen Redko.
\newblock Large language models as markov chains, 2024.
\newblock URL \url{https://arxiv.org/abs/2410.02724}.

\bibitem[Zellers et~al.(2019)Zellers, Holtzman, Bisk, Farhadi, and
  Choi]{zellers2019hellaswag}
Rowan Zellers, Ari Holtzman, Yonatan Bisk, Ali Farhadi, and Yejin Choi.
\newblock {HellaSwag: Can a Machine Really Finish Your Sentence?}
\newblock In \emph{Proceedings of the 57th Annual Meeting of the Association
  for Computational Linguistics}, pages 4791--4800. Association for
  Computational Linguistics, July 2019.
\newblock \doi{10.18653/v1/P19-1472}.
\newblock URL \url{https://aclanthology.org/P19-1472}.

\bibitem[Zhang et~al.(2018)Zhang, Liu, Liu, Gao, Duh, and Durme]{record2018}
Sheng Zhang, Xiaodong Liu, Jingjing Liu, Jianfeng Gao, Kevin Duh, and
  Benjamin~Van Durme.
\newblock {ReCoRD: Bridging the Gap between Human and Machine Commonsense
  Reading Comprehension}.
\newblock \emph{CoRR}, abs/1810.12885, 2018.

\bibitem[Zhang(2004)]{zhang2004statistical}
Tong Zhang.
\newblock {Statistical Analysis of Some Multi-Category Large Margin
  Classification Methods}.
\newblock \emph{Journal of Machine Learning Research}, 5\penalty0
  (Oct):\penalty0 1225--1251, 2004.

\bibitem[Zheng and Yang(2024)]{zheng2024knowledge}
Kaixiang Zheng and En-Hui Yang.
\newblock Knowledge distillation based on transformed teacher matching.
\newblock In \emph{The Twelfth International Conference on Learning
  Representations}, 2024.
\newblock URL \url{https://openreview.net/forum?id=MJ3K7uDGGl}.

\end{thebibliography}

\clearpage
\newpage

\begin{center}
\textcolor{black}{\Large{}Appendix}{\Large\par}
\par\end{center}

\appendix

\section{Language modeling and knowledge distillation}
\label{appen:bg}

\subsection{Language modeling via Transformer-based models}
\label{appen:lm-transformers}

Here, we briefly discuss how Transformers are typically employed for language modeling in modern systems. 
Given a context $\rvx_{\leq t} = [x_1,\ldots, x_{t}] \in \VC^{t}$, a Transformer-based LM first produces a sequence of $d$-dimensional token embeddings 
$E(\rvx_{\leq t}) = [\re_{x_1}^{\top}, \re_{x_2}^{\top},\ldots, \re_{x_{t}}^{\top}] \in \R^{d\times t}$, where $\rve_{v} \in \R^d$ denotes the token embedding for 
$v \in \VC$. A Transformer network $f_{\bpsi}$ then processes 
$E(\rvx_{\leq t})$ to produce a target embedding $f_{\bpsi}\big(E(\rvx_{\leq t})\big) \in \R^{d}$, 
which is 
multiplied by $W \in \mathbb{R}^{V \times d}$, namely a classification layer, to obtain a 
logit vector $u_{\rvx_{\leq t}}:= W f_{\bpsi}\big(E(\rvx_{\leq t})\big) \in \R^{V}$. Accordingly, we have $\btheta = \{E, \bpsi, W\}$ as the parameters of the LM. Applying softmax operation on the logit vector produces the probability that the LM assigns to each token in $\VC$ as the possible continuation (also known as next token) to the context $\rvx_{\leq t}$:
\begin{align}
  P_{\btheta}(v | \rvx_{\leq t}) = \frac{\exp({u_{\rvx_{\leq t}}(v)} / \tau )}{\sum_{v' \in \VC} \exp({u_{\rvx_{\leq t}}(v')} /\tau )}, \qquad \forall v \in \VC.  
\end{align}
Here, $\tau$ denotes the (inverse) temperature associated with the softmax operation. Unless stated otherwise, we assume that $\tau = 1$.

\subsection{Other common variants of knowledge distillation for LM}
\label{append:kd-variants}
\noindent\textbf{Top-$k$ token-level KD.}~Instead of aligning the teacher and student's full per-token prediction distributions, one could only match 
these distribution on $\mathcal{T} \subset \mathcal{V}$, e.g., $k \ll V$ elements of $\mathcal{V}$ that receive the highest scores from the teacher:
\begin{align}
\label{eq:tkd_loss_ce_subset}
\ell(\mathbf{x}; \bzeta \to \btheta) = - \sum_{t = 1}^{T} \Big( \sum_{v \in \TC} P^{\TC}_{\bzeta}(v | \rvx_{\leq t -1}) \cdot  \log P^{\TC}_{\btheta}(v | \rvx_{\leq t -1})\big) \Big),
\end{align}
where $P^{\mathcal{T}}$ denotes the restriction of $P$ (defined over $\mathcal{V}$) to $\mathcal{T}$:
\begin{align}
    P^{\mathcal{T}}(v) = \begin{cases}
    \frac{P(v)}{\sum_{v' \in \mathcal{T}}P(v')} & \text{if}~v \in \mathcal{T}, \\
    0 & \text{otherwise}.
    \end{cases}
\end{align}

\noindent\textbf{Sequence-level KD.}~Unlike token-level KD, sequence-level KD aims to align teacher and student's distributions on all sequences up to sequence length $T$. In particular, the sequence-level KD loss takes the form:
\begin{align}
\label{eq:seq_loss_ce}
\ell^{\rm seq}(\mathbf{x}; \bzeta \to \btheta) = - \sum_{\tilde{\rvx} \in \VC^{\leq n}}  P_{\bzeta}(\tilde{\rvx}) \cdot \log P_{\btheta}(\tilde{\rvx})
\end{align}
In practice, it's natural to focus on a subset of all candidate target sequences $\UC \subset \mathcal{V}^{\leq T}$:
\begin{align*}
\ell^{\rm seq}(\mathbf{x}; \bzeta \to \btheta) = - \sum_{\tilde{\rvx} \in \UC}  P_{\bzeta}(\tilde{\rvx}) \cdot \log P_{\btheta}(\tilde{\rvx})
\end{align*}
A common choice for $\UC$ is the set of say $k$ most likely sequences under the teacher's distribution $P_{\bzeta}$.

\subsection{Recent literature on knowledge distillation (KD) for language modeling}
A large body of literature focuses on utilizing KD~\citep{Bucilua:2006,Hinton:2015} as a core technique to improve LMs~\citep{kim2016sequence, gou2021knowledge, xu2024survey}. For instance, \citet{sanh2019distilbert,turc2019well,wang2020minilm,sun2019patient,jiao2019tinybert} relied on KD to compress BERT-style LMs during pre-training, fine-tuning, or both. More recently, KD has been primarily employed in the instruction-tuning or fine-tuning phase where a general purpose LM is adapted to a specific collection of tasks~\citep{xu2024survey}. Black-box KD methods for LM only assume access to training sequences sampled from a teacher LM
~\citep{rohan2023alpaca, fu2023specializing, peng2023instruction}.
With access to token-level distributions from teacher LM, \textit{token-level distillation} from teacher LM to student LM is possible~\citep{kim2016sequence}. In contrast, \textit{sequence-level distillation} involves sampling training sequences from the teacher LM, the student LM, or both before aligning teacher and student’s predictive distribution on such sequences~\citep{kim2016sequence, agarwal2024onpolicy, gu2024minillm, wen2023fdivergence}.

\section{Deferred proofs from Section~\ref{sec:theory}}
\label{appen:theory}

\subsection{Proof of Theorem~\ref{thm:esr}}
\label{appen:esr}

Before stating the formal version of Theorem~\ref{thm:esr} and its proof, let us recall the necessary notation. Given a function class for student LMs $\Theta$, $\thetaest$ denotes the LM obtained by minimizing the training objective for KD in \eqref{eq:erm_distill}, i.e., 
\begin{align}
\label{eq:kd-erm-theta-appen}
\thetaest := 
\argmin_{\btheta \in \Theta} \RCENW(\btheta).
\end{align}
Further, $\btheta^{*}$ represents the optimal or best performing LM in $\Theta$, i.e.,
\begin{align}
\label{eq:optimal-theta-appen}
\btheta^{*} = \argmin_{\btheta \in \Theta} \RCE(\btheta).
\end{align}
Finally, recall our assumption regarding the bounded loss values.
\begin{assumption}
\label{assm:loss_bound-appen}
Given a function class $\Theta$ for (student) LM, the per-token log-loss with at most $T$-token long sequences for the underlying function class $\Theta$ is bounded by $M$, i.e.,
\begin{align}
\sup_{\btheta \in \Theta; \rvx \in \VC^{\leq T-1}} \max_{v \in \VC}\vert \log P_{\btheta}(v| \rvx) \vert \leq M.
\end{align}
\end{assumption}

Towards establishing Theorem~\ref{thm:esr}, we first state the following intermediate result.
\begin{proposition}
\label{prop:esr}
Let $\thetaest$ and $\btheta^{*}$ be as defined in \eqref{eq:kd-erm-theta-appen} and \eqref{eq:optimal-theta-appen}, respectively. Then, under Assumption~\ref{assm:loss_bound-appen}, the excess surrogate risk for $\thetaest$ satisfies the following.
\begin{align}
R(\thetaest ) - R(\btheta^{*}) &\leq 
\frac{4M \omega}{T} \cdot \sum_{t=1}^{T} \mathbb{E}_{\rvx_{\leq t-1} \sim \DCal}\mathsf{D}_{\rm TV}\Big(P_{\bzeta,\rho}(\cdot|\rvx_{\leq t-1}), \DCal(\cdot|\rvx_{\leq t-1})\Big) 
\nonumber \\
&\qquad 
+ \left(\RCEW(\thetaest )  -   \RCENW(\thetaest ) \right) \; + 
\left(\RCENW(\btheta^{*}) -   \RCEW(\btheta^{*})\right),
\end{align}
where
$\mathsf{D}_{\rm TV}(\cdot, \cdot)$ denotes the total-variation distance between two probability distributions.
\end{proposition}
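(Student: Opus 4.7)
The plan is the standard excess-risk decomposition, where the only non-routine ingredient is bounding the gap between the population CE risk $\RCE(\btheta)$ and the population KD-weighted risk $\RCEW(\btheta)$ by the TV-distance term appearing on the right-hand side. Concretely, I would add and subtract to write
\begin{align*}
\RCE(\thetaest) - \RCE(\btheta^{*})
&= \bigl[\RCE(\thetaest) - \RCEW(\thetaest)\bigr]
+ \bigl[\RCEW(\thetaest) - \RCENW(\thetaest)\bigr] \\
&\quad + \bigl[\RCENW(\thetaest) - \RCENW(\btheta^{*})\bigr]
+ \bigl[\RCENW(\btheta^{*}) - \RCEW(\btheta^{*})\bigr]
+ \bigl[\RCEW(\btheta^{*}) - \RCE(\btheta^{*})\bigr].
\end{align*}
The third bracket is $\leq 0$ by the definition of $\thetaest$ as a minimizer of $\RCENW$, while the second and fourth brackets are exactly the two empirical-vs-population gaps that appear in the claimed bound.

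It remains to control the two ``bias'' brackets $\RCE(\btheta) - \RCEW(\btheta)$ for $\btheta \in \{\thetaest,\btheta^{*}\}$. The plan is to expand using \eqref{eq:tkd_final}: $\ell^{\omega}(\rvx;\btheta) - \ell(\rvx;\btheta) = \omega\cdot\bigl(\ell(\rvx;\bzeta^{\rho}\to\btheta) - \ell(\rvx;\btheta)\bigr)$. Taking expectation over $\rvx\sim\DCal$ and using the tower property inside $\ell(\rvx;\btheta)$ converts the one-hot $\mathds{1}_{x_t}(\cdot)$ to the conditional ground-truth distribution $\DCal(\cdot|\rvx_{\leq t-1})$, so that
\begin{align*}
\RCEW(\btheta) - \RCE(\btheta)
= \frac{\omega}{T}\sum_{t=1}^{T}\mathbb{E}_{\rvx_{\leq t-1}}\!\left[\sum_{v\in\VC}\bigl(\DCal(v|\rvx_{\leq t-1}) - P_{\bzeta,\rho}(v|\rvx_{\leq t-1})\bigr)\log P_{\btheta}(v|\rvx_{\leq t-1})\right].
\end{align*}
By Assumption~\ref{assm:loss_bound-appen}, $|\log P_{\btheta}(v|\rvx_{\leq t-1})|\leq M$, so Hölder's inequality gives the absolute value of the inner sum bounded by $M\cdot\|\DCal(\cdot|\rvx_{\leq t-1}) - P_{\bzeta,\rho}(\cdot|\rvx_{\leq t-1})\|_1 = 2M\,\mathsf{D}_{\rm TV}(P_{\bzeta,\rho}(\cdot|\rvx_{\leq t-1}),\DCal(\cdot|\rvx_{\leq t-1}))$. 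Hence each of the two bias brackets is at most $(2M\omega/T)\sum_{t}\mathbb{E}[\mathsf{D}_{\rm TV}(\cdots)]$, and summing the two contributions yields the factor $4M\omega/T$ in the bound.

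Putting everything together gives exactly the claimed inequality. The main technical subtlety is in the tower-property step that converts the empirical per-token label into the true conditional distribution $\DCal(\cdot|\rvx_{\leq t-1})$, so that the difference becomes a pure functional of $P_{\bzeta,\rho}$ versus $\DCal$; once that reduction is done, the Hölder/TV bound using Assumption~\ref{assm:loss_bound-appen} is immediate. Everything else is a routine ERM decomposition, and the two concentration gaps $\RCEW(\thetaest)-\RCENW(\thetaest)$ and $\RCENW(\btheta^{*})-\RCEW(\btheta^{*})$ are left explicit in the statement, to be handled by uniform-convergence / Bernstein-type arguments (giving the growth-function and variance-dependent terms) in the subsequent steps of the full Theorem~\ref{thm:esr}.
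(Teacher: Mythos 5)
Your proof is correct and follows essentially the same route as the paper's: the standard five-term ERM decomposition (with the term $\RCENW(\thetaest)-\RCENW(\btheta^*)\leq 0$ dropped by optimality of $\thetaest$), followed by a tower-property rewrite of the CE loss and a H\"older/TV bound using $|\log P_{\btheta}|\leq M$. The only cosmetic difference is that the paper bounds the combined bias term $\bigl[\RCE(\thetaest)-\RCE(\btheta^{*})\bigr]-\bigl[\RCEW(\thetaest)-\RCEW(\btheta^{*})\bigr]$ in a single H\"older step using $\|\log P_{\thetaest}-\log P_{\btheta^{*}}\|_{\infty}\leq 2M$, whereas you bound the two bias brackets separately with $\|\log P_{\btheta}\|_{\infty}\leq M$ each; these yield the identical factor $4M\omega/T$.
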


\begin{proof}[Proof of Proposition~\ref{prop:esr}]
For convenience, recall that
\begin{align*}
\RCEW(\btheta)
&=\mathbb{E}_{\rvx} \Big[(1-\omega) \cdot \ell(\rvx; \btheta) + \omega \cdot \ell (\rvx; \bzeta^\rho \rightarrow \btheta)\Big]\\
&= \mathbb{E}_{\rvx \sim \DCal}\left[\frac{1}{T}\sum_{t=1}^{T} \mathsf{CE}\big( P^{(x_{t}, \omega)}_{\zeta, \rho},  P_{\btheta}(\cdot | \rvx_{\leq t-1})\big)\right] \nonumber \\
& = \frac{1}{T}\sum_{t=1}^{T} \mathbb{E}_{\rvx_{\leq t-1} \sim \DCal}\big[\mathsf{CE}\big(\underbrace{(1-\omega)\cdot\DCal(\cdot|\rvx_{\leq t-1}) + \omega \cdot P_{\bzeta,\rho}(\cdot|\rvx_{\leq t-1})}_{{P}^{(\DCal, \omega)}_{\zeta, \rho}(\cdot | \rvx_{\leq t-1})}, P_{\btheta}(\cdot | \rvx_{\leq t-1})\big)\big] \nonumber \\
& = \frac{1}{T}\sum_{t=1}^{T} \mathbb{E}_{\rvx_{\leq t-1} \sim \DCal}\big[\mathsf{CE}\big({P}^{(\DCal, \omega)}_{\zeta, \rho}(\cdot | \rvx_{\leq t-1}), P_{\btheta}(\cdot | \rvx_{\leq t-1})\big)\big].
\end{align*}

Note that we have
\begin{align}
\label{eq:proof-esr-1}
&\RCE(\thetaest ) - \RCE(\btheta^{*}) \nonumber \\
&\overset{(i)}{=} \RCE(\thetaest ) - \RCE(\btheta^{*}) - \Big(\RCEW(\thetaest ) - \RCEW(\btheta^{*})\Big) + \Big(\RCEW(\thetaest ) - \RCEW(\btheta^{*}) \Big) \nonumber \\
&\overset{(ii)}{=}\frac{1}{T}\sum_{t=0}^{T-1} \mathbb{E}_{\rvx_{1:t} \sim \DCal}\Big[\sum_{v \in \VC}\Big({P}^{(\DCal, \omega)}_{\zeta, \rho}(v | \rvx_{1:t}) - \DCal(v|\rvx_{1:t}) \Big)\cdot\Big(\log P_{\thetaest}(v | \rvx_{1:t}) - \log P_{\btheta^{*}}(v | \rvx_{1:t})\Big) \Big] + \nonumber \\
& \qquad \RCEW(\thetaest ) - \RCEW(\btheta^{*}) \nonumber \\
&\overset{(iii)}{\leq}\frac{1}{T}\sum_{t=0}^{T-1} \mathbb{E}_{\rvx_{1:t} \sim \DCal}\Big[\Big\|{P}^{(\DCal, \omega)}_{\zeta, \rho}(\cdot | \rvx_{1:t}) - \DCal(\cdot|\rvx_{1:t}) \Big\|_1\cdot\Big\|\log P_{\thetaest}(\cdot | \rvx_{1:t}) - \log P_{\btheta^{*}}(\cdot | \rvx_{1:t})\Big\|_{\infty} \Big] + \nonumber \\
& \qquad  {\RCEW(\thetaest ) - \RCEW(\btheta^{*})} \nonumber \\
&\overset{(iv)}{\leq} \frac{4M \omega}{T} \cdot \sum_{t=0}^{T-1} \mathbb{E}_{\rvx_{1:t} \sim \DCal}\Big[\mathsf{D}_{\rm TV}\Big(P_{\bzeta,\rho}(\cdot|\rvx_{1:t}), \DCal(\cdot|\rvx_{1:t})\Big) \Big] +  \underbrace{\RCEW(\thetaest ) - \RCEW(\btheta^{*})}_{{\rm (I)}},
\end{align}
where 
$(i)$ follows from adding and subtracting $\RCEW(\thetaest ) - \RCEW(\btheta^{*})$; 
$(ii)$ employs
the definition of $\RCE(\thetaest ),~\RCE(\btheta^{*}),~\RCEW(\thetaest)$, and $\RCEW(\btheta^{*})$; 
$(iii)$ invokes H\"{o}lder's inequality; 
and $(iv)$ follows from the definition of total-variation distance $\mathsf{D}_{\rm TV}(\cdot, \cdot)$ and the fact that underlying per-token loss terms are bounded by $M$.

Next, we focus on the term {\rm (I)} in \eqref{eq:proof-esr-1}:
\begin{align}
\label{eq:proof-esr-2}
\RCEW(\thetaest ) - \RCEW(\btheta^{*})
&\overset{(i)}{=} \RCEW(\thetaest ) - \RCENW(\thetaest) + \RCENW(\thetaest ) - \RCENW(\btheta^{*}) + \RCENW(\btheta^{*})  - \RCEW(\btheta^{*}) \nonumber \\
&= \Big(\RCEW(\thetaest )  -   \RCENW(\thetaest ) \Big) + \Big(\RCENW(\btheta^{*}) -   \RCEW(\btheta^{*}) \Big) \; + \RCENW(\thetaest) - \RCENW(\btheta^{*}) \nonumber \\
&\overset{(ii)}{\leq} \Big(\RCEW(\thetaest )  -   \RCENW(\thetaest ) \Big) + \Big(\RCENW(\btheta^{*}) -   \RCEW(\btheta^{*}) \Big)
\end{align}
where $(i)$ follows by adding and subtracting $\RCENW(\thetaest)$ and $\RCENW(\btheta^{*})$; and $(ii)$ holds as $\thetaest$ is the minimizer of $\RCENW(\cdot)$ in $\Theta$ which implies that $\RCENW(\thetaest) - \RCENW(\btheta^{*}) \le 0$.
\eqref{eq:proof-esr-1} and \eqref{eq:proof-esr-2}.
\end{proof}

Note that the bound on excess surrogate risk in Proposition~\ref{prop:esr} decomposes into three terms: 
\begin{itemize}[leftmargin=8mm, itemsep=1mm, partopsep=0pt,parsep=0pt]
\item First term captures the \textit{divergence} between the ground truth per-token distribution and the teacher-induced per-token distribution leveraged during KD; and
\item The last two terms corresponds to the deviation between empirical and population surrogate risks for the empirical risk minimizer $\thetaest$ and population risk minimzer ${\btheta}^{*}$ within the function class $\Theta$. Note that since, $\thetaest$ is a a random variable in itself (which depends on the training sample $\SCal_N$), one typically needs to bound the deviation uniformly over all functions $\btheta \in \Theta$. As we will see next, one can bound these deviations in terms of the properties of both model class $\Theta$ as well as the teacher-induced per-token distributions.
\end{itemize}

In order to make the excess surrogate risk bound in Proposition~\ref{prop:esr} explicit, we need to bound the third term via a computable quantity. We apply sample variance-based bounds from \citet{maurer2009empirical} to get the following result.

\begin{theorem}[Formal version of Theorem~\ref{thm:esr}]
\label{thm:esr-appen}
Suppose Assumption~\ref{assm:loss_bound-appen} holds. Let $\FC^{\bzeta,\rho,\omega}$ be a function class that maps elements in $\VC^{T}$ to $[0, M]$ as defined below:
\begin{align}
\label{eq:FC_def}
    \FC^{\omega} := \FC^{\bzeta,\rho,\omega} \triangleq \left\{ \rvx \mapsto \frac{1}{T}\sum_{t = 1}^{T} \mathsf{CE}\big({P}^{(\DCal, \omega)}_{\zeta, \rho}(\cdot | \rvx_{\leq t - 1}), P_{\btheta}(\cdot | \rvx_{\leq t-1})\big),~\forall \rvx \in \VC^T, \btheta \in \Theta\right\}.
\end{align}
For $\epsilon > 0$, let $\mathcal{N}_{\infty}(\epsilon, \FC^{\bzeta,\rho,\omega}, N)$ denote the growth function for the function class $\FC^{\bzeta,\rho,\omega}$, i.e., 
\begin{align}
    \mathcal{N}_{\infty}(\epsilon, \FC^{\bzeta,\rho,\omega}, N) \triangleq \sup_{\rmX = (\rvx^{(1)},\ldots, \rvx^{(N)}) \in \VC^{T \times N}} \mathcal{N}(\epsilon, \FC^{\bzeta,\rho,\omega}(\rmX), \|\cdot\|_{\infty}),
\end{align}
where $\mathcal{N}(\epsilon, \FC^{\bzeta,\rho,\omega}(\rmX), \|\cdot\|_{\infty})$ denotes the smallest $\epsilon$-cover of the set
\begin{align*}
\FC^{\bzeta,\rho,\omega}(\rmX) = \left\{ \big(f(\rvx^{(1)}), f(\rvx^{(2)}),\ldots, f(\rvx^{(N)}) \big) : f \in \FC^{\bzeta,\rho,\omega}\right\} \subseteq \R^N
\end{align*}
with respect to $\|\cdot\|_{\infty}$ norm. Then, with probability at least $1 - \delta$, for all $\btheta \in \Theta$, we have
\begin{align}
 &\RCE(\thetaest ) - \RCE(\btheta^{*})
 \leq 
 \frac{4M \omega}{T} \cdot \sum_{t=1}^{T} \mathbb{E}_{\rvx_{\leq t-1} \sim \DCal}\mathsf{D}_{\rm TV}\Big(P_{\bzeta,\rho}(\cdot|\rvx_{\leq t-1}), \DCal(\cdot|\rvx_{\leq t-1})\Big) \nonumber \\
 &\qquad + \sqrt{\frac{18 V_N(f^{\thetaest},\SCal_N)\log\left(\frac{2\MC(N)}{\delta}\right)}{N}} + \frac{15 M \log\left(\frac{2\MC(N)}{\delta}\right)}{N-1} \nonumber \\
 & \qquad + \sqrt{\frac{2 V_N(f^{\btheta^{*}},\SCal_N)\log\left(\frac{4}{\delta}\right)}{N}} + \frac{7 M \log\left(\frac{4}{\delta}\right)}{3(N-1)},
\end{align}
where $\MC(N) \triangleq 10\cdot \mathcal{N}_{\infty}(1/N, \FC^{\bzeta,\rho,\omega}, 2N)$; $f^{\btheta}$ denotes
the function in $\FC^{\bzeta, \rho, \omega}$ that corresponds to $\btheta$, as per \eqref{eq:FC_def}; and $V_N(f^{\btheta},\SCal_N)$ denotes the sample variance
\begin{align}
    V_N(f^{\btheta},\SCal_N) = \frac{1}{N(N-1)}\sum_{1 \leq i < j \leq N} \big(f^{\btheta}(\rvx^{(i)}) - f^{\btheta}(\rvx^{(j)})\big)^2.
\end{align}
\end{theorem}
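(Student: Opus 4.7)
The plan is to start from Proposition~\ref{prop:esr}, which already reduces the excess risk to a bias term plus two deviation terms:
\begin{align*}
\RCE(\thetaest) - \RCE(\btheta^{*})
&\le \text{bias}(\bzeta,\omega)
\;+\; \bigl(\RCEW(\thetaest) - \RCENW(\thetaest)\bigr)
\;+\; \bigl(\RCENW(\btheta^{*}) - \RCEW(\btheta^{*})\bigr),
\end{align*}
where the bias term equals $\tfrac{4M\omega}{T}\sum_t \E[\mathsf{D}_{\rm TV}(P_{\bzeta,\rho}(\cdot|\rvx_{\le t-1}),\DCal(\cdot|\rvx_{\le t-1}))]$. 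So all I need to do is concentrate each of the two deviation terms and then union-bound; this is where the claimed sample-variance expressions originate.

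The key observation is that each sequence $\rvx^{(i)}\sim\DCal$ is i.i.d., so for any fixed $\btheta$, the quantity $\RCENW(\btheta)=\tfrac1N\sum_i f^{\btheta}(\rvx^{(i)})$ is the empirical mean of $N$ i.i.d.\ samples of a function $f^{\btheta}\in\FC^{\omega}$ taking values in $[0,M]$ (using Assumption~\ref{assm:loss_bound-appen} together with convexity of $\mathsf{CE}$ in the second argument and the fact that ${P}^{(\DCal,\omega)}_{\zeta,\rho}$ is a convex combination of two probability distributions, each giving bounded CE). For the second deviation, involving the fixed minimizer $\btheta^{*}$, I will apply the empirical Bernstein inequality of \citet{maurer2009empirical} to the single function $f^{\btheta^{*}}$ at confidence $\delta/2$, which yields the $\sqrt{2V_N(f^{\btheta^{*}},\SCal_N)\log(4/\delta)/N}+7M\log(4/\delta)/(3(N-1))$ piece.

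For the first deviation, involving the data-dependent $\thetaest$, I need a \emph{uniform} bound over $\btheta\in\Theta$ (equivalently over $f\in\FC^{\omega}$). Here I will invoke the uniform empirical Bernstein bound from \citet{maurer2009empirical}, which upgrades the single-function bound to one that holds simultaneously for all $f$ in a class of $[0,M]$-valued functions, at the cost of replacing $\log(2/\delta)$ by $\log(2\MC(N)/\delta)$ where $\MC(N)=10\,\mathcal{N}_\infty(1/N,\FC^{\omega},2N)$. Applied at confidence $\delta/2$ and specialized to $f^{\thetaest}$, this yields the $\sqrt{18V_N(f^{\thetaest},\SCal_N)\log(2\MC(N)/\delta)/N}+15M\log(2\MC(N)/\delta)/(N-1)$ piece. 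A union bound over the two applications gives the final statement.

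The main obstacles are bookkeeping rather than conceptual. First, I must verify that each $f^{\btheta}$ really lies in $[0,M]$: since ${P}^{(\DCal,\omega)}_{\zeta,\rho}(\cdot|\rvx_{\le t-1})$ is a probability distribution and $-\log P_{\btheta}(v|\rvx_{\le t-1})\in[0,M]$ under Assumption~\ref{assm:loss_bound-appen}, each per-token CE is in $[0,M]$, and the $T$-average preserves this. Second, the covering-number factor $\MC(N)$ must be defined with respect to $\FC^{\omega}$ equipped with the $\ell_\infty$ norm on the evaluation vector over $2N$ sequences, exactly as in the hypothesis of the Maurer--Pontil uniform bound; this is precisely how $\MC(N)$ is stated in the theorem. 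Third, the sign conventions must be handled carefully so that the two deviation terms really take the forms $R-R_N$ and $R_N-R$ (rather than their absolute values), ensuring that only the one-sided tail bounds are needed. Once these are in place, the three pieces add together to give the stated inequality.
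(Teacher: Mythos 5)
Your proposal is correct and matches the paper's proof essentially verbatim: the paper also begins from Proposition~\ref{prop:esr}, applies Theorem~6 of \citet{maurer2009empirical} (the uniform empirical Bernstein bound with covering-number complexity $\MC(N)$) to the data-dependent deviation for $\thetaest$, applies Theorem~4 of \citet{maurer2009empirical} (the single-function empirical Bernstein inequality) to the fixed deviation for $\btheta^{*}$, and union-bounds with each at confidence $\delta/2$. The bookkeeping checks you list (range $[0,M]$ of $f^{\btheta}$, the $\ell_\infty$ covering over $2N$ points, and the one-sided sign convention) are indeed the right details to fill in, and your constants agree with the paper's after the $[0,1]\to[0,M]$ rescaling.
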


\begin{proof}[Proof of Theorem~\ref{thm:esr-appen}]
As discussed earlier, 
in light of Proposition~\ref{prop:esr}, we only need to bound two terms $\RCEW(\thetaest )  -   \RCENW(\thetaest )$ and $\RCENW(\btheta^{*}) -   \RCEW(\btheta^{*})$ to obtain the desired result. Now utilizing Theorem~6 and Theorem~4 (with $\delta$ replaced with $\delta/2$) in \citet{maurer2009empirical} to bound the two terms, respectively, completes the proof of Theorem~\ref{thm:esr-appen}.
\end{proof}

\subsection{Token-level generalization bound}
\label{sec:token_level_bound_pf}
Before providing a proof of Theorem~\ref{thm:token-level-generalization-bound}, we first introduce some intermediate results that are needed to prove the theorem. Recall that our training sample $\SCal_N = \{\rvx^{(i)} = [x^{(i)}_1,\ldots, x^{(i)}_T]\}_{i \in [N]}$ comprises $N$ independent sequences such that $\rvx^{(i)} \sim \DCal, \forall i \in [N]$. 
With $\ell^\omega(\rvx^{(i)}; \btheta)$ representing the KD loss on $i$-th sequence, we define
the random variables
\begin{align}
    Z^{(i)}_0 &= \mathbb{E}[\ell^\omega(\rvx^{(i)}; \btheta)], \nonumber \\ 
    Z^{(i)}_t &= \mathbb{E}\left[\ell^\omega(\rvx^{(i)}; \btheta) \; \vert \; \rvx^{(i)}_{\leq t} \right],~\text{for}~1 \leq t \leq T, 
    \label{eq:Z_def}
\end{align}
where $Z^{(i)}_T = \mathbb{E}\left[\ell^\omega(\rvx^{(i)}; \btheta) \; \vert \; \rvx^{(i)} \right] = \ell^\omega(\rvx^{(i)}; \btheta)$. Note that $\{Z^{(i)}_t\}_{0 \leq t \leq T}$ is a \textit{Doob martingale sequence} with respect to the natural filtration $\{\mathcal{F}^{(i)}_t\}_{0 \leq t \leq T}$ of the random variables $\{x^{(i)}_1,\ldots, x^{(i)}_{t}\}$~\citep[pg 297]{Ross:1983}. Accordingly, we define a \textit{martingale difference sequence} $\{\xi^{(i)}_{t}, \mathcal{F}^{(i)}_t\}_{t \in [T]}$ such that for $t \in [T]$,
\begin{align}
\xi^{(i)}_{t}:= \xi_{t}(\rvx^{(i)}; \btheta) = Z^{(i)}_{t-1} - Z^{(i)}_{t} = \mathbb{E}\left[\ell^\omega(\rvx; \btheta) | \rvx^{(i)}_{\leq t - 1} \right] - \mathbb{E}\left[\ell^\omega(\rvx; \btheta) | \rvx^{(i)}_{\leq t} \right].
\end{align}
As per Assumption~\ref{assm:xi}, the following holds for each $t \in [T]$:
\begin{align}
|\xi^{(i)}_{t}(\rvx; \btheta)| & \leq C_t \leq C, \label{eq:assm_xi1_appen} \\
\mathbb{E}\left[\left(\xi^{(i)}_{t}\right)^2 | \rvx_{\leq t-1}\right] &\leq V_t.
\end{align}

We are ready to state the first intermediate result which bounds the moment generating function for the following random variable associated with the KD loss on the $i$-th training sequence:
$$
Z^{(i)}_{0} - Z^{(i)}_{T} =
\mathbb{E}\left[\ell^\omega(\rvx^{(i)}; \btheta)\right] - \ell^\omega(\rvx^{(i)}; \btheta).
$$


\begin{lemma}
\label{lem:mgf_u}
Under Assumption~\ref{assm:xi}, the following holds for each $i \in [N]$:
\begin{align}
\mathbb{E}\left[e^{\lambda\cdot(Z^{(i)}_{0} - Z^{(i)}_{T})/C}\right] \leq \exp\left(T\cdot f\Big(\lambda,\frac{1}{T}\sum_{t=1}^T\frac{V_t}{C^2}\Big)\right),
\end{align}
where, for $\lambda \geq 0$ and $s \geq 0$,
\begin{align}
f(\lambda, s) \triangleq \log \left(\frac{1}{1+s} \cdot \exp(-\lambda s) + \frac{s}{1+s} \cdot \exp(\lambda)\right).
\end{align}
\end{lemma}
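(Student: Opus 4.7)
The plan is to exploit the Doob martingale structure and reduce the bound to a per-step conditional MGF estimate of Bennett type, then combine by conditioning and Jensen's inequality. Recall that $Z_0^{(i)} - Z_T^{(i)} = \sum_{t=1}^{T} \xi_t^{(i)}$ is a sum of a martingale difference sequence with $\mathbb{E}[\xi_t^{(i)} \mid \mathcal{F}_{t-1}^{(i)}] = 0$, $|\xi_t^{(i)}| \leq C$, and conditional second moment at most $V_t$ by Assumption~\ref{assm:xi}. Setting $Y_t := \xi_t^{(i)}/C$, we have $|Y_t| \leq 1$, $\mathbb{E}[Y_t \mid \mathcal{F}_{t-1}^{(i)}] = 0$, and $\mathbb{E}[Y_t^2 \mid \mathcal{F}_{t-1}^{(i)}] \leq V_t/C^2 =: s_t$, so the target becomes $\mathbb{E}[\exp(\lambda \sum_t Y_t)] \leq \exp(T f(\lambda, \tfrac{1}{T}\sum_t s_t))$.

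The first step is the classical Bennett extremal MGF inequality at a single conditioning level: for any $Y$ with $|Y|\le 1$, $\mathbb{E}[Y]=0$, $\mathbb{E}[Y^2]\le s$, and any $\lambda \geq 0$,
\begin{equation*}
\mathbb{E}[e^{\lambda Y}] \;\leq\; \frac{1}{1+s}\,e^{-\lambda s} + \frac{s}{1+s}\,e^{\lambda} \;=\; \exp\bigl(f(\lambda,s)\bigr).
\end{equation*}
This is obtained by observing that $e^{\lambda y}$ is convex on $[-1,1]$ and dominated by its secant through $y=-s$ and $y=1$, so $\mathbb{E}[e^{\lambda Y}] \leq \mathbb{E}[\text{secant}(Y)]$, and $\mathbb{E}[\text{secant}(Y)]$ depends on $Y$ only through $\mathbb{E}[Y]$ and a single higher moment, which is maximized at the two-point law supported on $\{-s,1\}$ with weights $\{1/(1+s),s/(1+s)\}$ (this law has mean zero and variance exactly $s$). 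Applied conditionally to $Y_t$ given $\mathcal{F}_{t-1}^{(i)}$, this yields $\mathbb{E}[e^{\lambda Y_t}\mid \mathcal{F}_{t-1}^{(i)}] \leq \exp(f(\lambda, s_t))$ almost surely.

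The second step is to peel off the sum via the tower property: conditioning on $\mathcal{F}_{T-1}^{(i)}$ and using the above,
\begin{equation*}
\mathbb{E}\!\left[e^{\lambda \sum_{t=1}^{T} Y_t}\right]
= \mathbb{E}\!\left[e^{\lambda\sum_{t=1}^{T-1}Y_t}\,\mathbb{E}[e^{\lambda Y_T}\mid \mathcal{F}_{T-1}^{(i)}]\right]
\leq \exp(f(\lambda, s_T))\,\mathbb{E}\!\left[e^{\lambda\sum_{t=1}^{T-1}Y_t}\right],
\end{equation*}
and iterating gives $\mathbb{E}[e^{\lambda \sum_t Y_t}] \leq \exp\!\bigl(\sum_{t=1}^{T} f(\lambda, s_t)\bigr)$. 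Finally, I would convert the sum into the averaged form by invoking concavity of $s \mapsto f(\lambda, s)$ on $[0,\infty)$ for fixed $\lambda \geq 0$ and applying Jensen's inequality: $\sum_t f(\lambda, s_t) \leq T\,f(\lambda, \tfrac{1}{T}\sum_t s_t)$, which is precisely the stated bound.

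The main obstacle I expect is the concavity of $f(\lambda,\cdot)$. This is a direct but slightly tedious calculus check, namely showing $\partial_{ss} f(\lambda, s) \leq 0$ for all $\lambda, s \geq 0$ by differentiating $\log\!\bigl((e^{-\lambda s} + s e^{\lambda})/(1+s)\bigr)$ twice; alternatively, one can reinterpret $f(\lambda, s)$ as the log-MGF of the Bennett extremal law above and appeal to the general fact that the log-MGF is concave as a function of the variance constraint in a one-parameter family of two-point distributions (since it is a supremum-of-linear-in-the-moment-functionals computation). Once concavity is in hand, the rest of the proof is a routine assembly of the ingredients listed above.
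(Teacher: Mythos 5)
Your proposal follows the same overall route as the paper's proof: rescale by $C$, peel off the Doob martingale sum one step at a time via the tower property, bound each conditional MGF by a Bennett-type quantity $\exp(f(\lambda,\cdot))$, and then convert the resulting product into the averaged form via concavity of $f(\lambda,\cdot)$ in its second argument plus Jensen's inequality. The paper simply cites Lemmas~3.1 and 3.2 of Fan, Grama and Liu (2012) for the one-step conditional MGF bound and for the monotonicity and concavity of $f$, whereas you attempt to re-prove the one-step inequality from scratch; that is where your write-up has a real problem.

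The secant argument for the one-step bound does not work as stated. By convexity, the chord of $e^{\lambda y}$ through the points $(-s,e^{-\lambda s})$ and $(1,e^{\lambda})$ lies \emph{above} $e^{\lambda y}$ only on the interval $[-s,1]$; for $y\in[-1,-s)$ (nonempty whenever $s<1$, which is the relevant regime since $s_t=V_t/C^2\le 1$ once we note $\mathbb{E}[\xi_t^2\mid\cdot]\le C_t^2\le C^2$) the convex curve lies \emph{above} the chord, so a pointwise domination on the support $[-1,1]$ fails. Moreover, a secant is affine, so $\mathbb{E}[\mathrm{secant}(Y)]$ depends on $Y$ only through $\mathbb{E}[Y]=0$; the variance constraint $\mathbb{E}[Y^2]\le s$ never enters an affine comparison, which means a linear majorant cannot produce the Bennett extremal bound in the first place. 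The correct elementary derivation replaces the chord with a \emph{quadratic} majorant $q(y)=\alpha+\beta y+\gamma y^2$ with $\gamma\ge 0$, chosen to pass through $(-s,e^{-\lambda s})$ and $(1,e^{\lambda})$ and to be tangent to $e^{\lambda y}$ at $y=-s$; one checks $q(y)\ge e^{\lambda y}$ for all $y\le 1$, so $\mathbb{E}[e^{\lambda Y}]\le\alpha+\gamma\,\mathbb{E}[Y^2]\le\alpha+\gamma s$, which equals $e^{f(\lambda,s)}$ because the parabola agrees with $e^{\lambda y}$ at the two atoms $-s$ and $1$. Once this lemma is established (or, as the paper does, cited from Fan et al.), the remainder of your argument — the conditional peeling and the Jensen step on the concave map $s\mapsto f(\lambda,s)$ — is exactly the paper's proof.
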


\begin{proof}
Note that 
\begin{align}
\label{eq:mgf_u_proof1}
\mathbb{E}\left[e^{\lambda\cdot(Z^{(i)}_{0} - Z^{(i)}_{T})/C}\right] 
&= \mathbb{E}\left[e^{\lambda\cdot\sum_{t=1}^T\xi^{(i)}_{t}/C}\right] \nonumber \\
&= \mathbb{E}\left[\mathbb{E}\Big[e^{\lambda\cdot\sum_{t=1}^T\xi^{(i)}_{t}/C}\vert \rvx^{(i)}_{\leq T-1}\Big]\right] \nonumber \\
&\overset{(i)}{=}  \mathbb{E}\left[e^{\lambda\cdot\sum_{t=1}^{T-1}\xi^{(i)}_{t}/C}\cdot\mathbb{E}\Big[e^{\lambda\cdot\xi^{(i)}_{T}/C}\vert \rvx^{(i)}_{\leq T-1}\Big]\right] \nonumber \\
& \overset{(ii)}{\leq}  \mathbb{E}\left[e^{\lambda\cdot\sum_{t=1}^{T-1}\xi^{(i)}_{t}/C}\cdot e^{f\big(\lambda,\;  \frac{1}{C^2}\cdot\mathbb{E}\left[\left(\xi^{(i)}_{T}\right)^2 \vert \rvx^{(i)}_{\leq T-1}\right]\big)}\right] \nonumber \\
&\overset{(iii)}{\leq}  \mathbb{E}\left[e^{\lambda\cdot\sum_{t=1}^{T-1}\xi^{(i)}_{t}/C}\cdot e^{f(\lambda,\;  \frac{V_T}{C^2})}\right] \nonumber \\
& =  \mathbb{E}\left[e^{\lambda\cdot\sum_{t=1}^{T-1}\xi^{(i)}_{t}/C} \right]\cdot e^{f(\lambda,\;  \frac{V_T}{C^2})}
\end{align}
where $(i)$ follows as $e^{\lambda\cdot\sum_{t=1}^{T-1}\xi^{(i)}_{t}}$ is $\mathcal{F}^{(i)}_{T-1}$-measurable; $(ii)$ follows from \citep[Lemma 3.1]{fan2012hoeffding}; and $(iii)$ follows from Assumption~\ref{assm:xi} and the fact that, for $\lambda > 0$ and $s \geq 0$, $f(\lambda, s)$ is an increasing function in its second argument~\citep[Lemma~3.2]{fan2012hoeffding}.
By following the similar steps in \eqref{eq:mgf_u_proof1} for $\xi_{i, T-1}, \xi_{i, T-2},\ldots, \xi_{i, 1}$, we obtain that
\begin{align}
\label{eq:mgf_u_proof2}
\mathbb{E}\left[e^{\lambda\cdot(Z^{(i)}_{0} - Z^{(i)}_{T})/C}\right] \leq e^{\sum_{t=1}^T f(\lambda, \frac{V_t}{C^2})}.
\end{align}
According to \citep[Lemma~3.2]{fan2012hoeffding} that, for $\lambda \geq 0$ and $s \geq 0$, $f(\lambda, s)$ is a concave function in its second argument. Thus, it follows from Jensen's inquality that
\begin{align}
\label{eq:mgf_u_proof3}
\frac{1}{T}\sum_{t=1}^{T}f\left( \lambda, \frac{V_t}{C^2} \right) \leq f\left(\lambda, \frac{1}{T}\sum_{t=1}^{T}\frac{V_t}{C^2}\right).
\end{align}
By combining \eqref{eq:mgf_u_proof2} and \eqref{eq:mgf_u_proof3}, we have
\begin{align}
\label{eq:mgf_u_proof4}
\mathbb{E}\left[e^{\lambda\cdot(Z^{(i)}_{0} - Z^{(i)}_{T})/C}\right] \leq e^{T\cdot f\big(\lambda, \frac{1}{T}\sum_{t=1}^{T}\frac{V_t}{C^2}\big)},
\end{align}
which completes the proof.
\end{proof}


Now we can leverage Lemma~\ref{lem:mgf_u} to obtain the following concentration inequality for the KD training objective.


\begin{lemma}
\label{lem:U_concentration}
Let $\bzeta$ and $\btheta \in \Theta$ denote the teacher and student LM, respectively. Then, for $\epsilon > 0$, the following holds under Assumption~\ref{assm:xi}.
\begin{align}
\label{eq:KD_loss_concentration}
\mathbb{P}\left(\sum_{i=1}^N \left( \mathbb{E}\big[\ell^\omega(\rvx^{(i)}; \btheta)\big] - \ell^\omega(\rvx^{(i)}; \btheta)\right)/C \geq N\epsilon \right) \leq \exp\left(-\frac{N\epsilon^2}{2(\sum_{t}\frac{V_t}{C^2} + \frac{1}{3}\epsilon)}\right).
\end{align}
\end{lemma}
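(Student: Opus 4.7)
\textbf{Proof proposal for Lemma~\ref{lem:U_concentration}.} The plan is a classical Chernoff argument combined with the MGF bound already established in Lemma~\ref{lem:mgf_u}, followed by a Bennett-to-Bernstein conversion to land on the desired denominator $\sum_t V_t/C^2 + \epsilon/3$. Let $S_N = \sum_{i=1}^{N}(Z_0^{(i)} - Z_T^{(i)})/C = \sum_{i=1}^{N}\big(\mathbb{E}[\ell^\omega(\rvx^{(i)};\btheta)] - \ell^\omega(\rvx^{(i)};\btheta)\big)/C$. For $\lambda \ge 0$, Markov's inequality gives $\mathbb{P}(S_N \geq N\epsilon) \leq e^{-\lambda N \epsilon}\,\mathbb{E}[e^{\lambda S_N}]$. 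Because the training sequences $\rvx^{(1)}, \ldots, \rvx^{(N)}$ are i.i.d.\ under $\DCal$, the summands in $S_N$ are independent, so the MGF factorizes and Lemma~\ref{lem:mgf_u} yields
\begin{align*}
\mathbb{E}[e^{\lambda S_N}] \leq \exp\!\Big(N\,T\,f\big(\lambda,\tfrac{1}{T}\sum\nolimits_{t} V_t/C^2\big)\Big).
\end{align*}

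The next step, which I expect to be the main technical point, is to control $T\cdot f(\lambda,\bar s)$ where $\bar s = \frac{1}{T}\sum_t V_t/C^2$. Observe that $f(\lambda,s)$ is the log-MGF of the two-point random variable taking value $-s$ with probability $1/(1+s)$ and value $1$ with probability $s/(1+s)$; this variable has mean $0$, variance $s$, and upper bound $1$. The classical Bennett bound for such a variable gives $f(\lambda,s) \leq s\,(e^\lambda - 1 - \lambda)$, and for $\lambda \in [0,3)$ we can further use the standard elementary inequality $e^\lambda - 1 - \lambda \leq \frac{\lambda^2/2}{1-\lambda/3}$. Putting $s = \bar s$ and multiplying by $T$, one obtains
\begin{align*}
T\cdot f(\lambda,\bar s) \;\leq\; \frac{(\sum_t V_t/C^2)\,\lambda^2/2}{1-\lambda/3}, \qquad \lambda \in [0,3).
\end{align*}
The main subtlety here is verifying these two inequalities cleanly without invoking material beyond \citep{fan2012hoeffding}; I expect one can cite Bennett's moment inequality directly, or do a short self-contained derivation (the elementary bound on $e^\lambda-1-\lambda$ follows from expanding the exponential series and bounding $\lambda^k/k! \leq (\lambda/3)^{k-2} \lambda^2 / 2$ for $k \ge 2$).

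Combining the above, set $\sigma^2 := \sum_t V_t/C^2$ and obtain
\begin{align*}
\mathbb{P}(S_N \geq N\epsilon) \;\leq\; \exp\!\Big(-\lambda N\epsilon + \tfrac{N\sigma^2\lambda^2/2}{1-\lambda/3}\Big), \qquad \lambda \in [0,3).
\end{align*}
The final step is the standard Bernstein optimization: choose $\lambda^\star = \epsilon/(\sigma^2 + \epsilon/3)$, which lies in $[0,3)$ and is easily checked to be the minimizer of the exponent. A direct substitution gives $-\lambda^\star \epsilon + \lambda^{\star 2}\sigma^2/\big(2(1-\lambda^\star/3)\big) = -\epsilon^2/\big(2(\sigma^2+\epsilon/3)\big)$, so multiplying by $N$ and rewriting $\sigma^2 = \sum_t V_t/C^2$ yields exactly the claimed tail bound. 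No additional machinery is needed; the only non-mechanical step is the Bennett-to-Bernstein conversion of the log-MGF $f(\lambda, \cdot)$, and once that is in place the rest is the textbook Chernoff optimization.
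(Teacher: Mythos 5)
Your proposal matches the paper's proof essentially step for step: Markov plus independence to factorize the MGF, invoke Lemma~\ref{lem:mgf_u}, then apply the Fan et al.\ bound $f(\lambda,s)\le (e^\lambda-1-\lambda)s \le \frac{\lambda^2 s}{2(1-\lambda/3)}$ for $\lambda\in[0,3)$, and finally the Bernstein optimization over $\lambda$. The only difference is presentational — you spell out the Bennett interpretation of $f$ and the explicit choice $\lambda^\star = \epsilon/(\sigma^2+\epsilon/3)$, whereas the paper cites \citet[Remark~2.1]{fan2012hoeffding} for the $f$ bound and states the optimized exponent directly.
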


\begin{proof}
Recall that, as per our notation, we have
$$
\mathbb{E}\left[\ell^\omega(\rvx^{(i)}; \btheta)\right] - \ell^\omega(\rvx^{(i)}; \btheta) = Z^{(i)}_{0} - Z^{(i)}_{T}.
$$
Thus, 
\begin{align}
\label{eq:U_concentration_proof0}
\mathbb{P}\left(\sum_{i=1}^N \left( \mathbb{E}\big[\ell^\omega(\rvx^{(i)}; \btheta)\big] - \ell^\omega(\rvx^{(i)}; \btheta)\right)/C \geq N\epsilon \right) = \mathbb{P}\left(\sum_{i=1}^N \big( Z^{(i)}_{0} - Z^{(i)}_{T} \big)/C \geq N\epsilon \right). 
\end{align}
It follows from Markov's inequality that, for $\lambda \geq 0$,
\begin{align}
\label{eq:U_concentration_proof1}
\mathbb{P}\left(\sum_{i=1}^N \big( Z^{(i)}_{0} - Z^{(i)}_{T} \big)/C \geq N\epsilon \right)  &= \mathbb{P}\left(e^{\lambda \cdot \sum_{i=1}^N \big( Z^{(i)}_{0} - Z^{(i)}_{T} \big)/C} \geq e^{N\lambda\epsilon} \right) \nonumber \\
&\leq \frac{\mathbb{E}\left[e^{\lambda\cdot\sum_{i=1}^N \big( Z^{(i)}_{0} - Z^{(i)}_{T} \big)/C}\right]}{e^{N\lambda\epsilon}} \nonumber \\
&\overset{(i)}{=} \frac{\prod_{i \in [N]}\mathbb{E}\left[e^{\lambda \cdot \big( Z^{(i)}_{0} - Z^{(i)}_{T} \big)/C}\right]}{e^{N\lambda \epsilon}},
\end{align}
where $(i)$ follows as $\{Z^{(i)}_{0} - Z^{(i)}_{T} \big\}_{i \in [N]}$ are independent random variables. 
By combining \eqref{eq:U_concentration_proof1} with Lemma~\ref{lem:mgf_u}, we obtain that
\begin{align}
\label{eq:U_concentration_proof2}
\mathbb{P}\left(\sum_{i=1}^N \big( Z^{(i)}_{0} - Z^{(i)}_{T} \big)/C \geq N\epsilon \right)  &\leq e^{-N\cdot\big(\lambda\epsilon - T\cdot f(\lambda,\frac{1}{T}\sum_{t=1}^T\frac{V_t}{C^2})\big)}.
\end{align}
Since \eqref{eq:U_concentration_proof2} holds for each $\lambda \geq 0$, we have 
\begin{align}
\label{eq:U_concentration_proof3}
\mathbb{P}\left(\sum_{i=1}^N \big( Z^{(i)}_{0} - Z^{(i)}_{T} \big)/C \geq N\epsilon \right)  &\leq \inf_{\lambda \geq 0}e^{-N\cdot\big(\lambda\epsilon - T\cdot f(\lambda,\frac{1}{T}\sum_{t=1}^T\frac{V_t}{C^2})\big)}
\end{align}
Now as argued in the Proof of Remark 2.1 in \citet{fan2012hoeffding}, for $0 \leq \lambda < 3, s \geq 0$, we have
\begin{align}
\label{eq:f_lambda_s_bound}
f(\lambda,s) \leq (e^{\lambda} - 1 - \lambda)s \leq \frac{\lambda^2s}{2(1 - \frac{1}{3}\lambda)}.
\end{align} 
Thus, it follows from \eqref{eq:U_concentration_proof3} that
\begin{align}
\label{eq:U_concentration_proof4}
\mathbb{P}\left(\sum_{i=1}^N \big( Z^{(i)}_{0} - Z^{(i)}_{T} \big)/C \geq N\epsilon \right) &\leq \inf_{0 \leq \lambda < 3}\exp\left({-N\cdot\Big(\lambda\epsilon - \frac{\lambda^2}{2(1 - \frac{1}{3}\lambda)}\cdot \sum_{t}\frac{V_t}{C^2}\Big)}\right) \nonumber \\
&\leq \exp\left(-\frac{N\epsilon^2}{2(\sum_{t}\frac{V_t}{C^2} + \frac{1}{3}\epsilon)}\right).
\end{align}
This completes the proof.
\end{proof}


Equipped with Lemma~\ref{lem:U_concentration}, we are now ready to prove Theorem~\ref{thm:token-level-generalization-bound} below.


\begin{proof}[Proof of Theorem~\ref{thm:token-level-generalization-bound}]
Note that
\begin{align}
\label{eq:decomp_erm}
&\RCE(\thetaest ) - \RCENW(\thetaest) = 
\underbrace{\RCE(\thetaest ) -  \RCEW(\thetaest )}_{({\rm I})} \; + \;  \underbrace{\RCEW(\thetaest ) - \RCENW(\thetaest)}_{({\rm II})}.
\end{align}
Following the similar analysis used in the proof of Theorem~\ref{thm:esr}, we can bound the term ({\rm I}) to obtain the following.
\begin{align}
\label{eq:div_bound_erm}
\RCE(\thetaest ) -  \RCEW(\thetaest ) \leq \frac{4M \omega}{T} \cdot \sum_{t=1}^{T} \mathbb{E}_{\rvx_{\leq t-1} \sim \DCal}\mathsf{D}_{\rm TV}\Big(P_{\bzeta,\rho}(\cdot|\rvx_{\leq t-1}), \DCal(\cdot|\rvx_{\leq t-1})\Big).
\end{align}
Next, we focus on bounding the term ({\rm II}). As per notation, for any $\btheta \in \Theta$, we have
\begin{align}
\RCEW(\btheta) - \RCENW(\btheta) &= \frac{1}{N}\sum_{i=1}^N \left( \mathbb{E}\big[\ell^\omega(\rvx^{(i)}; \btheta)\big] - \ell^\omega(\rvx^{(i)}; \btheta)\right). 
\end{align}
Thus, for a fixed $\btheta \in \Theta$, we have
\begin{align}
\label{eq:delta_prob_bound}
&\mathbb{P}\left(\RCE({\btheta}) - \RCENW(\btheta) \geq \gamma \right) 
= \mathbb{P}\left(\frac{1}{N}\sum_{i=1}^N \left( \mathbb{E}\big[\ell^\omega(\rvx^{(i)}; \btheta)\big] - \ell^\omega(\rvx^{(i)}; \btheta)\right) \geq \gamma \right) \nonumber \\
&\qquad\qquad\qquad = \mathbb{P}\left(\sum_{i=1}^N \left( \mathbb{E}\big[\ell^\omega(\rvx^{(i)}; \btheta)\big] - \ell^\omega(\rvx^{(i)}; \btheta)\right)/C \geq N\cdot\frac{\gamma}{C} \right) \nonumber \\
&\qquad\qquad\qquad \overset{(i)}{\leq}  \exp\left(-\frac{N\gamma^2}{2(\sum_{t}V_t  + \frac{1}{3}C\gamma)}\right),
\end{align} 
where $(i)$ follows from \eqref{eq:KD_loss_concentration} with $\epsilon = \frac{\gamma}{C}$. With some algebra, one can see that the right hand side of \eqref{eq:delta_prob_bound} is bounded by $\delta / |\Theta|$ when
\begin{align}
\gamma \geq \frac{2C}{3N}\cdot\log\left({|\Theta|}/{\delta}\right) + \sqrt{\frac{2}{N}\cdot\sum_{t}V_t\cdot\log\left({|\Theta|}/{\delta}\right)}.  
\end{align}
(To see this, set the right hand side of \eqref{eq:delta_prob_bound} to $\delta/|\Theta|$ to get a quadratic of the form $\gamma^2 = a \gamma + b$ with $a, b \ge 0$ and note that its non-negative root is $\leq a + \sqrt{b}.$ All $\gamma \geq a + \sqrt{b}$ will make the right hand side of \eqref{eq:delta_prob_bound} $\leq \delta/|\Theta|.$)

Now, by taking union bound, with probability at least $1 - \delta$, for all $\btheta \in \Theta$, we have the following.
\begin{align}
\RCEW(\btheta) - \RCENW(\btheta) \leq \frac{2C}{3N}\cdot\log\left({|\Theta|}/{\delta}\right) + \sqrt{\frac{2}{N}\cdot\sum_{t}V_t\cdot\log\left({|\Theta|}/{\delta}\right)}.
\end{align}
Since the minimizer of the KD training objective $\thetaest$ is in $\Theta$, with probablity at least $1 - \delta$, we have
\begin{align}
\label{eq:delta_prob_bound_erm}
&({\rm II}) = \RCEW(\thetaest ) - \RCENW(\thetaest) \leq 
\frac{2C}{3N}\cdot\log\left({|\Theta|}/{\delta}\right) + \sqrt{\frac{2}{N}\cdot\sum_{t}V_t\cdot\log\left({|\Theta|}/{\delta}\right)}.
\end{align}
Now, the statement of Theorem~\ref{thm:token-level-generalization-bound} follows by combining \eqref{eq:decomp_erm}, \eqref{eq:div_bound_erm}, and \eqref{eq:delta_prob_bound_erm}.
\end{proof}


\section{KD can improve generalization via variance reduction}
\label{appen:kd-std-comparison}

Here, we leverage our novel generalization bounds to provide a theoretical justification for why KD can result in better generalization behavior compared to standard pre-training. In particular, we will focus on our bound in Theorem~\ref{thm:token-level-generalization-bound}.\footnote{One could draw similar conclusion from Theorem~\ref{thm:esr} by extending the arguments from \citet{menon21statistical} to the language modeling setting.} Note that, besides $|\Theta|$, $N$, and $T$ which are independent of the underlying training approach, there are three key quantities that dictate the generalization gap: (1) $\sum_{t}V_t$ which is related to the loss variance; (2) $C$ which is related to the extreme values that loss can take; and (3) the divergence between the teacher-provided next-token predictive distribution and the ground truth next-token distribution:
$$\textsc{Div}(\zeta, \omega) := \omega \cdot \sum_{t=1}^{T} \mathbb{E}\left[\mathsf{D}_{\rm TV}\left(P_{\bzeta,\rho}(\cdot|\rvx_{\leq t-1}), \DCal(\cdot|\rvx_{\leq t-1})\right)\right].$$
Note that, under Assumption~\ref{assm:loss_bound}, both KD and standard pre-training loss terms are bounded by $M$, allowing us to provide the same $C$ (as a function of $M$ and $T$) for \textit{both} KD and standard pre-training. Thus, we focus on the remaining two terms which relate to $\sum_{t}V_t$ and $\textsc{Div}(\zeta, \omega)$.

Note that standard pre-training, i.e., training without KD, corresponds to $\omega = 0$, which leads to  $\textsc{Div}(\zeta, \omega = 0) = 0$. In contrast, with $\omega > 0$, KD would incur a non-zero value for $\textsc{Div}(\zeta, \omega)$. On the other hand, as we will argue next, KD can lead to smaller value of the variance term $\sum_{t}V_t$. Thus, as long as the underlying teacher LM approximates the true next-token distribution well enough, it can lead to improved (student) performance or equivalently smaller generalization gap by striking a balance between the divergence (or bias) $\textsc{Div}(\zeta, \omega)$ and variance $\sum_{t}V_t$; as a result, realizing a form of \textit{bias vs. variance} trade-off for LM pre-training.

The variance reduction in the case of KD is the cleanest to observe by focusing on the last summand in $\sum_{t}V_t$, i.e., $V_T$. Towards this, recall from Assumption~\ref{assm:xi} that, for each $\btheta \in \Theta$, $V_T$ bounds the second-order moment of $\xi_T(\rvx; \btheta)$. 
Define the short-hand
\begin{equation}
    {P}^{(x_{t}, \omega)}_{\bzeta, \rho}(\cdot|\rvx_{\leq t-1)} := (1-\omega) \cdot \mathbbm{1}_{x_t} (\cdot) + \omega \cdot P_{\bzeta, \rho} (\cdot|\rvx_{\leq t-1)}
\end{equation}
and write
\begin{align}
\label{eq:lm-stability-analysis}
&\xi_T(\rvx; \btheta) = \mathbb{E}\left[\ell^\omega(\rvx; \btheta) | \rvx_{\leq T - 1} \right] - \mathbb{E}\left[\ell^\omega(\rvx; \btheta) | \rvx_{\leq T} \right] \nonumber \\
&= \mathbb{E}\left[\frac{1}{T}\sum_{t = 1}^{T} \mathsf{CE}\big( {P}^{(x_{t}, \omega)}_{\zeta, \rho}(\cdot | \rvx_{\leq t -1}),  P_{\btheta}(\cdot | \rvx_{\leq t -1})\big) \vert  \rvx_{\leq T - 1}  \right] \; - \; \nonumber \\
& \qquad \mathbb{E}\left[\frac{1}{T}\sum_{t = 1}^{T} \mathsf{CE}\big( {P}^{(x_{t}, \omega)}_{\zeta, \rho}(\cdot | \rvx_{\leq t -1}),  P_{\btheta}(\cdot | \rvx_{\leq t -1})\big) \vert  \rvx_{\leq T}  \right] \nonumber \\
&\overset{(i)}{=} \frac{1}{T}\sum_{t = 1}^{T-1} \mathsf{CE}\big( {P}^{(x_{t}, \omega)}_{\zeta, \rho}(\cdot | \rvx_{\leq t -1}),  P_{\btheta}(\cdot | \rvx_{\leq t -1})\big)  + \mathbb{E}\left[\frac{1}{T} \mathsf{CE}\big( {P}^{(x_{T}, \omega)}_{\zeta, \rho}(\cdot | \rvx_{\leq T -1}),  P_{\btheta}(\cdot | \rvx_{\leq T -1})\big) \vert  \rvx_{\leq T - 1}  \right] \nonumber \\
& \qquad - \frac{1}{T}\sum_{t = 1}^{T} \mathsf{CE}\big( {P}^{(x_{t}, \omega)}_{\zeta, \rho}(\cdot | \rvx_{\leq t -1}),  P_{\btheta}(\cdot | \rvx_{\leq t -1})\big) \nonumber \\
&\overset{(ii)}{=} \mathbb{E}\left[\frac{1}{T} \mathsf{CE}\big( {P}^{(x_{T}, \omega)}_{\zeta, \rho}(\cdot | \rvx_{\leq T -1}),  P_{\btheta}(\cdot | \rvx_{\leq T -1})\big) \vert  \rvx_{\leq T - 1}  \right] - \frac{1}{T} \mathsf{CE}\big( {P}^{(x_{T}, \omega)}_{\zeta, \rho}(\cdot | \rvx_{\leq T -1}),  P_{\btheta}(\cdot | \rvx_{\leq t -1})\big) \nonumber \\
&= (1 - \omega)\cdot \left(  \mathbb{E}\left[-\frac{1}{T} \cdot \log P_{\btheta}(x_{T} | \rvx_{\leq t -1})  \vert  \rvx_{\leq T - 1}  \right] +  \frac{1}{T} \cdot \log P_{\btheta}(x_{T} | \rvx_{\leq T -1}) \right)
\end{align}
where $(i)$ follows we can remove expectation for those terms that are functions of those random variables that we condition on; and $(ii)$ follows by removing the terms that cancel each other; and the last line follows as we have
$$
{P}^{(x_{T}, \omega)}_{\zeta, \rho}(\cdot | \rvx_{\leq T -1}) = (1-\omega)\cdot\mathds{1}_{x_{T}}(\cdot) + \omega\cdot P_{\bzeta, \rho}(\cdot | \rvx_{\leq T -1}).
$$
It follows from \eqref{eq:lm-stability-analysis} that, for any $\btheta \in \Theta$, we have 
\begin{align}
\label{eq:lm-stability-analysis2}
\mathbb{E}\left[\xi^2_T(\rvx; \btheta, \bzeta)\right] = (1 - \omega) \cdot {\rm Var}\left[ \frac{1}{T} \cdot \log P_{\btheta}(x_{T} | \rvx_{\leq t -1})~ \Big |~\rvx_{\leq T - 1} \right],
\end{align}
where ${\rm Var}\left[ \cdot | \cdot \right]$ denotes conditional variance. Note that \eqref{eq:lm-stability-analysis2} shows that $V_T$  decreases with $\omega$ in $[0,1]$. This highlights that KD, i.e., $\omega > 0$, would realize a smaller variance than standard pre-training, i.e., $\omega = 0$. Thus, to realize improved generalization via KD, one needs to select the distillation weight $\omega$ so that the \textit{variance reduction} via KD offsets the divergence term $\textsc{Div}(\zeta, \omega)$. In particular, when the teacher LM approximates the ground truth next-token distribution very well, i.e., $\textsc{Div}(\zeta, \omega)$ term is small even for a relatively large value of $\omega$, the variance reduction via KD becomes prominent, ensuring significant improvement over standard pre-training in terms of generalization performance.

\section{Bounding excess risk for KD}
\label{appen:excess_risk}

Different from the surrogate (empirical or population) risks utilized in the main text (cf.~Section~\ref{sec:theory}),
which utilize the cross-entropy loss as a surrogate loss, one could directly work with the risk defined with respect to a particular evaluation metric (and the corresponding loss) that one cares about. Since our training focuses on correct next-token prediction, we can focus on the accuracy of the next-token prediction under greedy-decoding as one such metric. This amounts to the following 
(population) risk with respect to \textit{$0/1$-loss}.
\begin{align}
R_{0/1}(\btheta) 
&:= \mathbb{E}_{\rvx \sim \DCal}\Big[\sum_{t=1}^{T}  \mathsf{1}\{ \argmax_{v} P_{\btheta}(v | \rvx_{\leq t-1}) \neq x_{t} \Big] \nonumber \\
&= \sum_{t=1}^{T}  \mathbb{E}_{\rvx_{\leq t-1} \sim \DCal} \Big[ \sum_{v \in \VC} \DCal(v|\rvx_{\leq t-1})\cdot \mathsf{1}\{ \argmax_{v'} P_{\btheta}(v'| \rvx_{\leq t-1}) \neq v \}\Big].
\label{eq:acc_risk}
\end{align}

A large body of literature~\citep[see, e.g.,][and references therein]{bartlett2006convexity, zhang2004statistical, steinwart2007compare, pires2016multiclass} has studied \textit{calibration functions} that enable converting bounds on \textit{excess surrogate risk} to control the \textit{excess risk}. Applying the calibration functions for the cross-entropy loss~\citep{pires2016multiclass}, we obtain the following bound on the excess risk for next-token prediction:
\begin{align}
    R_{0/1}(\thetaest) - R_{0/1}(\btheta^*) \leq \mathsf{g}^{-1}\left( R(\thetaest) - R(\btheta^{*}) \right),
\end{align}
where $\mathsf{g}^{-1}(\cdot)$ denotes the inverse of the function $\mathsf{g}:\epsilon \mapsto \frac{1}{2}\big((1 - \epsilon)\log (1 - \epsilon) + (1 + \epsilon)\log (1 + \epsilon) \big).$

\section{Experimental Setup Details}
\label{sec:exp-setup-details}
\textbf{Model architectures and pre-training data.}
We work with standard decoder-only Transformer-based LMs. Our small model (\slm{}) has 1.5B parameters. It comprises a 44 layer Transformer network with model dimension 1024, MLP hidden dimension 8192, and 4 attention heads. 
For the larger LM, we employ 2.8B parameter models consisting of 92 layer Transformer networks with model dimension 1024, MLP hidden dimension 8192, and 4 attention heads based on multi-query attention~\citep{shazeer2019mqa}.
We use a SentencePiece tokenizer~\citep{kudo2018sentencepiece} from \cite{du2022glam} with a vocabulary size of 256$K$.

We pre-train all LMs on the Pile dataset~\citep{gao2020thepile} by minimizing the {UL2 objective}~\citep{tay2023ul2}
with a mixture of four tasks: (1) \textit{causal LM} task; (2) \textit{prefix LM} task with mean prefix length of $1/4$th the sequence length,
(3) \textit{span corruption task} with $r=15\%$ of the tokens corrupted and mean corrupted span length $\mu=3$; and
(4) \textit{span corruption task} with $r=50\%$ of the tokens corrupted and mean corrupted span length $\mu=32$. The four tasks are mixed at a ratio of $6$:$2$:$1$:$1$.
We pre-train LMs for approximately 545 billion tokens, with a batch size of 2048 and input sequence length of 1280. This translates to a little over two epochs on the Pile data. As for the optimization method, we utilize Adafactor algorithm \citep{shazeer2018adafactor}. 
We use a cosine learning rate decay schedule with a peak learning rate of $0.001$, $4000$ warmup steps and final learning rate of $0.0001.$ Training is done on 1024 TPU-v5e chips with JAX~\citep{jax2018github}
and SeqIO~\citep{roberts2022t5x}.

\section{Few-shot evaluation tasks}
\label{appen:few-shot-benchmarks}

We performed a comprehensive few-shot evaluation of pre-trained LMs on $28$ benchmarks. Below, we list these by organizing them according to the corresponding domain.

\textbf{World Knowledge}:
NQ-Open \citep{lee2019latent_nq_open},
TriviaQA \citep{joshi2017triviaqa}, 
TyDiQA-NoContext (English)\citep{clark2020tydiqa},
Web Questions \citep{berant2013semantic}.

\textbf{Reading Comprehension:}
RACE-M, RACE-H \citep{lai2017race},
SQuADv2 \citep{squad2},
TyDiQA-GoldP (English)\citep{clark2020tydiqa}.

\textbf{Commonsense Reasoning:}
ARC (Easy) and ARC (Challenge) \citep{allenai:arc},
HellaSwag \citep{zellers2019hellaswag},
OpenBookQA \citep{OpenBookQA2018},
PiQA \citep{PIQA2020},
StoryCloze \citep{storycloze2016},
Winogrande \citep{Winogrande2020}.

\textbf{SuperGLUE~\citep{superglue2019}:}
BoolQ \citep{boolq2019},
CB \citep{cb2019},
COPA \citep{gordon2012copa},
RTE \citep{rte2006},
WiC \citep{wic2018},
WSC \citep{winograd2012},
MultiRC \citep{multirc2018},
ReCoRD \citep{record2018}.

\textbf{Natural Language Generation (NLG):}
English portions of the three benchmarks -- XLSum~\citep{hasan2021xlsum}, XSum~\citep{narayan2018dont_xsum} and WikiLingua~\citep{ladhak2020wikilingua}.

\textbf{Open-ended Cloze task:}~LAMBADA~\cite{paperno2016lambada}.

\textbf{Code generation:} Mostly Basic Python Problems (MBPP) ~\citep{austin2021program}.

\section{Additional Few-shot evaluation results}
\label{app:more_experiments}

Table~\ref{tab:evals_grouped_full} is an expansion of Table~\ref{tab:evals_all_grouped} in the main text.
All evaluations are $1$-shot, except for MBPP which is $3$-shot.
In the metric column, \textit{EM}, \textit{Acc}, and \textit{Rg2} are abbreviations for \textit{Exact Match},  \textit{Accuracy}, and \textit{Rouge2}, respectively.
For MBPP, the metric is the fraction of success ignoring challenge problems. As mentioned in Section~\ref{sec:exp-setup}, for each benchmark, we typically report the corresponding prevalent metric in the literature. For TyDiQA benchmarks, we report the F1 score as opposed to \textit{EM} as it is the primary metric in \citet{clark2020tydiqa}. For MultiRC in SuperGLUE, we report F1 metric as per \citet{du2022glam}.

\begin{table}[ht]
  \caption{
  \textbf{Comprehensive few-shot performance of pre-trained LMs.}~\slm{} serves as the teacher LM for \salt{} \& \saltds{} during the KD phase of their pre-training and for \rkd{} throughout its pre-training. \baseline{} employs standard pre-training \textit{without} KD from \slm{}. \salt{} and \saltds{} already outperform \baseline in terms of average few-shot performance at $70$\% of their training step budget, thereby improving both training efficiency and model quality. \rkd{}, i.e., naively preforming KD from the small model through the pre-training, performs much worse than \baseline{}. The best and second-best results for each domain are \textbf{boldfaced} and \underline{underlined}, respectively.
  }
  \label{tab:evals_grouped_full}
  \centering
   \setlength{\tabcolsep}{1pt}
   \scalebox{0.79}{
   \renewcommand{\arraystretch}{1.2}
    \begin{tabular}{ccccccccccccccc}
    \toprule
    \textbf{Domain} & \textbf{Dataset} & \textbf{Metric} && \slm && \baseline && \rkd &&
    \multicolumn{2}{c}{\salt} &&
    \multicolumn{2}{c}{\saltds} \\
    \cmidrule{7-7} \cmidrule{9-9}  \cmidrule{11-12} \cmidrule{14-15}
    & &  &&  && \textit{@100\%} && \textit{@100\%} &&
    \textit{@70\%} & \textit{@100\%}&&
    \textit{@70\%}  & \textit{@100\%} \\
    & &  &&  &&  \textit{steps} && \textit{steps} && 
    \textit{steps} & \textit{steps} &&
    \textit{steps}  & \textit{steps} \\
    \midrule
\multirow{5}{*}{\rotatebox[origin=c]{90}{\parbox[c]{2cm}{\centering \bf World \\ Knowledge}}}
& NaturalQuestions-Open & \textit{EM} && 5.90 && 8.70 && 6.70 && \underline{9.40} & \textbf{10.10} && 8.40 & 9.00 \\
& TriviaQA & \textit{EM} && 30.09 && \underline{43.15} && 34.87 && 39.87 & \textbf{43.71} && 39.37 & 41.27 \\
& TyDiQA-NoContext & \textit{F1} && 22.20 && \textbf{28.20} && 26.10 && \underline{27.90} & 27.10 && 25.90 & 27.20 \\
& WebQuestions & \textit{EM} && 5.40 && 8.70 && 7.10 && 9.20 & \textbf{9.90} && 8.90 & \underline{9.40} \\
& \bf Domain average & && 15.90 && \underline{22.19} && 18.69 && 21.59 & \textbf{22.70} && 20.64 & 21.72 \\ [1.5mm]
\midrule
\multirow{5}{*}{\rotatebox[origin=c]{90}{\parbox[c]{2.5cm}{\centering \bf Reading \\ Comprehension}}} & RACE-M & \textit{Acc} && 52.60 && 57.00 && 54.00 && \underline{58.60} & \textbf{58.90} && 57.90 & 58.40 \\
& RACE-H & \textit{Acc} &&37.50 && \textbf{42.30} && 39.70 && 42.20 & \textbf{42.30} && 42.10 & \textbf{42.30} \\
& SQuADv2 & \textit{EM} && 43.30 && 54.80 && 50.90 && 54.60 & 55.90 && \underline{57.60} & \textbf{57.90} \\
& TyDiQA-GoldP & \textit{F1} && 51.80 && 57.90 && 59.40 && 58.80 & \textbf{61.10} && 59.80 & \textbf{61.10} \\
& \bf Domain average & && 46.30 && 53.00 && 51.00 && 53.55 & \underline{54.55} && 54.35 & \textbf{54.93} \\[1.5mm]
\midrule
\multirow{8}{*}{\rotatebox[origin=c]{90}{\parbox[c]{2.5cm}{\centering \bf Commonsense \\ Reasoning}}} & ARC-E & \textit{Acc} && 64.60 && 68.40 && 66.00 && 67.60 & 67.60 && \textbf{69.40} & \underline{69.00} \\
& ARC-C & \textit{Acc} && 32.40 && 37.10 && 33.70 && 38.00 & \textbf{38.40} && \underline{38.10} & 37.30 \\
& HellaSwag & \textit{Acc} && 56.00 && 62.80 && 56.20 && 62.00 & \underline{63.30} && 63.10 & \textbf{63.80} \\
& OpenBookQA & \textit{Acc} && 48.00 && \textbf{50.00} && 45.80 && 47.20 & \underline{48.20} && 47.60 & \underline{48.20} \\
& PiQA & \textit{Acc} && 72.00  && \textbf{75.40} && 72.60 && 73.20 & 73.70 && \underline{74.10} & 73.90 \\
& StoryCloze & \textit{Acc} && 73.10 && \textbf{77.20} && 73.70 && 76.90 & 76.80 && 77.00 & \underline{77.10} \\
& WinoGrande & \textit{Acc} && 58.20 && 63.00 && 60.10 && 64.00 & 63.70 && \underline{64.70} & \textbf{65.40} \\
& \bf Domain average & && 57.76 && 61.99 && 58.30 && 61.27 & 61.67 && \underline{62.00} & \textbf{62.10} \\
\midrule
& LAMBADA & \textit{Acc} && 26.90 && 36.20 && 31.10 && \underline{50.70} & 48.30 && 48.00 & \textbf{53.00} \\ [1.5mm]
\midrule
\multirow{9}{*}{\rotatebox[origin=c]{90}{\parbox[c]{2.5cm}{\centering \bf SuperGLUE}}}& BoolQ & \textit{Acc} && 63.40 && \underline{64.30} && 62.50 && 64.10 & 62.30 && \textbf{65.50} & \underline{64.30} \\
& CB & \textit{Acc} && 37.50 && \underline{58.90} && 50.00 && \textbf{60.70} & 53.60 && 55.40 & 53.60 \\
& COPA & \textit{Acc} && 77.00 && \underline{79.00} && 71.00 && 76.00 & 77.00 && \textbf{81.00} & 77.00 \\
& MultiRC & \textit{F1} && 53.80 && 54.20 && 53.50 && \underline{57.50} & \textbf{58.60} && 50.70 & 53.00 \\
& RTE & \textit{Acc} && 55.20 && 55.60 && \bf 59.90 && 57.80 & \underline{58.50} && 54.20 & \underline{58.50} \\
& ReCoRD & \textit{Acc} && 84.80 && 87.10 && 85.20 && 86.60 & 86.90 && \underline{87.20} & \textbf{87.30} \\
& WiC & \textit{Acc} && 48.40 && 47.20 && 47.20 && 49.80 & 48.10 && \underline{50.00} & \textbf{50.90} \\
& WSC & \textit{Acc} && 72.60 && 77.90 && 74.00 && 77.90 & 77.20 && \textbf{83.90} & \underline{80.00} \\
& \bf Domain average & && 61.59 && 65.53 && 62.91 && \textbf{66.30} & 65.28 && \underline{65.99} & 65.58 \\ [1.5mm]
\midrule
\multirow{4}{*}{\rotatebox[origin=c]{90}{\parbox[c]{1cm}{\centering \bf NLG}}} & G\textit{EM}-XLSum & \textit{Rg2} && 2.80 && 4.10 && 3.40 && 4.40 & 4.40 && \textbf{4.60} & \textbf{4.60} \\
& G\textit{EM}-XSum & \textit{Rg2} && 2.80 && 5.10 && 3.20 && 5.00 & 5.10 && \textbf{5.40} & \textbf{5.40} \\
& WikiLingua & \textit{Rg2} && 3.80 && \underline{4.60} && 3.60 && 4.50 & \textbf{4.70} && 4.40 & 4.50 \\
& \bf Domain average & && 3.13 && 4.60 && 3.40 && 4.63 & 4.73 && \underline{4.80} & \textbf{4.83} \\[1.5mm]
\midrule
& MBPP & \textit{Acc} && 9.60 && 16.20 && 11.40 && 15.60 & \underline{17.00} && 16.60 & \textbf{17.80} \\[1.5mm]
\midrule
& \bf  Average (28 tasks) & && 42.56 && 47.32 && 44.39 && 47.86 & \underline{47.94} && 47.89 & \textbf{48.26} \\
\bottomrule
\end{tabular}}
\end{table}

\clearpage

\section{Ablation study of various design choices in \salt{}}
\label{sec:exp-ablation}

In this section, we explore 
how various design choices pertaining \salt{} affect its final performance.

\noindent\textbf{Distillation from a better quality small model.}~So far we assumed that \slm{} is also pre-trained for the same number of tokens as the LLM. Since training for \slm{} is relatively cheaper, one could consider a scenario where one invests more compute resources in improving the small model if it can eventually be beneficial in improving the LLM quality via \salt{}. Towards this, we employ a small LM that is trained $\sim 2.5$ times longer -- $498$K steps vs. $208$K steps in Section~\ref{sec:exp-results}.\footnote{This approach aligns with the recent studies~\citep{touvron2023_llama,Gadre:2024} that train small LMs well beyond the their optimal compute budget as predicted by neural scaling laws~\citep{Hoffmann:2022}.} As evident in 
Table~\ref{tab:better_teacher_full},
\salt{} is indeed able to utilize the better small model as a teacher in the KD phase to further improve the LLM quality, as measured by the average few-shot performance.

\begin{table}
  \caption{\textbf{Effect of improved}~\slm{}\textbf{(comprehensive few-shot evaluation).}~\salt{} with a better teacher -- a \slm{} trained for 498K steps as opposed to 208K steps -- yields LLM with better average few-shot performance. For each benchmark, the best and second best results are \textbf{boldfaced} and \underline{underlined}, respectively.}
  \label{tab:better_teacher_full}
  \centering
   \setlength{\tabcolsep}{1pt}
   \scalebox{0.86}{
   \renewcommand{\arraystretch}{1.2}
    \begin{tabular}{ccccccc}
    \toprule
\textbf{Domain} & \textbf{Dataset} & \textbf{Metric} & \parbox[c]{2cm}{\centering \slm{} trained for \colorbox{yellow!20}{208K steps}} & \parbox[c]{2cm}{\centering \slm{} trained for \colorbox{cyan!20}{498K steps}} & \parbox[c]{2.8cm}{\centering \salt{} w/ KD from \slm{} trained for \colorbox{yellow!20}{208K steps}}& \parbox[c]{2.8cm}{\centering \salt{} w/ KD from \slm{} trained for \colorbox{cyan!20}{498K steps}}\\
\midrule
\multirow{5}{*}{\rotatebox[origin=c]{90}{\parbox[c]{2cm}{\centering \bf World \\ Knowledge}}}
  & NaturalQuestions-Open & \textit{EM} & 5.90 & 6.30 & \textbf{10.10} & \underline{9.00}\\
  & TriviaQA & \textit{EM} & 30.09 & 31.74 & \textbf{43.71} & \underline{41.61}\\
  & TyDiQA-NoContext & \textit{F1} & 22.20 & 23.80 & \textbf{27.10} & \underline{26.20}\\
  & WebQuestions & \textit{EM} & 5.40 & 7.60 & \textbf{9.90} & \underline{9.10}\\
  & \textbf{Domain average} &   & 15.90 & 17.36 & \textbf{22.70} & \underline{21.48}\\
\midrule
\multirow{5}{*}{\rotatebox[origin=c]{90}{\parbox[c]{2.5cm}{\centering \bf Reading \\ Comprehension}}}
  & RACE-M & \textit{Acc} & 52.60 & 54.40 & \textbf{58.90} & \underline{57.00}\\
  & RACE-H & \textit{Acc} & 37.50 & 39.40 & \textbf{42.30} & \underline{42.00}\\
  & SQuADv2 & \textit{EM} & 43.30 & 49.00 & \underline{55.90} & \textbf{57.90}\\
  & TyDiQA-GoldP & \textit{F1} & 51.80 & 55.90 & \textbf{61.10} & \underline{56.80}\\
  & \textbf{Domain average} &   & 46.30 & 49.67 & \textbf{54.55} & \underline{53.43}\\
\midrule
\multirow{8}{*}{\rotatebox[origin=c]{90}{\parbox[c]{2.5cm}{\centering \bf Commonsense \\ Reasoning}}}
  & ARC-E & \textit{Acc} & 64.60 & 65.50 & \underline{67.60} & \textbf{69.30}\\
  & ARC-C & \textit{Acc} & 32.40 & 34.30 & \underline{38.40} & \textbf{39.10}\\
  & HellaSwag & \textit{Acc} & 56.00 & 57.80 & \textbf{63.30} & \underline{63.20}\\
  & OpenBookQA & \textit{Acc} & 48.00 & 46.40 & \underline{48.20} & \textbf{49.00}\\
  & PiQA & \textit{Acc} & 72.00 & 72.90 & \underline{73.70} & \textbf{74.60}\\
  & StoryCloze & \textit{Acc} & 73.10 & 75.00 & \underline{76.80} & \textbf{76.90}\\
  & WinoGrande & \textit{Acc} & 58.20 & 59.40 & \underline{63.70} & \textbf{63.80}\\
  & \textbf{Domain average} &   & 57.76 & 58.76 & \underline{61.67} & \textbf{62.27}\\
\midrule
  & LAMBADA & \textit{Acc} & 26.90 & 37.80 & \textbf{48.30} & \underline{47.80}\\
\midrule
\multirow{9}{*}{\rotatebox[origin=c]{90}{\parbox[c]{2.5cm}{\centering \bf SuperGLUE}}}
  & BoolQ & \textit{Acc} & \underline{63.40} & 61.40 & 62.30 & \textbf{65.80}\\
  & CB & \textit{Acc} & 37.50 & 42.90 & \underline{53.60} & \textbf{73.20}\\
  & COPA & \textit{Acc} & 77.00 & \underline{78.00} & 77.00 & \textbf{79.00}\\
  & MultiRC & \textit{F1} & \underline{53.80} & 48.40 & \textbf{58.60} & 53.20\\
  & RTE & \textit{Acc} & 55.20 & 52.30 & \underline{58.50} & \textbf{61.70}\\
  & ReCoRD & \textit{Acc} & 84.80 & 85.50 & \underline{86.90} & \textbf{87.10}\\
  & WIC & \textit{Acc} & \underline{48.40} & 47.30 & 48.10 & \textbf{49.20}\\
  & WSC & \textit{Acc} & 72.60 & 72.30 & \underline{77.20} & \textbf{79.30}\\
  & \textbf{Domain average} &   & 61.59 & 61.01 & \underline{65.28} & \textbf{68.56}\\
\midrule
\multirow{4}{*}{\rotatebox[origin=c]{90}{\parbox[c]{1cm}{\centering \bf NLG}}}
  & \textit{GEM}-XLSum & \textit{Rg2} & 2.80 & 3.50 & \textbf{4.40} & \underline{4.30}\\
  & \textit{GEM}-XSum & \textit{Rg2} & 2.80 & 3.10 & \underline{5.10} & \textbf{5.60}\\
  & WikiLingua & \textit{Rg2} & 3.80 & 3.80 & \textbf{4.70} & \underline{4.40}\\
  & \textbf{Domain average} &   & 3.13 & 3.47 & \underline{4.73} & \textbf{4.77}\\
\midrule
  & MBPP & \textit{Acc} & 9.60 & 12.80 & \underline{17.00} & \textbf{17.40}\\
\midrule
  & \textbf{Average (28 tasks)} &   & 42.56 & 43.88 & \underline{47.94} & \textbf{48.70}\\
\bottomrule
\end{tabular}}
\end{table}

\noindent\textbf{Varying transition point.}~A key design choice for \salt{} is the selection of the transition point $n_{\rm KD}$ from KD phase (first stage) to standard training (second stage). Table~\ref{tab:loss_switching_point_ablation_full} shows 
few-shot performance of \salt{} as we vary the transition point. Note that \salt{} ensures quality gains for LLM with a wide range of values for $n_{\rm KD}$ while demonstrating an inverted U-shape for LLM quality. We see consistent performance improvement from $n_{\rm KD} = 0$ (equivalent to \baseline{}) to $n_{\rm KD} = 60$K which eventually degrades at $n_{\rm KD}=208$K (equivalent to \rkd{}). Given the training overhead of KD phase (see discussion in Section~\ref{sec:exp-results}), smaller value of $n_{\rm KD}$ helps ensure training efficiency gains via \salt{}. Thus, we worked with $n_{\rm KD}=36$K in Section~\ref{sec:exp-results} as $n_{\rm KD}=60$K only provides marginal quality gains if one takes into account the increased training cost due to longer KD phase.

\begin{table}[t]
   \caption{\textbf{Effect of varying transitions step (comprehensive few-shot evaluation).}~The performance improvement via \salt{} over \baseline{} is stable in a wide range of $n_{\rm KD}$ (20k to 60k steps). Eventually, with much larger $n_{\rm KD}$, \salt{} performance degrades significantly (208k steps). For each benchmark, the best and second best results are \textbf{boldfaced} and \underline{underlined}, respectively.}
  \label{tab:loss_switching_point_ablation_full}
  \centering
   \setlength{\tabcolsep}{2pt}
   \scalebox{0.79}{
   \renewcommand{\arraystretch}{1.25}
    \begin{tabular}{ccccccccc}
    \toprule
    \textbf{Domain} & \textbf{Dataset} & \textbf{Metric} & \slm & \baseline &   \parbox[c]{1.8cm}{\centering \salt{} w/ $n_{\rm KD} = 20$K} &   \parbox[c]{1.8cm}{\centering \salt{} w/ $n_{\rm KD} = 36$K} &   \parbox[c]{1.8cm}{\centering \salt{} w/ $n_{\rm KD} = 60$K} &  \parbox[c]{2.0cm}{\centering \salt{} w/ $n_{\rm KD} = 208$K (\rkd)} \\
    \midrule
\multirow{5}{*}{\rotatebox[origin=c]{90}{\parbox[c]{2cm}{\centering \bf World \\ Knowledge}}}
  & NaturalQuestions-Open & \textit{EM} & 5.90 & 8.70 & 8.90 & \textbf{10.10} & \underline{9.30} & 6.70\\
  & TriviaQA & \textit{EM} & 30.09 & \underline{43.15} & 41.52 & \textbf{43.71} & 42.84 & 34.87\\
  & TyDiQA-NoContext & \textit{F1} & 22.20 & \textbf{28.20} & 26.40 & \underline{27.10} & 26.60 & 26.10\\
  & WebQuestions & \textit{EM} & 5.40 & \underline{8.70} & 8.20 & \textbf{9.90} & 8.60 & 7.10\\
  & \textbf{Domain average} &   & 15.90 & \underline{22.19} & 21.26 & \textbf{22.70} & 21.83 & 18.69\\
\midrule
\multirow{5}{*}{\rotatebox[origin=c]{90}{\parbox[c]{2.5cm}{\centering \bf Reading \\ Comprehension}}}
  & RACE-M & \textit{Acc} & 52.60 & 57.00 & \underline{58.70} & \textbf{58.90} & 58.60 & 54.00\\
  & RACE-H & \textit{Acc} & 37.50 & \textbf{42.30} & 41.00 & \textbf{42.30} & 42.10 & 39.70\\
  & SQuADv2 & \textit{EM} & 43.30 & 54.80 & 55.30 & \textbf{55.90} & \underline{55.50} & 50.90\\
  & TyDiQA-GoldP & \textit{F1} & 51.80 & 57.90 & 56.50 & \textbf{61.10} & 59.30 & \underline{59.40}\\
  & \textbf{Domain average} &   & 46.30 & 53.00 & 52.88 & \textbf{54.55} & \underline{53.88} & 51.00\\
\midrule
\multirow{8}{*}{\rotatebox[origin=c]{90}{\parbox[c]{2.5cm}{\centering \bf Commonsense \\ Reasoning}}}
  & ARC-E & \textit{Acc} & 64.60 & \textbf{68.40} & 67.80 & 67.60 & \textbf{68.40} & 66.00\\
  & ARC-C & \textit{Acc} & 32.40 & 37.10 & 38.10 & \underline{38.40} & \textbf{38.70} & 33.70\\
  & HellaSwag & \textit{Acc} & 56.00 & 62.80 & 62.80 & \textbf{63.30} & \underline{62.90} & 56.20\\
  & OpenBookQA & \textit{Acc} & 48.00 & \textbf{50.00} & 48.00 & \underline{48.20} & \underline{48.20} & 45.80\\
  & PiQA & \textit{Acc} & 72.00 & \textbf{75.40} & \textbf{75.40} & 73.70 & 74.40 & 72.60\\
  & StoryCloze & \textit{Acc} & 73.10 & \textbf{77.20} & \underline{76.90} & 76.80 & 76.50 & 73.70\\
  & WinoGrande & \textit{Acc} & 58.20 & 63.00 & \underline{63.40} & \textbf{63.70} & 62.00 & 60.10\\
  & \textbf{Domain average} &   & 57.76 & \textbf{61.99} & \underline{61.77} & 61.67 & 61.59 & 58.30\\
\midrule
  & LAMBADA & \textit{Acc} & 26.90 & 36.20 & 44.70 & \underline{48.30} & \textbf{53.30} & 31.10\\
\midrule
\multirow{9}{*}{\rotatebox[origin=c]{90}{\parbox[c]{2.5cm}{\centering \bf SuperGLUE}}}
  & BoolQ & \textit{Acc} & 63.40 & \textbf{64.30} & \underline{63.90} & 62.30 & 63.80 & 62.50\\
  & CB & \textit{Acc} & 37.50 & \underline{58.90} & \textbf{60.70} & 53.60 & 55.40 & 50.00\\
  & COPA & \textit{Acc} & \underline{77.00} & \textbf{79.00} & 76.00 & \underline{77.00} & \underline{77.00} & 71.00\\
  & MultiRC & \textit{F1} & 53.80 & 54.20 & 53.80 & \textbf{58.60} & \underline{55.20} & 53.50\\
  & RTE & \textit{Acc} & 55.20 & 55.60 & 52.30 & 58.50 & \textbf{59.90} & \textbf{59.90}\\
  & ReCoRD & \textit{Acc} & 84.80 & \textbf{87.10} & \underline{86.90} & \underline{86.90} & 86.70 & 85.20\\
  & WIC & \textit{Acc} & 48.40 & 47.20 & \textbf{51.30} & 48.10 & \underline{50.00} & 47.20\\
  & WSC & \textit{Acc} & 72.60 & \textbf{77.90} & 77.50 & 77.20 & \textbf{77.90} & 74.00\\
  & \textbf{Domain average} &   & 61.59 & \underline{65.53} & 65.30 & 65.28 & \textbf{65.74} & 62.91\\
\midrule
\multirow{4}{*}{\rotatebox[origin=c]{90}{\parbox[c]{1cm}{\centering \bf NLG}}}
  & \textit{GEM}-XLSum & \textit{Rg2} & 2.80 & 4.10 & \underline{4.50} & 4.40 & \textbf{4.70} & 3.40\\
  & \textit{GEM}-XSum & \textit{Rg2} & 2.80 & \underline{5.10} & \textbf{5.80} & \underline{5.10} & 4.80 & 3.20\\
  & WikiLingua & \textit{Rg2} & 3.80 & \underline{4.60} & 4.30 & \textbf{4.70} & \underline{4.60} & 3.60\\
  & \textbf{Domain average} &   & 3.13 & 4.60 & \textbf{4.87} & \underline{4.73} & 4.70 & 3.40\\
\midrule
  & MBPP & \textit{Acc} & 9.60 & 16.20 & \underline{16.60} & \textbf{17.00} & 16.40 & 11.40\\
\midrule
  & \textbf{Average (28 tasks)} &   & 42.56 & 47.32 & 47.40 & \underline{47.94} & \textbf{47.99} & 44.39 \\
\bottomrule
\end{tabular}}
\end{table}

\noindent\textbf{Different transition strategies.}~In our study thus far, we have worked with \textit{Step} transition between the two training stages in \salt{} where we abruptly stop performing KD after $n_{\rm KD}$ training steps. Looking at Figure~\ref{fig:training-curves}, this causes an abrupt change in the model behavior during training, as observed in the next-token prediction accuracy curve for the training set (see similar behavior for log-perplexity in Figure~\ref{fig:log-perplexity}). This raises a question if a smoother transition between the two stages can improve the training stability and thereby ensure higher final LLM quality. While there is a large space of potential choices of such smooth transition strategies, here we explore two natural candidates: (1) \textit{Linear decay} where we linearly decrease the distillation loss weight to $0$ between $n_{{\rm KD},1}=32$K and $n_{\rm KD}=36$K steps; and (2) \textit{Linear ratio decay} where we linearly decrease the ratio of distillation loss weight and standard loss weight $\frac{\omega}{1 - \omega}$ to $0$ between $n_{{\rm KD},1}=32$K and $n_{{\rm KD},2}=36$K training steps. As recorded in Table~\ref{tab:loss_switching_method_ablation_full}, the step transition constitutes a reasonable design choice for \salt{} as it outperforms both the considered  alternatives in terms of average few-shot performance of the resulting pre-trained LLM.

\begin{table}
  \caption{\textbf{Effect of different transition strategies (comprehensive few-shot evaluation.}~The Step transition used in this work (cf.~Algorithm~\ref{alg:stl-two-stage}) performs well compared to two natural alternative strategies. For each benchmark, the best and second best results are \textbf{boldfaced} and \underline{underlined}, respectively.}
  \label{tab:loss_switching_method_ablation_full}
  \centering
   \setlength{\tabcolsep}{2pt}
   \scalebox{0.86}{
   \renewcommand{\arraystretch}{1.2}
    \begin{tabular}{cccccccc}
    \toprule
\textbf{Domain} & \textbf{Dataset} & \textbf{Metric} & \slm{} & \baseline{} & \salt{} w/ & \salt{} w/ & \salt{} w/\\
    & & & & & 
    Step & Linear decay & Linear ratio decay\\
\midrule
\multirow{5}{*}{\rotatebox[origin=c]{90}{\parbox[c]{2cm}{\centering \bf World \\ Knowledge}}}
  & NaturalQuestions-Open & \textit{EM} & 5.90 & \underline{8.70} & \textbf{10.10} & 8.20 & 8.10\\
  & TriviaQA & \textit{EM} & 30.09 & 43.15 & \textbf{43.71} & 43.46 & \underline{43.51}\\
  & TyDiQA-NoContext & \textit{F1} & 22.20 & \underline{28.20} & 27.10 & \textbf{28.40} & 27.20\\
  & WebQuestions & \textit{EM} & 5.40 & \underline{8.70} & \textbf{9.90} & 8.20 & 8.40\\
  & \textbf{Domain average} &   & 15.90 & \underline{22.19} & \textbf{22.70} & 22.07 & 21.80\\
\midrule
\multirow{5}{*}{\rotatebox[origin=c]{90}{\parbox[c]{2.2cm}{\centering \bf Reading \\ Comprehension}}}
  & RACE-M & \textit{Acc} & 52.60 & 57.00 & \textbf{58.90} & \underline{57.90} & 57.40\\
  & RACE-H & \textit{Acc} & 37.50 & \underline{42.30} & \underline{42.30} & 42.10 & \textbf{43.50}\\
  & SQuADv2 & \textit{EM} & 43.30 & 54.80 & 55.90 & \underline{56.40} & \textbf{57.10}\\
  & TyDiQA-GoldP & \textit{F1} & 51.80 & 57.90 & \textbf{61.10} & \underline{58.30} & 57.80\\
  & \textbf{Domain average} &   & 46.30 & 53.00 & \textbf{54.55} & 53.68 & \underline{53.95}\\
\midrule
\multirow{8}{*}{\rotatebox[origin=c]{90}{\parbox[c]{2.5cm}{\centering \bf Commonsense \\ Reasoning}}}
  & ARC-E & \textit{Acc} & 64.60 & 68.40 & 67.60 & \underline{68.60} & \textbf{68.70}\\
  & ARC-C & \textit{Acc} & 32.40 & 37.10 & 38.40 & \underline{38.60} & \textbf{39.80}\\
  & HellaSwag & \textit{Acc} & 56.00 & 62.80 & \underline{63.30} & \underline{63.30} & \textbf{63.50}\\
  & OpenBookQA & \textit{Acc} & 48.00 & \textbf{50.00} & \underline{48.20} & 48.00 & 47.40\\
  & PiQA & \textit{Acc} & 72.00 & \textbf{75.40} & 73.70 & \underline{74.60} & 73.90\\
  & StoryCloze & \textit{Acc} & 73.10 & \textbf{77.20} & \underline{76.80} & 76.60 & 76.50\\
  & WinoGrande & \textit{Acc} & 58.20 & 63.00 & \textbf{63.70} & 62.70 & \underline{63.10}\\
  & \textbf{Domain average} &   & 57.76 & \textbf{61.99} & 61.67 & 61.77 & \underline{61.84}\\
\midrule
  & LAMBADA & \textit{Acc} & 26.90 & 36.20 & \textbf{48.30} & 40.50 & \underline{42.60}\\
\midrule
\multirow{9}{*}{\rotatebox[origin=c]{90}{\parbox[c]{2.5cm}{\centering \bf SuperGLUE}}}
  & BoolQ & \textit{Acc} & 63.40 & 64.30 & 62.30 & \textbf{67.90} & \underline{66.50}\\
  & CB & \textit{Acc} & 37.50 & \textbf{58.90} & \underline{53.60} & 44.60 & 46.40\\
  & COPA & \textit{Acc} & 77.00 & \underline{79.00} & 77.00 & \underline{79.00} & \textbf{81.00}\\
  & MultiRC & \textit{F1} & 53.80 & 54.20 & \underline{58.60} & 53.90 & \textbf{61.60}\\
  & RTE & \textit{Acc} & 55.20 & 55.60 & \textbf{58.50} & \underline{56.30} & 55.20\\
  & ReCoRD & \textit{Acc} & 84.80 & \underline{87.10} & 86.90 & 87.00 & \textbf{87.30}\\
  & WIC & \textit{Acc} & \underline{48.40} & 47.20 & 48.10 & 46.60 & \textbf{50.50}\\
  & WSC & \textit{Acc} & 72.60 & 77.90 & 77.20 & \underline{78.60} & \textbf{78.90}\\
  & \textbf{Domain average} &   & 61.59 & \underline{65.53} & 65.28 & 64.24 & \textbf{65.92}\\
\midrule
\multirow{4}{*}{\rotatebox[origin=c]{90}{\parbox[c]{1cm}{\centering \bf NLG}}}
  & \textit{GEM}-XLSum & \textit{Rg2} & 2.80 & 4.10 & 4.40 & \textbf{4.70} & \underline{4.50}\\
  & \textit{GEM}-XSum & \textit{Rg2} & 2.80 & \textbf{5.10} & \textbf{5.10} & 4.60 & \textbf{5.10}\\
  & WikiLingua & \textit{Rg2} & 3.80 & 4.60 & \underline{4.70} & \textbf{4.80} & 4.60\\
  & \textbf{Domain average} &   & 3.13 & 4.60 & \textbf{4.73} & 4.70 & \textbf{4.73}\\
\midrule
  & MBPP & \textit{Acc} & 9.60 & 16.20 & \textbf{17.00} & 15.20 & \textbf{17.00}\\
\midrule
  & \textbf{Average (28 tasks)} &   & 42.56 & 47.32 & \textbf{47.94} & 47.11 & \underline{47.75}\\
\bottomrule
\end{tabular}}
\end{table}

\clearpage

\section{Log perplexity of the models}
Figure~\ref{fig:log-perplexity} shows the log perplexity of the \salt{} and \rkd{} pre-trained models along with \baseline{} and \slm{}. The log perplexity for \rkd{} stays at a higher level than even \slm{}. Recall that \rkd{} optimizes a sum of two losses -- KD loss with weight $\omega=0.667$ and the standard one-hot training loss with weight $1-\omega$. As the training log perplexity plotted in Figure~\ref{fig:log-perplexity} is the same as the standard hot training loss, the methods which directly optimize for that alone (\baseline{}, \slm{} and in the second stage, \salt{}) have lower log perplexity on training set than \rkd{} which optimizes additionally for distillation loss.
\begin{figure}[htbp]
\centering
\includegraphics[width=0.4\linewidth]{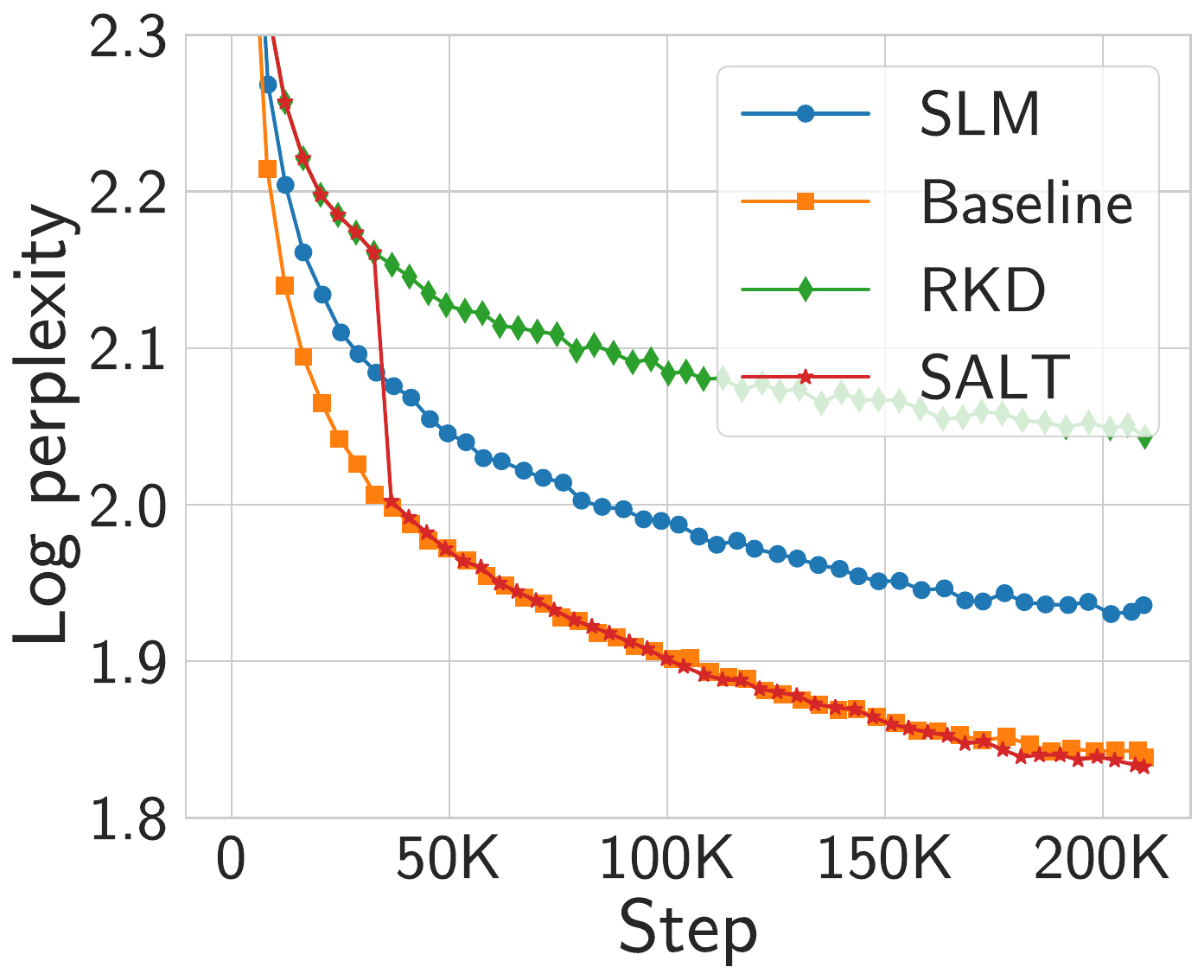}
\caption{Log perplexity for different models during their pre-training, as measured on a subset of the Pile training set.}
\label{fig:log-perplexity}
\end{figure}

\clearpage
\section{Additional results: Learning easy vs. hard instance via \salt{}}
\label{appen:difficulty_slice}

\noindent\textbf{Creation of different hardness buckets.}
For each evaluation benchmark, we first assign a relative rank to each test instance/example in the benchmark, representing its degree of difficulty. 
A test example with the lowest rank (easiest) is the one on which the small teacher LM achieves the largest task evaluation score, e.g., Rouge-2 metric for the XLSum task.
Similarly, subsequent test examples are assigned ranks in descending order of the task evaluation score achieved by the small teacher LM.
If two examples have the same evaluation score, the one with higher confidence score from the teacher (on its generated output) is deemed to have a lower rank.
Each test example is assigned to one of the three buckets: `easy`, `medium`, or `hard`, according to whether its difficulty rank is in the first, second, or third tertile, respectively.

In Tables \ref{tab:example_difficulty_squadv2}, \ref{tab:example_difficulty_triviaqa} and \ref{tab:example_difficulty-lambada}, we report the results for SQuAD-v2, TriviaQA and LAMBADA respectively, sliced by difficulty level.

\begin{table}[h]
  \caption{\textbf{Few-shot evaluation on different buckets of SQuAD-v2.}~Each number shows average Exact Match scores on the corresponding bucket. We use \hlg{gray}, \hla{green}, and \hlr{red} to highlight the results similar to, better than, and worse than \baseline{} performance, respectively.
  }
  \label{tab:example_difficulty_squadv2}
  \centering
\setlength{\tabcolsep}{2pt}
\scalebox{0.85}{
\renewcommand{\arraystretch}{0.9}
\begin{tabular}[t]{@{}lp{4cm}ccc@{}} 
\toprule  
&  \centering \bf Evaluation stage (steps) & \bf Easy & \bf Medium & \bf Hard \\
\midrule 
\slm & \centering Final (208K) & {1.00} & 0.30 & 0.00 \\ 
\midrule
\baseline{} & \centering \multirow{3}{*}{Early (36K)} & \hlg{0.86} & \hlg{0.41} & \hlg{0.23} \\ 
\rkd{} & & \hlg{0.86} & \hlr{0.37} & \hlr{0.17} \\
\salt{} & & \hlg{0.86} & \hlr{0.37} & \hlr{0.17}\\
\midrule
\baseline{} & \centering \multirow{3}{*}{Final (208K)} & \hlg{0.89} & \hlg{0.47} & \hlg{0.28} \\ 
\rkd{} & & \hla{0.91} & \hlr{0.42} & \hlr{0.20} \\ 
\salt{} & & \hlg{0.89} & \hla{0.50} & \hla{0.29} \\ 
\midrule 
\end{tabular}}
\end{table}

\begin{table}[h]
  \caption{\textbf{Few-shot evaluation on different buckets of TriviaQA.}~Each number shows average Exact Match scores on the corresponding bucket. We use \hlg{gray}, \hla{green}, and \hlr{red} to highlight the results similar to, better than, and worse than \baseline{} performance, respectively. 
  }
  \label{tab:example_difficulty_triviaqa}
  \centering
\setlength{\tabcolsep}{2pt}
\scalebox{0.85}{
\renewcommand{\arraystretch}{0.9}
\begin{tabular}[t]{@{}lp{4cm}ccc@{}} 
\toprule  
&  \centering \bf Evaluation stage (steps) & \bf Easy & \bf Medium & \bf Hard \\
\midrule 
\slm & \centering Final (208K) & {0.90} & 0.00 & 0.00 \\ 
\midrule
\baseline{} & \centering \multirow{3}{*}{Early (36K)} & \hlg{0.63} & \hlg{0.11} & \hlg{0.08} \\ 
\rkd{} & & \hla{0.67} & \hlrr{0.10} & \hlrr{0.06} \\
\salt{} & & \hla{0.67} & \hlrr{0.10} & \hlrr{0.06} \\
\midrule
\baseline{} & \centering \multirow{3}{*}{Final (208K)} & \hlg{0.80} & \hlg{0.28} & \hlg{0.22} \\ 
\rkd{} & & \hlrr{0.79} & \hlr{0.14} & \hlr{0.11} \\ 
\salt{} & & \hla{0.81} & \hlrr{0.27} & \hla{0.23} \\ 
\midrule 
\end{tabular}}

\end{table}

\begin{table}[htbp]
  \caption{\textbf{Few-shot evaluation on different buckets of LAMBADA.}~Each number shows average Accuracy on the corresponding bucket. We use \hlg{gray}, \hla{green}, and \hlr{red} to highlight the results similar to, better than, and worse than \baseline{} performance, respectively.  
  }
  \label{tab:example_difficulty-lambada}
  \centering
\setlength{\tabcolsep}{2pt}
\scalebox{0.85}{
\renewcommand{\arraystretch}{0.9}
\begin{tabular}[t]{@{}lp{4cm}ccc@{}} 
\toprule  
&  \centering \bf Evaluation stage (steps) & \bf Easy & \bf Medium & \bf Hard \\
\midrule 
\slm & \centering Final (208K) & {0.87} & 0.00 & 0.00 \\ 
\midrule
\baseline & \centering \multirow{3}{*}{Early (36K)} & \hlg{0.47} & \hlg{0.12} & \hlg{0.12} \\ 
\rkd & & \hla{0.56} & \hlrr{0.11} & \hlg{0.12} \\
\salt & & \hla{0.56} & \hlrr{0.11} & \hlg{0.12} \\
\midrule
\baseline & \centering \multirow{3}{*}{Final (208K)} & \hlg{0.70} & \hlg{0.29} & \hlg{0.28} \\ 
\rkd & &  \hlr{0.65} & \hlr{0.17} & \hlr{0.17} \\ 
\salt & & \hla{0.78} & \hla{0.38} & \hla{0.36} \\ 
\midrule 
\end{tabular}}
\end{table}

\end{document}